\begin{document}

\title{Distal Interference: Exploring the Limits of Model-Based Continual Learning}


\author{\name Heinrich van Deventer \email heinrich.vandeventer@outlook.com \\
       \addr Department of Computer Science\\
       University of Pretoria\\
       Pretoria, Gauteng, South Africa
       \AND
       \name Anna S. Bosman \email anna.bosman@up.ac.za \\
       \addr Department of Computer Science\\
       University of Pretoria\\
       Pretoria, Gauteng, South Africa }

\editor{My editor}

\maketitle

\begin{abstract}
Continual learning is the sequential learning of different tasks by a machine learning model. Continual learning is known to be hindered by catastrophic interference or forgetting, which is the rapid unlearning of previously learned tasks when new tasks are learned. Artificial neural networks (ANNs) are prone to catastrophic interference despite their practical success. This study analyses how gradient descent and overlapping or non-orthogonal representations between distant input points lead to distal interference and catastrophic interference. Distal interference refers to the phenomenon where training a model on a subset of the domain leads to non-local changes on other subsets of the domain. This study shows that uniformly trainable models without distal interference must be exponentially large. A novel antisymmetric bounded exponential layer B-spline ANN architecture named \textit{ABEL-Spline} is proposed that can approximate any continuous function, is uniformly trainable, has polynomial computational complexity, and provides \textit{some} guarantees for mitigating distal interference. Experiments are presented to demonstrate the theoretical properties of ABEL-Splines. It is concluded that the weaker distal interference guarantees in ABEL-Splines are insufficient for model-only continual learning. It is conjectured that continual learning with polynomial complexity models requires augmentation of the training data or algorithm. 

\end{abstract}

\begin{keywords}
  continual learning, catastrophic interference, sparse distributed representations, splines, regression
\end{keywords}

\section{Introduction}

Continual learning is the process of learning to perform multiple tasks sequentially without forgetting previously learned tasks. Humans can learn many tasks sequentially, unlike artificial neural networks (ANNs) trained with gradient descent optimisation~\citep{McCloskey1989}. The challenge of continual (incremental or life-long) learning is exacerbated by the unavailability of data from old tasks while training on new tasks~\citep{PARISI201954, chen2018lifelong, Delange_2021, vandeVen2022}. Catastrophic interference or forgetting hinders continual learning in ANNs~\citep{french1999catastrophic, kemker2018measuring, robins1995catastrophic}. The catastrophic interference phenomenon occurs when learning a new task interferes with the performance of previously learned tasks by modifying trainable parameters. If an ANN cannot effectively learn many tasks, it has limited utility for continual learning~\citep{hadsell2020embracing, kaushik2021understanding}. Catastrophic interference is like learning to pick up a cup while forgetting how to drink. 

Previous studies on catastrophic interference and mitigation techniques focus on \textbf{time}, or the sequential aspect of the continual learning problem~\citep{PARISI201954, Delange_2021}. This study considers the input \textbf{space} for differentiable models such as ANNs. A differentiable model $f_{t}$ trained on data points $v \in D(f)$ to obtain $f_{t+1}$ with gradient descent optimisation can have non-local or distal changes $|f_{t+1}(x)-f_{t}(x)|>\varepsilon$ at points $x \in D(f)$ far away $d(x,v)>\delta$ from the training data $v$. Non-local or distal changes to the model are akin to off-target effects, referred to in this study as \textit{distal interference}. We propose distal interference as a novel explanatory mechanism for catastrophic interference and potentially other issues related to slow training and convergence of large ANNs. Training ANNs is akin to erecting a circus tent with countless cables -- when you tighten or improve one part, another part loosens.

The concept of a `stability-plasticity' spectrum or trade-off has been used to describe the difference between near-perfect memory models such as lookup tables and the adaptive, easy-to-train models such as ANNs with unstable memory~\citep{PARISI201954, Delange_2021, french1999catastrophic}. This study proposes more precise definitions for mathematical analysis and discusses how these characteristics interact with distal interference. Stable models are immune to the proposed distal interference mechanism and can be called \textit{distal orthogonal} models due to the geometry of their parameter space (see Section~\ref{sec:learning_without_distal_interference}). It has been noted that orthogonal activity patterns prevent interference, but distributed (overlapping or non-orthogonal representations; see Definition~\ref{def:overlapping_representations}) are theorised to promote generalisation~\citep{HintonDistributedRepresentations}. Plasticity is related to trainability, universal expressiveness, and generalisation. \textit{Uniform} trainability, or non-zero parameter gradients for any parameters and input point, eases training with gradient descent optimisation (see Definition~\ref{def:uniform_trainability}, in Section~\ref{sec:preliminaries}). Expressiveness refers to universal function approximation. 

The stability-plasticity spectrum exists due to varying model geometry and computational complexity. This study formally proves that a distance measure $d(x,v)>\delta$ based on the maximum norm $d(x,v) = \max_{i} (|x_{i} - v_{i}|)$ leads to an exponentially large distal orthogonal model capable of continual learning. This study further contributes with a novel uniformly trainable \textit{antisymmetric bounded exponential layer spline} (ABEL-Spline) architecture and proves an accompanying universal function approximation theorem. ABEL-Splines have a linear space complexity using a weaker distal orthogonality dissimilarity measure $d(x,v) = \min_{i} (|x_{i} - v_{i}|)$. The proposed ABEL-Splines exhibit all the intended properties, but they may not be sufficient for practical model-only continual learning without augmented training data or algorithms. There have been many suggestions to improve continual learning outcomes with augmentation techniques such as pseudo-rehearsal and orthogonal gradient descent~\citep{robins1995catastrophic, look_up_tables_are_robust, ogd_paper}.

It is understood that replay occurs with memory consolidation in human brains~\citep{Ji2007}. However, the exact learning mechanisms in the human brain are only partially understood. An analogous machine learning system that is efficiently computable may require a similar form of augmentation, such as pseudo-rehearsal~\citep{robins1995catastrophic}. Training and data augmentation techniques superficially resemble the replay and memory consolidation mechanisms in human brains, which are facilitated by the hippocampus, among other complex mechanisms~\citep{Ji2007, Bliss1993,  Squire1991}. It is an open question whether polynomial complexity models are capable of model-only sequential learning without such augmentation.

\section{Preliminaries}
\label{sec:preliminaries}

\begin{definition}[model perturbation] 
\label{def:model_perturbation}
Let $f_{t}(x)$ be a differentiable model with $x \in D(f)$ at time step $t$. Suppose $f_{t}(x)$ is updated for one time step to yield $f_{t+1}(x)$. The perturbation between $f_{t}(x)$ and $f_{t+1}(x)$ can be measured with a function norm:
\begin{equation} 
\label{eq_function_norms_absolute_model_perturbation}
\textup{model perturbation} :=  \lVert f_{t+1} - f_{t} \rVert_{1}  
  = \int_{D} |f_{t+1}(x) - f_{t}(x)| \,dx \\
\end{equation}
\end{definition}

\begin{remark}
One can quantify the difference over the domain with a function norm as shown in Equation~(\ref{eq_function_norms_absolute_model_perturbation}). This model-agnostic measure does not explicitly depend on the particular model architecture but only considers the function represented by a differentiable model $f$. 
\end{remark}

\begin{definition}[distal interference] 
\label{def:model_dsital_interference}
Let $f_{t}(x)$ be a differentiable model with $x \in D(f)$ at time step $t$. Suppose $f_{t}(x)$ is updated for one time step with training data $v \in D$ and target values $y \in \mathbb{R}$ to yield $f_{t+1}(x)$. For some chosen distance or dissimilarity measure $d(x,v)$ and $\delta>0$, one can construct a subset of \textbf{distal} points $D_{v}=\{ x \mid  d(x,v)>\delta, \; \forall x,v \in D \}$. The model change over distal points at time step $t$ is defined as \textbf{distal interference}:
\begin{equation} 
\label{eq_distal_interference_definition}
\textup{distal interference} :=  \int_{D_{v}} |f_{t+1}(x) - f_{t}(x)|  \,dx \\
\end{equation}
\end{definition}

Distal interference is akin to off-target effects during training. Changing model parameters to improve performance on training data $v \in D$ can lead to the model output changing on distant and potentially unrelated parts of the domain. This non-local effect is denoted \textit{distal interference}. Distant and unrelated changes are probably detrimental to model performance, but this may not always be true. If one can limit model perturbation and distal interference for one training or update step, then one can determine the change or drift of the model trained for many steps. Many models, such as ANNs, are susceptible to distal and catastrophic interference. Distal interference is simpler and easier to analyse than catastrophic interference since it only depends on the model's geometry, not the sequence of tasks. Catastrophic and distal interference is caused due to \textit{overlapping representations}.

\begin{definition}[overlapping representation]
\label{def:overlapping_representations}
Two points $x,v \in D(f)$ have overlapping representation in a model $f(x)$ with trainable parameters $\theta$ if: 
$$
  \nabla_{\theta} f(x) \cdot \nabla_{\theta} f(v)  \neq 0 
$$
\end{definition}

\begin{remark}
    Distance is not part of the definition of overlapping representations. Heuristically, points that are close to each other should have overlapping representations. Conversely, distant points should ideally have non-overlapping representations. 
\end{remark}


Constructing models that guarantee zero distal interference for a specific choice of distance or dissimilarity measure $d(x,v)$ and a fixed $\delta>0$ is possible. One need only guarantee that the models' parameter gradients are orthogonal (and sparse) for a chosen $d(x,v) > \delta$. This study considers the difference measures $\max_{i} (|x_{i} - v_{i}|)$ and $\min_{i} (|x_{i} - v_{i}|)$. 

\begin{definition}[distal orthogonal model]
    \label{def:distal_orthogonal_model}
    Let f(x) be a differentiable model. Given some fixed $\delta>0$ and non-negative dissimilarity or distance measure $d(x,v)$, a model $f(x)$ is called distal orthogonal w.r.t.  $d(x,v)$ if for any trainable parameters $\theta \in \Theta$ and $\forall x, v \in D(f)$: 
    $$
     d(x,v) > \delta \implies  \nabla_{\theta} f(x) \cdot \nabla_{\theta} f(v) = 0 
    $$
\end{definition}

\begin{definition}[max-distal orthogonal model]
    \label{def:max_distal_orthogonal_model}
    Let f(x) be a differentiable model. Given some $\delta>0$, a model $f(x)$ is max-distal orthogonal if for any trainable parameters $\theta \in \Theta$ and $\forall x, v \in D(f)$: 
    $$
     \max_{i} (|x_{i} - v_{i}|) > \delta \implies  \nabla_{\theta} f(x) \cdot \nabla_{\theta} f(v) = 0 
    $$
\end{definition}

\begin{remark}
    A trivial max-distal orthogonal model has no trainable parameters or a gradient vector that is zero everywhere.
\end{remark}

\begin{definition}[uniform trainability]
\label{def:uniform_trainability}
A model $f(x)$ is \textbf{uniformly trainable} if the parameter gradient of the function w.r.t.  $\theta$ is a non-zero vector: 
\begin{equation*}
    \nabla_{\theta} f(x) \neq \Vec{0}, \; \forall x \in D(f), \; 
    \forall \theta \in \Theta
\end{equation*}
\end{definition}

\begin{remark}
Uniform trainability means that a model can be trained with gradient descent on any input, so there are no dead zones with no gradient, which is possible with ReLU ANNs and the `dying ReLU' problem~\citep{Gao_Cai_Ji_2020, hanin2019universal, HintonRelu2010}. A vanishing but non-zero gradient can also hinder trainability. 
Uniform trainability means model parameters can be adjusted with gradient descent at any point in the domain. A model with \textbf{one} adjustable parameter with the same value everywhere is a trivial uniformly trainable model. More complicated and expressive models can also be uniformly trainable.
\end{remark}

\begin{definition}[lookup table]
    \label{def_lookup_table}
    A lookup table model with partition number $z \in \mathbb{N}$ partitions the domain $[0,1]^{n} \subset \mathbb{R}^{n}$ into $z^{n}$ equally sized hyper-cubes of equal size, and maps any point inside each partition to some trainable value. For any $\delta>0$ there exists a lookup table model $f(x)$ with partition number $z \geq \delta^{-1}$ s.t. $f(x)$ is a max-distal orthogonal model:
    $$
     \max_{i} (|x_{i} - v_{i}|) > \delta \implies  \nabla_{\theta} f(x) \cdot \nabla_{\theta} f(v) = 0 
    $$
\end{definition}

A lookup table can distinguish between points $x,y \subset D(f)$ that differ sufficiently in any \textbf{one} of their coordinates. A lookup table associates independent trainable parameters with no overlapping representation to sufficiently different inputs. Lookup tables are also uniformly trainable models over $[0,1]^{n}$. It remains to be shown if uniformly trainable and max-distal orthogonal models have a particular computational complexity.

\clearpage

\section{Learning Without Distal Interference}
\label{sec:learning_without_distal_interference}
Distal interference occurs when parameter updates to change a model output at a specific point affect the model outputs far from the training data point of interest. This non-local interference is an underlying mechanism that can cause catastrophic interference in a continual learning context. This section shows that models with distal orthogonality can learn without distal interference. The mathematical ideas used in this section are similar to neural tangent kernels by \citet{NEURIPS2018_tangent_kernel}. A model $f(\theta_{t},x)$ with initial parameters $\theta_{t}$ can be trained with gradient descent optimisation to create an updated model $f(\theta_{t+1},x)$. The local linear approximation of a $d$-distal orthogonal model $f(\theta_{t+1},x)$ with explicitly shown parameters $\theta$ is given by: 
\begin{equation*}
 f(\theta_{t+1},x) \approx f(\theta_{t},x) + \nabla_{\theta}f(\theta_{t},x) \cdot (\theta_{t+1} - \theta_{t})   
\end{equation*}

\noindent
Let $ \theta_{t+1} = \theta_{t} - \eta \hat{g}$ be the updated parameters of the model trained at some point $v \in D(f)$, with learning rate $\eta$. Let $\hat{g}=(\partial_{f} L)  \,\nabla_{\theta}f(\theta_{t},v)$, where $\partial_{f} L$ is the derivative of the loss function w.r.t.  the model $f$, and $\nabla_{\theta}f(\theta_{t},v)$ be the parameter gradient of the model evaluated at $v$. Then it follows that:
\begin{equation*} 
\label{eq_local_linear_approx_simplification}
\begin{split}
f(\theta_{t+1},x) 
&  \approx f(\theta_{t},x) + \nabla_{\theta}f(\theta_{t},x) \cdot ((\theta_{t} - \eta \hat{g} - \theta_{t}) \\
&  = f(\theta_{t},x) - \eta \; \nabla_{\theta}f(\theta_{t},x) \cdot  \hat{g}  \\
& = f(\theta_{t},x) - \eta \; (\partial_{f} L) \, \nabla_{\theta}f(\theta_{t},x) \cdot  \nabla_{\theta}f(\theta_{t},v)  \\
\end{split}
\end{equation*}

\noindent
The resulting model perturbation from Definition~\ref{def:model_perturbation} is approximated by
\begin{equation}
\begin{split}
\int_{D} |f_{t+1}(x) - f_{t}(x)| \,dx
& \approx \int_{D} |(f_{t}(x) - \eta \nabla_{\theta}f_{t}(x) \cdot \hat{g}-f_{t}(x)| \,dx  \\
& = \int_{D} \eta |  \nabla_{\theta}f_{t}(x) \cdot \hat{g} | \,dx \\
& = \int_{D} \eta \, | \partial_{f} L| \, |  \nabla_{\theta}f_{t}(x) \cdot   \,\nabla_{\theta}f_{t}(v) | \,dx
\\
\end{split}
\end{equation}

\noindent
Assume that $d(x,v)>\delta$, then for any $\theta$ it follows:
\begin{equation*}
    d(x,v)>\delta \implies  \nabla_{\theta} f(\theta, x) \cdot \nabla_{\theta} f(\theta, v) = 0 
\end{equation*}

\noindent
Since $\nabla_{\theta} f_{t}(x) \cdot \nabla_{\theta} f_{t}(v) = 0$, it follows from Defintion~\ref{def:model_dsital_interference} that the \textbf{distal interference} over a subset of the domain $D_{v}=\{ x \mid  d(x,v)>\delta, \; \forall x,v \in D \}$ is approximated by:
\begin{equation} 
\label{eq:zero_distal_interference_linear_approx}
\int_{D_{v}} |f_{t+1}(x)-f_{t}(x)| \,dx \approx \int_{D_{v}} \eta \, | \partial_{f} L| \, |  \nabla_{\theta}f_{t}(x) \cdot   \,\nabla_{\theta}f_{t}(v) | \,dx = 0 \\
\end{equation}

Thus, a model with distal orthogonality can learn with gradient descent at a point $v$ without affecting the values at a distant point $x$ in the domain. Distal interference can deleteriously affect a model's ability to learn continuously. Distal interference is most troublesome when two distant points have large overlapping representations. The required model size for robustness to distal interference still needs to be determined. Such insight could shed light on the computational complexity bounds for models capable of continual learning. 

\begin{remark}
     Ideally, gradient vectors should be orthogonal and \textbf{sparse}; in other words, the gradient vector should consist of mostly zeroed components. If most components are zero, then sparsity improves robustness with a more accurate approximation in Equation~(\ref{eq:zero_distal_interference_linear_approx}). It is, in general, possible for two vectors to be orthogonal with non-zero components. However, finite precision and optimisers that modify update vectors can violate orthogonality assumptions. 
\end{remark}

\section{Limits of Model-Based Continual Learning}
\label{sec:theoretical_limits_model_continual}
Desirable model properties considered in this study include uniform trainability and zero distal interference, using different distance and dissimilarity measures. A model updated with training data $v \in D$ should readily change and update their output values on points $x \in D$ that are $d(x,v) \leq \delta$ close to the training data. However, models should not change at points $x \in D$ far from $d(x,v) > \delta$ the training data.

\begin{theorem}
    If a model $f(x)$ with trainable parameters $\theta$ is uniformly trainable and max-distal orthogonal, then it has a parameter space of at least $\mathcal{O}(z^{n})$ dimensions.
\end{theorem}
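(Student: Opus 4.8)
The plan is to turn the geometric hypothesis into a linear-algebra bound. I would exhibit a set $G\subseteq[0,1]^{n}$ of $\Theta(z^{n})$ points that are pairwise ``distal'' in the maximum norm, note that uniform trainability makes each parameter gradient $\nabla_{\theta}f(x)$ nonzero while max-distal orthogonality makes these gradients mutually orthogonal, and then conclude that the parameter space must have dimension at least $|G|$ because a family of pairwise-orthogonal nonzero vectors is linearly independent.

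For the construction, fix $\delta>0$ and its partition number $z$ (Definition~\ref{def_lookup_table}, $z\ge\delta^{-1}$), pick a grid spacing $s>\delta$ chosen only slightly larger than $\delta$, set $m=\lfloor 1/s\rfloor$, and let $G=\{0,s,2s,\dots,ms\}^{n}\subseteq[0,1]^{n}$. If $x,v\in G$ are distinct, they differ in some coordinate $i$, and there $|x_{i}-v_{i}|$ is a positive multiple of $s$, hence $|x_{i}-v_{i}|\ge s>\delta$; therefore $\max_{i}|x_{i}-v_{i}|>\delta$ and the pair is distal. A short floor/ceiling estimate gives $|G|=(m+1)^{n}$ with $m+1=\Theta(\delta^{-1})=\Theta(z)$, so $|G|$ grows like $z^{n}$.

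Now fix any trainable parameters $\theta\in\Theta$. By uniform trainability (Definition~\ref{def:uniform_trainability}) we have $\nabla_{\theta}f(x)\neq\vec{0}$ for every $x\in G$, and by max-distal orthogonality (Definition~\ref{def:max_distal_orthogonal_model}) we have $\nabla_{\theta}f(x)\cdot\nabla_{\theta}f(v)=0$ for all distinct $x,v\in G$. These $|G|$ gradient vectors are then linearly independent: from $\sum_{k}c_{k}\,\nabla_{\theta}f(x_{k})=\vec{0}$, taking the inner product with $\nabla_{\theta}f(x_{j})$ forces $c_{j}\lVert\nabla_{\theta}f(x_{j})\rVert^{2}=0$, hence $c_{j}=0$. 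Since the gradient (tangent) space of the parameters contains $|G|$ linearly independent vectors, the number of trainable parameters is at least $|G|=\Theta(z^{n})$, i.e.\ $\mathcal{O}(z^{n})$ as claimed; the bound is unconditional because the hypotheses are quantified over all $\theta$.

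The step I expect to be the main obstacle — everything else is immediate from the definitions plus a one-line linear-algebra argument — is the packing/counting of $G$: one must take $s$ \emph{strictly} greater than $\delta$ (so $s=\delta$ is not permitted), and then carry out the floor/ceiling bookkeeping carefully enough to guarantee that $G$ is genuinely pairwise max-distal while still containing $\Theta(z^{n})$ points, with the edge case where $\delta^{-1}$ is an integer needing a touch of extra care.
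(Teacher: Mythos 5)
Your proposal is correct and follows essentially the same route as the paper: construct a grid of $\Theta(z^{n})$ pairwise max-distal points, invoke uniform trainability and max-distal orthogonality to get that many nonzero mutually orthogonal gradient vectors, and conclude the parameter space has at least that dimension. You are in fact somewhat more careful than the paper, which asserts the linear independence and the grid count without the explicit spacing argument ($s>\delta$) or the inner-product step you spell out.
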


\begin{proof}
Consider a max-distal orthogonal model $f(x)$ with domain $D(f) = [0,1]^{n}$, and choose $\delta > 0$. Let $x,v \in [0,1]^{n}$, by Definition~\ref{def:max_distal_orthogonal_model}:
\begin{equation*}
    \max_{i} (|x_{i} - v_{i}|) > \delta \implies  \nabla_{\theta} f(x) \cdot \nabla_{\theta} f_{t}(v) = 0 
\end{equation*}

Assume the model is uniformly trainable, then by Definition~\ref{def:uniform_trainability}:
\begin{equation*}
\nabla_{\theta}f(x) \neq \vec{0}, 
\forall x \in [0,1]^{n}
 , \; 
\forall \theta \in \Theta
\end{equation*}

Choose a partition number $z \in \mathbb{N}$ s.t. $z \geq \delta^{-1}$. Consider a set of all $\mathcal{O}(z^{n})$ grid-points $u^{(i)}$ such that:
\begin{equation*}
    ||u^{(i)}-u^{(j)}||_{\infty} > \delta \geq z^{-1}, \; \forall \;  i \neq j \in \mathbb{N}
\end{equation*}

From distal orthogonality, it follows that:
\begin{equation*}
    \nabla_{\theta} f(u^{(i)}) \cdot \nabla_{\theta} f(u^{(j)}) = 0, \; \forall \;  i \neq j \in \mathbb{N}
\end{equation*}

\noindent
There are $\mathcal{O}(z^{n})$ non-zero parameter gradient vectors that are orthogonal to each other. Thus, the set of vectors $\nabla_{\theta} f(u^{(i)})$ constitute a basis for a vector space with dimension of $\mathcal{O}(z^{n})$. It follows that the model requires at least $\mathcal{O}(z^{n})$ parameters.
\end{proof}

The logically equivalent contra-positive of the theorem is: If a model does not have an exponentially large parameter space, then it is not uniformly trainable or not a max-distal orthogonal model. This implies that \textbf{polynomial complexity models are not uniformly trainable or not max-distal orthogonal models}. In other words, polynomial complexity models have zero gradients and are untrainable for some inputs, or training on one part of the domain will interfere with learned values on distant points. Due to the equivalence of norms in finite dimensions, one should expect similar computational bounds for any norm-based distance function $d(x,v)$. 

\clearpage
Uniform trainability enables training with gradient descent. Memory retention guaranteed due to max-distal orthogonality can counteract catastrophic interference with sequential training on different tasks. Polynomial or low-complexity models are more practical than exponentially large models. The trade-offs and unsatisfiability are visualised in Figure~\ref{fig:triangle_of_tradeoffs}. The findings align with the fact that lookup tables are robust to catastrophic interference but require exponentially many parameters as a function of the input dimension. 

\begin{figure}[h]
\centering
\scalebox{0.8}{
\begin{tikzpicture}[scale=2.5]
    \draw[thick] (0,0) -- (2,0) -- (1,1.73) -- cycle;
    \node[below] at (0,0) {Easy to Train};
    \node[below, align=center] at (2., 0) {Max-Distal\\Memory Retention};
    \node[align=center,above] at (1,1.73) {Low Complexity\\Models};
\end{tikzpicture}}
\caption[Trade-off or unsatisfiability triangle]{The trade-off triangle between computational complexity, ease of optimisation with uniform trainability, and max-distal orthogonal memory retention.}
\label{fig:triangle_of_tradeoffs}
\end{figure}
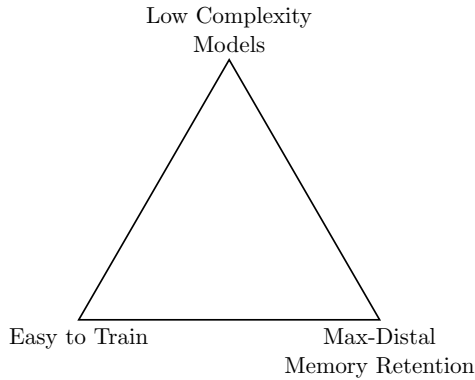

The aim is to find a reasonable trade-off for effective continual learning, which can be visualised in Figure~\ref{fig:triangle_of_tradeoffs_with_aim}. Such an aim may not be achievable, since it is unclear how crucial max-distal orthogonality is for practical memory retention. One could instead opt for some non-negative dissimilarity measure for $d(x,v)$ instead of a proper metric or distance function that adheres to the axioms for a metric space (Appendix~\ref{sec:fundamental_def_and_theorems}, Definition~\ref{def:matric_space}). However, it is not clear whether a dissimilarity measure such as $d(x,v) = \min_{i} (|x_{i} - v_{i}|)$ would be sufficient for continual learning using specifically designed distal orthogonal architectures. 

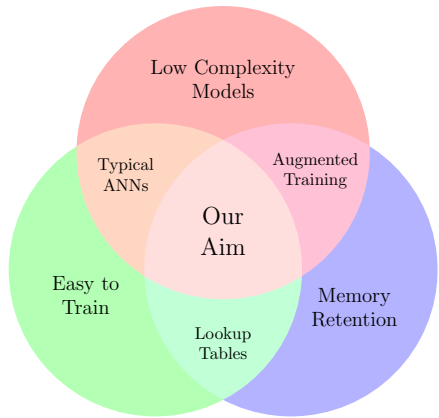
\begin{figure}[h]
\centering
\scalebox{0.65}{
\begin{tikzpicture}
  \begin{scope}[blend group = soft light]
    \fill[red!30!white]   ( 90:1.6) circle (3);
    \fill[green!30!white] (210:1.6) circle (3);
    \fill[blue!30!white]  (330:1.6) circle (3);
  \end{scope}
  \node[align=center] at ( 33:2.25) [font=\small]  {Augmented  \\  Training};
  \node[align=center] at ( 90:3.1) [font=\large]    {Low Complexity\\Models};
  \node[align=center] at ( 150:2.3) [font=\small]   {Typical\\ANNs};
  \node[align=center] at ( 205:3.1) [font=\large]  {Easy to \\ Train};
  \node[align=center] at ( 270:2.3) [font=\small]  {Lookup \\ Tables};
  \node[align=center] at ( 330:3.1) [font=\large]   {Memory \\ Retention};
  \node[align=center] [font=\Large] {Our \\ Aim};
\end{tikzpicture}}
\caption[Venn diagram of key concepts.]{A Venn diagram relating key concepts in machine learning related to this study.}
\label{fig:triangle_of_tradeoffs_with_aim}
\end{figure}

\section{Min-Distal Orthogonal Models}
\label{sec:min_distal_ortho_models}

A different choice of $d(x,v)$ might still enable continual learning without exponential computational complexity. The distance function $d(x,v) = \max_{i} (|x_{i} - v_{i}|)$ is a proper metric or distance function. This study also considers the function $d(x,v) = \min_{i} (|x_{i} - v_{i}|)$ which satisfies \textit{some} of the properties of a metric, but not all the listed properties. Specifically, $\min_{i} (|x_{i} - v_{i}|)$ does not satisfy the triangle inequality nor the identity of indiscernibles. Therefore, $\min_{i} (|x_{i} - v_{i}|)$ is a symmetric and non-negative \textit{dissimilarity} measure.

\begin{definition}[min-distal orthogonal model]
    \label{def:min_distal_orthogonal_model}
    Let f(x) be a differentiable model. Given some $\delta>0$, a model $f(x)$ is min-distal orthogonal if for any trainable parameters $\theta \in \Theta$ and $\forall \; x, v \in D(f)$: 
    $$
     \min_{i} (|x_{i} - v_{i}|) > \delta \implies  \nabla_{\theta} f(x) \cdot \nabla_{\theta} f(v) = 0 
    $$
\end{definition}
This study constructs polynomial complexity models from single-variable cardinal cubic B-splines such that the resulting models exhibit: 
\begin{enumerate}[itemsep=0.1mm,parsep=0.1cm]
    \item \textbf{Sparse activity}---most neural units are zero and inactive.
    \item \textbf{Bounded parameter gradients} regardless of model size.
    \item \textbf{Uniformly trainability} anywhere in the domain.
    \item \textbf{Min-distal orthogonality}.
\end{enumerate} 

\subsection{Cardinal Cubic B-splines}
\label{sec:properties_of_cardinal_splines}

Splines are piece-wise defined polynomial functions, often used in computer graphics, function representation and data interpolation~\citep{de1978practical, prautzsch2013bezier, salomon2007curves}. Splines are akin to stitching together (almost) unrelated polynomials to make a function that is a polynomial locally but not globally over its domain. A common variant of splines that constitute a basis for a vector space (closed under vector operations) are called B-splines~\citep{schoenberg1973cardinal, hollig2012finite, hollig2015approximation}. 

B-splines constitute a flexible and expressive method for function approximation. In addition, analysing B-splines and characterising their general behaviour is simpler than other basis functions. The support interval or domain over which a spline is defined is partitioned into sub-intervals with boundaries called knots. The number of subintervals dictates the number of B-spline basis functions. The degree or order of a B-spline (such as zeroth or cubic) is the degree of each locally defined polynomial of the spline~\citep{schoenberg1973cardinal, hollig2012finite, hollig2015approximation, goldman1993knot}.

The use of B-splines was inspired by hierarchical temporal memory (HTM) systems that use sparse distributed representations (SDRs) to encode information~\citep{hawkins2021thousand, Cui_2016, George_2009}. SDRs are non-linear mappings of scalars to sparse arrays of mostly zeroes, often used as a pre-processing step for models. The choice of B-splines followed from a revelation that SDRs with linear operations are equivalent to \textit{zeroth-order} B-splines, but it is not mentioned in the relevant literature~\citep{HintonDistributedRepresentations}. Zeroth order B-splines resemble single-variable lookup tables. It is also known that lookup tables are very robust to catastrophic interference \citep{look_up_tables_are_robust}. Cubic B-splines can be thought of as smoothed SDRs that may provide the basis for more robust memory in models.

\clearpage
\textit{Cardinal} cubic B-splines are defined on equally sized sub-intervals or partitions. The boundaries of the partitions are called knots, and knot-insertion algorithms can be used to re-partition into smaller sub-intervals~\citep{goldman1993knot}. Splines have been employed in ANNs~\citep{lane1991multi, douzette2017b, Scardapane_2018}, but not for the purpose of memory retention. This study aims to improve memory retention in ANNs with spline-based architectures. The number of basis functions or partitions is fixed and chosen independently of the training data. The spline models considered in this study do not interpolate between consecutive data points. Exact interpolation of training data is ill-advised for tasks with noisy training data. The resulting model would have severe variance and oscillate wildly because of overfitting or interpolating with zero error between each consecutive noisy data point on some interval. The spline partitions in this study are also evenly spaced (cardinal) for simplicity. 
Any continuous single-variable function can be approximated with a cardinal cubic B-spline $f(x)$ with basis functions $S_{i}(x)$, computed with a custom activation function $S(x)$ given in Equation~(\ref{eq:cubic_spline_activation}).
\begin{definition}[$z$-density B-spline function]
\label{def:lambda_desnity_b_spline}
A $z$-density B-spline function, $f$, is a cardinal cubic B-spline function of one variable with $4z+3$ basis functions, partition number $z \in \mathbb{N}$ and adjustable parameters $\theta_{i}$:
\begin{equation*} 
\begin{split}
f(x) 
= \sum_{i=1}^{4z+3} \theta_{i} S_{i}(x) 
= \sum_{i=1}^{4z+3} \theta_{i} S(w_{i}x + b_{i})
= \sum_{i=1}^{4z+3} \theta_{i} S( (4z) x + (4 - i) )
\end{split}
\end{equation*}
\end{definition}

\noindent
Cardinal B-splines uniformly partition the interval $\left[0, 1 \right]$ and each basis function $S_{i}(x)$ has the same shape for different partition numbers, as shown in Figure~\ref{fig:basis_densities_comparison} and Figure~\ref{fig:cubic_spline_activation_function}.


\begin{figure} [!htb]
         \centering
         \includegraphics[width=0.65\textwidth]{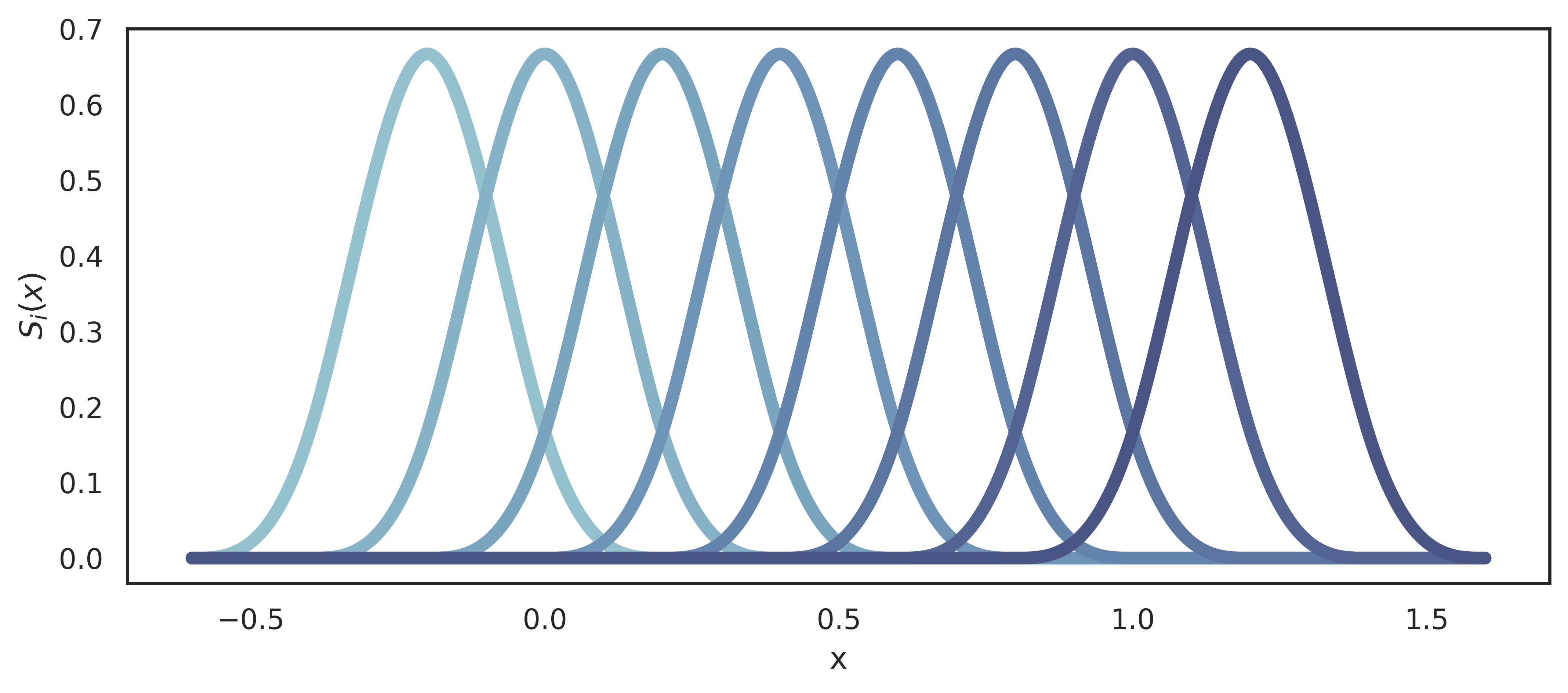}
        \caption{Uniformly spaced cardinal cubic B-splines basis functions have the same shape.}
        \label{fig:basis_densities_comparison}
\end{figure}

\noindent
The basis functions extend outside the target interval $\left[0, 1 \right]$ due to an artefact of the implementation and smoothness requirements. The argument $x$ of each basis function $S_{i}(x)$ is simply scaled by $4z$ and shifted by constants $b_{i}$ before applying the same activation function $S(x)$, which is given by:
\begin{equation}
\label{eq:cubic_spline_activation}
 \footnotesize S(x) =\begin{cases} 
      \frac{1}{6} x^{3} &  0 \leq x < 1\\
      \frac{1}{6} \left[-3(x-1)^{3} +3(x-1)^{2} +3(x-1) + 1 \right] &  1 \leq x < 2\\
      \frac{1}{6} \left[3(x-2)^{3} -6(x-2)^{2} + 4 \right]  & 2 \leq x < 3\\
      \frac{1}{6} ( 4-x ) ^{3} &  3 \leq x < 4\\
      0 & otherwise. 
   \end{cases}   
\end{equation}

\clearpage
\noindent
The activation function $S(x)$ is non-zero over the domain $x \in \left[0,4\right]$, and shown in Figure~\ref{fig:cubic_spline_activation_function}. The shape of $S(x)$ resembles a Gaussian, even though the underlying function is piece-wise cubic. There are some similarities to using Gaussian kernels for function approximation, with one key difference: Cubic B-spline basis functions are zero almost everywhere, and Gaussian functions are non-zero everywhere.

\begin{figure}[!ht]
\centering
\includegraphics[width=0.5\linewidth]{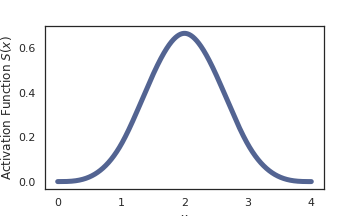}
\caption{Activation function $S(x)$ used to compute cardinal cubic B-splines.}
\label{fig:cubic_spline_activation_function}
\end{figure}

\noindent
A two-layer neural network with activation function $S(x)$ can implement cardinal cubic B-splines. Keep in mind that only the coefficients $\theta_{i}$ are trainable, thus optimising $f(x) = \sum_{i=1}^{N} \theta_{i} S\left(w_{i}x + b_{i}\right)$ is linear. In contrast, splines that unevenly partition the input space do not permit such a straightforward implementation.

Using cardinal B-splines instead of arbitrary and trainable sub-interval partitions and knots makes optimisation and implementation easier~\citep{douzette2017b}. Optimising partitions is non-linear, but optimising only coefficients $\theta_{i}$ (also called control points) is linear and thus convex. Suppose one changes one parameter $\theta_{i}$ in a cardinal cubic B-spline $f(x)$; then the model only changes on a small region as shown in Figure~\ref{fig:cardinal_b_splines_localised_parameter}. Unlike trigonometric and polynomial functions, cardinal cubic B-splines have parameters that affect the spline \textit{locally}.

\begin{figure}[!htb]
\centering
\includegraphics[width=0.8\linewidth]{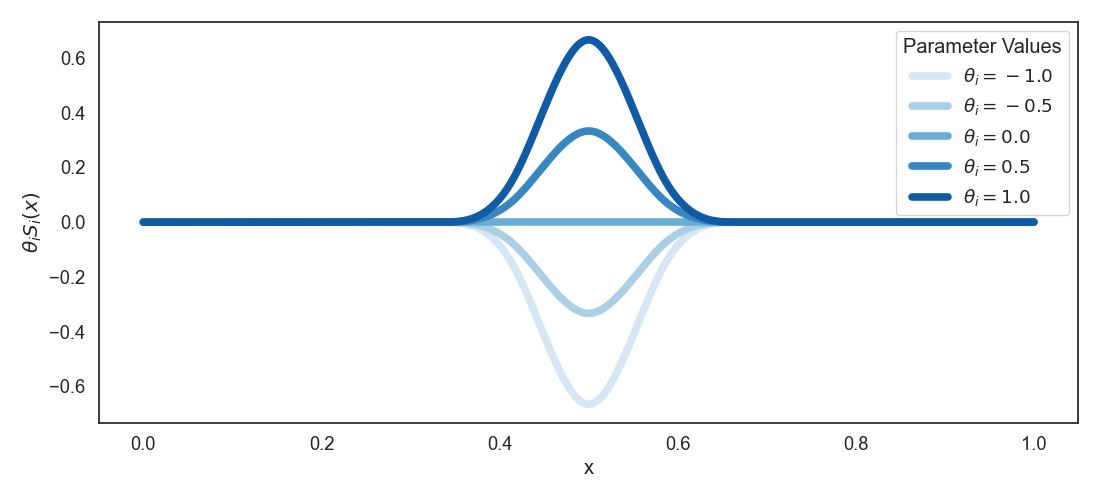}
\caption{B-spline basis function parameters are localised.}
\label{fig:cardinal_b_splines_localised_parameter}
\end{figure}


\begin{proposition}[Sparsity]
\label{prop:z_spline_sparsity}

Let $f(x)$ be a $z$-density B-spline function, defined on the domain $[0,1] \subset R$, with trainable parameters $\theta_{i}$ and partition number $z$. Let $\lVert \nabla_{\theta} f(x) \rVert_{0}$ denote the number of non-zero components of the gradient vector w.r.t. trainable parameters. For any $x \in D(f)$, the number of non-zero components is bounded:

$$ 
\lVert \nabla_{\theta} f(x) \rVert_{0} 
\leq 4
$$
\end{proposition}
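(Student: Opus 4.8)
\section*{Proof proposal}

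The plan is to exploit the fact that a $z$-density B-spline is \emph{linear} in its trainable parameters, so that the components of $\nabla_{\theta} f(x)$ are simply the basis functions evaluated at $x$, and then to count how many basis functions have $x$ in their support. First I would differentiate: since, by Definition~\ref{def:lambda_desnity_b_spline}, $f(x) = \sum_{i=1}^{4z+3} \theta_{i} S(w_{i}x + b_{i})$ with $w_{i} = 4z$ and $b_{i} = 4 - i$ held fixed (only the $\theta_{i}$ are trainable), we get $\partial f(x)/\partial \theta_{i} = S(w_{i}x + b_{i}) = S_{i}(x)$. Hence $\nabla_{\theta} f(x) = (S_{1}(x), \dots, S_{4z+3}(x))$ and $\lVert \nabla_{\theta} f(x) \rVert_{0} = \#\{\, i \in \{1,\dots,4z+3\} : S_{i}(x) \neq 0 \,\}$.

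Next I would pin down the support of the activation $S$. From Equation~(\ref{eq:cubic_spline_activation}), $S(u) = 0$ whenever $u \notin [0,4]$, and moreover $S(0) = 0$ and $S(4) = 0$ by direct substitution in the first and fourth branches, so in fact $S(u) \neq 0 \implies u \in (0,4)$. Therefore $S_{i}(x) \neq 0$ forces $0 < (4z)x + (4 - i) < 4$, which rearranges to $(4z)x < i < (4z)x + 4$. The set of indices that can contribute to the gradient is thus contained in the open interval $\bigl( (4z)x,\ (4z)x + 4 \bigr)$ of length $4$, intersected with the valid range $\{1, \dots, 4z+3\}$. An open real interval of length $4$ contains at most $4$ integers (exactly $3$ when its left endpoint is an integer, $4$ otherwise), so $\lVert \nabla_{\theta} f(x) \rVert_{0} \le 4$ for every $x \in D(f)$, and restricting to the valid index range and to the domain $[0,1]$ can only decrease this count.

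The argument is essentially a support-counting exercise, so I expect the only real subtlety — the ``main obstacle'' — to be precision at the boundaries: confirming that $S$ genuinely vanishes at $u=0$ and $u=4$ (which is what keeps the count at $4$ rather than $5$), and handling the endpoints of the index range $\{1,\dots,4z+3\}$ and of the domain $[0,1]$ (where the true count may be strictly smaller but never larger, so the stated bound still holds). I would also add a brief remark that the bound is tight: for any $x \in (0,1)$ with $(4z)x \notin \mathbb{Z}$, exactly four basis functions overlap, so $\lVert \nabla_{\theta} f(x) \rVert_{0} = 4$ is attained. This tightness is what makes the subsequent bounded-gradient and min-distal-orthogonality claims depend on the precise constant $4$.
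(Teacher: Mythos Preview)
Your proposal is correct and follows essentially the same approach as the paper: compute $\partial f/\partial\theta_i = S_i(x)$ and then count the basis functions whose support contains $x$. Your argument is in fact more rigorous than the paper's, which simply asserts ``inspecting each basis function $S_i(x)$ shows that, at most, four basis functions are non-zero'' with reference to a figure, whereas you supply the explicit support calculation and integer-counting step.
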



\begin{proof} Let $f(x)$ be a $z$-density B-spline function from Definition~\ref{def:lambda_desnity_b_spline}. Consider the components of the vector $\nabla_{\theta} f(x)$ obtained from the gradient operator, which is simply 

$$\frac{\partial}{\partial \theta_{i}} f(x) 
= \frac{\partial}{\partial \theta_{i}} 
\sum_{j=1}^{4z+3} \theta_{j} S_{j}(x) = S_{i}(x)$$

\noindent
Inspecting each basis function $S_{i}(x)$ shows that, at most, four basis functions are non-zero for a fixed $x \in [0,1]$, as visualised in Figure~\ref{fig:fig_proof_properties_1_2}. It follows that:

$$
\lVert \nabla_{\theta} f(x) \rVert_{0}
\leq 4
$$

Thus, the number of non-zero entries is bounded.
\end{proof}

\noindent
\begin{remark}
    A cubic function has at most four non-zero polynomial coefficients. Correspondingly, there are at most four non-zero B-spline basis functions. Sparse gradient vectors leave most weights unchanged and can prevent catastrophic interference. Predictable sparsity also permits efficient model implementations. 
\end{remark} 

\begin{figure}[!htb]
\centering
\includegraphics[width=0.7\linewidth]{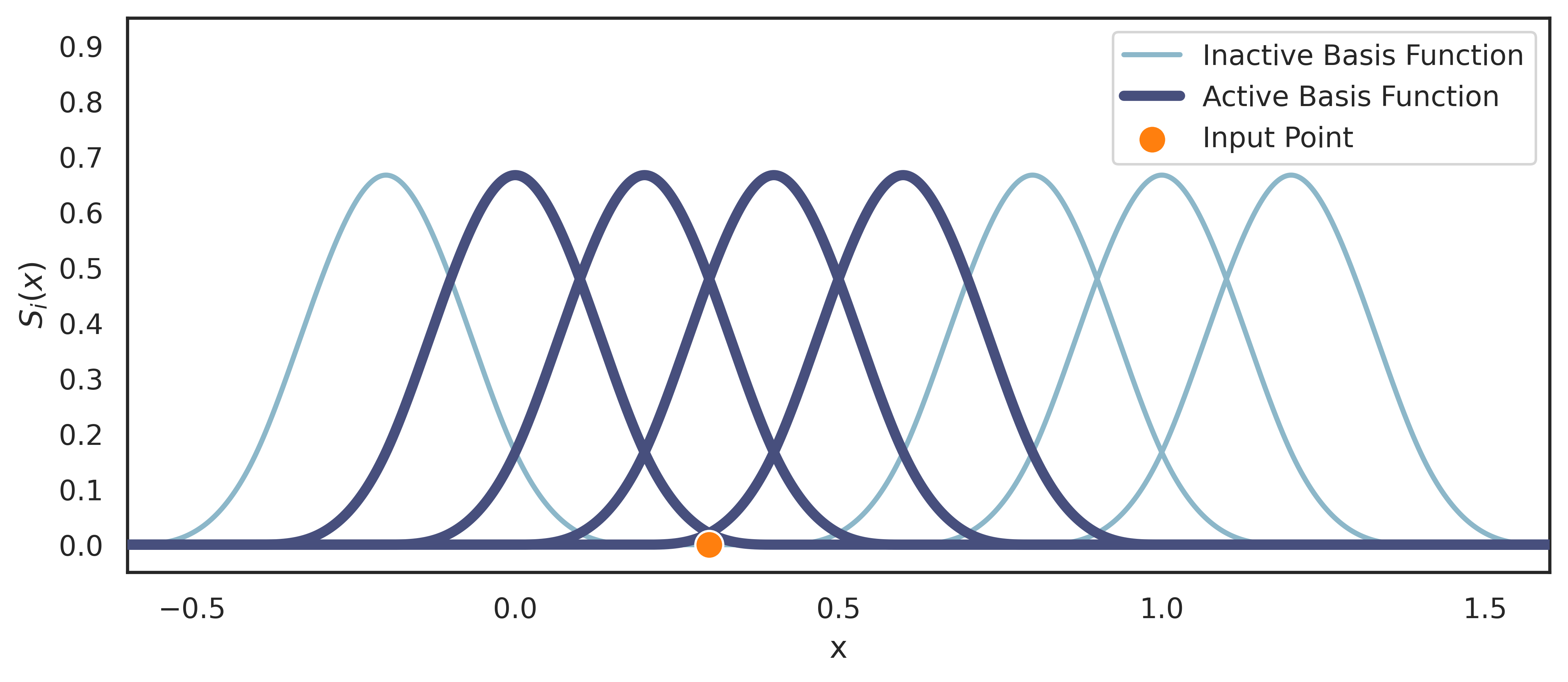}
\caption{Visual proof of sparse and bounded gradients for cardinal B-splines.}
\label{fig:fig_proof_properties_1_2}
\end{figure}

\begin{proposition}[Bounded Parameter Gradient]
\label{prop:z_spline_bounded_gradient}


Let $f(x)$ be a $z$-density B-spline function, defined on the domain $[0,1] \subset R$, with trainable parameters $\theta_{i}$ and partition number $z$. For any $x \in D(f)$, the gradient w.r.t. trainable parameters, $\nabla_{\theta} f(x)$, is bounded:
\begin{equation*}
\lVert \nabla_{\theta} f(x)\rVert_{1} 
= \sum_{i=1}^{4z+3}  \left| \frac{\partial f}{\partial \theta_{i}} (x) \right| 
< 4
\end{equation*}

\end{proposition}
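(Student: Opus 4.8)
The plan is to reuse the computation from the proof of Proposition~\ref{prop:z_spline_sparsity}, namely that $\frac{\partial f}{\partial \theta_i}(x) = S_i(x) = S((4z)x + (4-i))$, so that the $1$-norm of the parameter gradient is exactly $\sum_{i=1}^{4z+3} |S_i(x)|$. Since each $S_i$ is non-negative (inspect the five-case definition in Equation~(\ref{eq:cubic_spline_activation}): each branch is a non-negative multiple of a cube or the explicitly non-negative bump, and all are $\geq 0$ on their stated intervals), the absolute values can be dropped and the sum becomes $\sum_{i=1}^{4z+3} S_i(x)$.

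Next I would invoke the partition-of-unity property of cardinal cubic B-splines: on the interval $[0,1]$ the basis functions sum to $1$, i.e. $\sum_{i} S_i(x) = 1$ for all $x \in [0,1]$. This is the standard fact that B-spline basis functions form a partition of unity, and it can be checked directly by taking the four consecutive non-zero branches of $S$ at a generic point $x$ (from Proposition~\ref{prop:z_spline_sparsity} at most four are non-zero) and adding the four polynomial pieces $\frac{1}{6}t^3$, $\frac{1}{6}(-3(t-1)^3 + 3(t-1)^2 + 3(t-1) + 1)$ with the appropriate shifts, which telescopes to $1$. Combining the two steps gives $\lVert \nabla_\theta f(x)\rVert_1 = \sum_i S_i(x) = 1 < 4$, which is even stronger than the stated bound.

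One subtlety to address is the behaviour near the endpoints of $[0,1]$: because the basis functions extend outside $[0,1]$ (as the paper notes, an implementation artefact), at $x$ close to $0$ or $1$ fewer than four basis functions may be active and the remaining ones are the truncated tails, but since the $z$-density construction includes the extra $+3$ basis functions precisely to cover these boundary regions, the partition-of-unity identity still holds on all of $[0,1]$; I would state this explicitly rather than just claiming it. The main (and only real) obstacle is justifying the partition-of-unity equality cleanly — either by citing the standard B-spline literature already referenced (\citealp{schoenberg1973cardinal, hollig2015approximation}) or by the short direct telescoping calculation on the four active branches — everything else is routine. If one prefers to avoid partition-of-unity altogether, a cruder route also works: each branch of $S$ is bounded above (the maximum of $S$ is $\tfrac{2}{3}$, attained at $x=2$), at most four are active by Proposition~\ref{prop:z_spline_sparsity}, so $\sum_i S_i(x) \leq 4 \cdot \tfrac{2}{3} = \tfrac{8}{3} < 4$; this is looser but needs no identity, and I would mention it as a fallback.
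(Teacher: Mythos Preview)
Your proposal is correct. Your fallback argument---at most four active basis functions, each bounded above (you use the sharp maximum $\max S = 2/3$, the paper just says $S(x)<1$), hence $\sum_i |S_i(x)| < 4$---is exactly the paper's proof, only with a slightly tighter constant. Your primary route via the partition-of-unity identity $\sum_i S_i(x) = 1$ on $[0,1]$ is a genuinely different and stronger argument: it yields the exact value $\lVert\nabla_\theta f(x)\rVert_1 = 1$ rather than a strict upper bound of $4$, at the modest cost of either citing the standard B-spline fact or carrying out the four-branch telescoping check you outline. The paper's crude bound is all that is needed downstream (the later ABEL-Spline gradient bounds only require some constant), so the extra sharpness buys nothing for the paper's purposes, but it is the cleaner statement.
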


\begin{proof} Consider the components of the vector $\nabla_{\theta} f(x)$ obtained from the gradient operator, which is simply:
\begin{equation*}
 \frac{\partial}{\partial \theta_{i}} f(x) 
= \frac{\partial}{\partial \theta_{i}} 
\sum_{i=1}^{4z+3} \theta_{i} S_{i}(x) = S_{i}(x)   
\end{equation*}

\noindent
All basis functions $S_{i}(x)$ are bounded, since for any $x \in \mathbb{R}$, the activation function $S(x)$ is always less than $1$, as visualised in Figure~\ref{fig:cubic_spline_activation_function} and Figure~\ref{fig:fig_proof_properties_1_2}. It follows that:
\begin{equation*}
 \lVert \nabla_{\theta} f(x)\rVert_{1} 
= \sum_{i=1}^{4z+3}  \left| \frac{\partial f}{\partial \theta_{i}} (x) \right| 
= \sum_{i=1}^{4z+3}  \left| S_{i} (x) \right| 
< 4   
\end{equation*}

Thus, the parameter gradient is bounded.
\end{proof}
\noindent 



\begin{proposition}[Trainability of Cardinal B-splines]
\label{prop:z_spline_uniform_trainability}
Let $f(x)$ be a $z$-density B-spline function, defined on the domain $\left[0,1\right] \subset R$, with trainable parameters $\theta_{i}$. The gradient w.r.t. trainable parameters, $\nabla_{\theta} f(x)$, is non-zero:
\begin{equation*}
    \nabla_{\theta} f(\theta, x) \neq \Vec{0}, \; \forall x \in D(f)
\end{equation*}
\end{proposition}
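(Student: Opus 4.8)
The plan is to show that for every $x \in [0,1]$ at least one basis function $S_i(x)$ is strictly positive, since $\partial f/\partial\theta_i = S_i(x)$ and so a single nonzero component already forces $\nabla_\theta f(x) \neq \vec 0$. First I would recall from Definition~\ref{def:lambda_desnity_b_spline} that $S_i(x) = S\bigl((4z)x + (4-i)\bigr)$, so the basis functions are translates of the single activation $S$ in Equation~(\ref{eq:cubic_spline_activation}) by integer steps, with the argument of $S$ ranging over $[4-i, 4-i+4z]$ as $x$ ranges over $[0,1]$. The key elementary fact about $S$ is that it is \emph{strictly positive on the open interval $(0,4)$}: each of the four cubic pieces is easily checked to be positive on the interior of its subinterval (e.g. $\frac16 x^3 > 0$ on $(0,1)$, $\frac16(4-x)^3 > 0$ on $(3,4)$, and the two middle pieces are the positive "bump" pieces), and $S$ vanishes only at the endpoints $0$ and $4$ and outside $[0,4]$.

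Given this, I would argue as follows. Fix $x \in [0,1]$ and let $t = 4zx \in [0, 4z]$. We want an index $i \in \{1,\dots,4z+3\}$ with $S(t + 4 - i) > 0$, i.e. with $t + 4 - i \in (0,4)$, i.e. with $i \in (t, t+4)$. The open interval $(t, t+4)$ has length $4$, so it contains at least four consecutive integers — in fact exactly four when $t \notin \mathbb{Z}$ and three when $t \in \mathbb{Z}$ (consistent with Proposition~\ref{prop:z_spline_sparsity}). It remains to check that at least one such integer $i$ lies in the admissible range $\{1,\dots,4z+3\}$. Since $t \in [0,4z]$, the interval $(t,t+4) \subseteq (0, 4z+4)$, so any integer in it satisfies $1 \le i \le 4z+3$; and since the interval is nonempty and open of length $4$, it contains at least one integer. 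Hence there is an admissible $i$ with $S_i(x) > 0$, so $\nabla_\theta f(x)$ has a strictly positive component and is therefore nonzero. This holds for every $x\in D(f)$, and the gradient does not depend on the values $\theta_i$, so uniform trainability (Definition~\ref{def:uniform_trainability}) follows.

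The only mildly delicate point — and the step I expect to need the most care — is the verification that $S(x) > 0$ on all of $(0,4)$, because this is exactly what rules out a "dead zone" where every basis function simultaneously vanishes; the boundary/knot values are where one must be careful that the positivity of one piece picks up before another drops to zero. This is a finite routine check of four cubic polynomials (or one can cite the standard fact that the cardinal cubic B-spline is positive on the interior of its support), so it is not a real obstacle, merely the technical heart of the argument. Everything else is the counting argument above, which also neatly re-derives the sparsity bound of Proposition~\ref{prop:z_spline_sparsity} as a byproduct.
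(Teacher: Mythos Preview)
Your proposal is correct and follows essentially the same approach as the paper: both compute $\partial f/\partial\theta_i = S_i(x)$ and then argue that at least one (the paper says at least three) basis function is nonzero at every $x\in[0,1]$. Your version is considerably more detailed than the paper's, which simply asserts the ``at least three nonzero'' claim without the counting argument or the positivity check on $S$; your explicit interval analysis $(t,t+4)$ with $t=4zx$ is a welcome rigorous justification of what the paper leaves to visual inspection.
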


\begin{proof}
Consider the components of the vector $\nabla_{\theta} f(x)$ obtained from the gradient operator, which is simply 
\begin{equation*}
\frac{\partial}{\partial \theta_{i}} f(x) 
= \frac{\partial}{\partial \theta_{i}} 
\sum_{j=1}^{4z+3} \theta_{j} S_{j}(x) = S_{i}(x)    
\end{equation*}

\noindent
If $x \in \left[ 0, 1\right]$ is inside the support interval or at the boundary, then at least three basis functions are non-zero:
\begin{equation*}
    \nabla_{\theta} f(\theta, x) \neq \Vec{0}, \; \forall x \in D(f), \; 
    \forall \theta \in \Theta
\end{equation*}

Thus, $z$ density B-splines are uniformly trainable on the unit interval.
\end{proof}


\begin{proposition}[absolute distal orthogonality]
\label{prop:z_spline_distal_orthogonality}  
Let $f(x)$ be a $z$-density B-spline defined on $[0,1]$, with partition number $z$ and trainable parameters $\theta$, then $\forall \; x,y \in D(f)$:
\begin{equation*}
 |x - y| > z^{-1} 
\implies   \nabla_{\theta} f(x) \cdot \nabla_{\theta} f(y) = 0  
\end{equation*}
\end{proposition}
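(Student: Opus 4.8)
The plan is to show that if $|x-y| > z^{-1}$, then the set of basis functions active at $x$ and the set of basis functions active at $y$ are disjoint, which immediately forces the dot product of the gradient vectors to vanish. As established in Proposition~\ref{prop:z_spline_sparsity}, the $i$-th component of $\nabla_{\theta} f(x)$ is exactly $S_i(x) = S((4z)x + (4-i))$, so $\nabla_{\theta} f(x) \cdot \nabla_{\theta} f(y) = \sum_{i=1}^{4z+3} S_i(x) S_i(y)$. A single term is non-zero only if both $S_i(x) \neq 0$ and $S_i(y) \neq 0$, so it suffices to prove that no index $i$ can have both factors non-zero when $|x-y| > z^{-1}$.

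First I would characterise the support of each basis function. Since $S(t)$ is non-zero only for $t \in (0,4)$, the function $S_i(x) = S((4z)x + 4 - i)$ is non-zero precisely when $0 < (4z)x + 4 - i < 4$, i.e. when $x \in \left( \frac{i-4}{4z}, \frac{i}{4z} \right)$. Thus each basis function has support an open interval of length exactly $\frac{4}{4z} = z^{-1}$. The key geometric observation is that if $S_i(x) \neq 0$ and $S_i(y) \neq 0$, then both $x$ and $y$ lie in the same interval of length $z^{-1}$, and hence $|x - y| < z^{-1}$ (strictly, because the interval is open). Taking the contrapositive: $|x - y| \geq z^{-1}$ implies $x$ and $y$ cannot both lie in $\operatorname{supp} S_i$ for any $i$, so every term $S_i(x) S_i(y)$ in the sum is zero, giving $\nabla_{\theta} f(x) \cdot \nabla_{\theta} f(y) = 0$. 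Under the stated hypothesis $|x-y| > z^{-1}$ this is certainly satisfied. This can be illustrated with the same figure used for the sparsity proof, since the overlap structure of consecutive basis functions is exactly what is at issue.

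The main obstacle — though it is minor — is bookkeeping at the endpoints and the question of open versus closed supports. One must be careful that the support of $S_i$ is genuinely the open interval $\left(\frac{i-4}{4z}, \frac{i}{4z}\right)$ (with $S$ vanishing at $0$ and $4$), so that two points in the closure of a single support can be at distance exactly $z^{-1}$ but never more; the strict inequality in the hypothesis makes this a non-issue, but it is worth noting that even the weaker hypothesis $|x-y| \geq z^{-1}$ would suffice, with one edge case when $x$ and $y$ are the two endpoints of a support interval, at which points $S_i$ itself is zero. A second small point is that the indices $i$ range over $1,\dots,4z+3$ and some basis functions have support partially outside $[0,1]$; this does not affect the argument, since the support characterisation holds for all $i$ regardless, and we only evaluate at $x,y \in [0,1] = D(f)$. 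I would therefore structure the proof as: (1) recall $\partial f/\partial\theta_i = S_i(x)$; (2) compute $\operatorname{supp} S_i = \left(\frac{i-4}{4z}, \frac{i}{4z}\right)$, an interval of length $z^{-1}$; (3) observe that a common non-zero index would force $x,y$ into one such interval, contradicting $|x-y| > z^{-1}$; (4) conclude the dot product is a sum of zeros.
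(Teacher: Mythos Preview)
Your proposal is correct and follows essentially the same approach as the paper: both argue that each basis function has support an interval of length $z^{-1}$ (the paper phrases this as ``four sub-intervals of length $\tfrac{1}{4z}$''), so points more than $z^{-1}$ apart cannot share any active basis function and hence the gradient dot product vanishes term by term. Your version is in fact more explicit than the paper's---you compute $\operatorname{supp} S_i = \left(\tfrac{i-4}{4z}, \tfrac{i}{4z}\right)$ directly from the formula and handle the open-support edge cases, whereas the paper leans on a visual figure and the sub-interval count---but the underlying argument is identical.
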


\begin{proof}
Let $f(x)$ be a $z$-density B-spline function, defined on the domain $[0,1] \subset R$, with trainable parameters $\theta_{i}$ and partition number $z$. Let $x,y \in D(f)$, and assume that:
$$
 |x-y| > z^{-1}
$$

The $z$-density B-spline $f(x)$ partitions the unit interval into $4z$ sub-intervals of equal length ($\frac{1}{4z}$). A visual aid is provided in Figure~\ref{fig:fig_proof_properties3}. Two distant points are shown in orange. The active or non-zero basis functions associated with each point are shown in dark blue. The inactive or zero basis functions are shown in light blue.

If the distance between $x$ and $y$ is larger than \textit{four} sub-intervals, then there are no basis functions in common. Therefore, if $ |x - y| > 4(\frac{1}{4z}) = z^{-1}$, then the gradients must be orthogonal with no overlapping parameters. Thus, $\nabla_{\theta} f(x) \cdot \nabla_{\theta} f(y) = 0$
\end{proof}

\begin{figure}[!htb]
\centering
\includegraphics[width=0.7\linewidth]{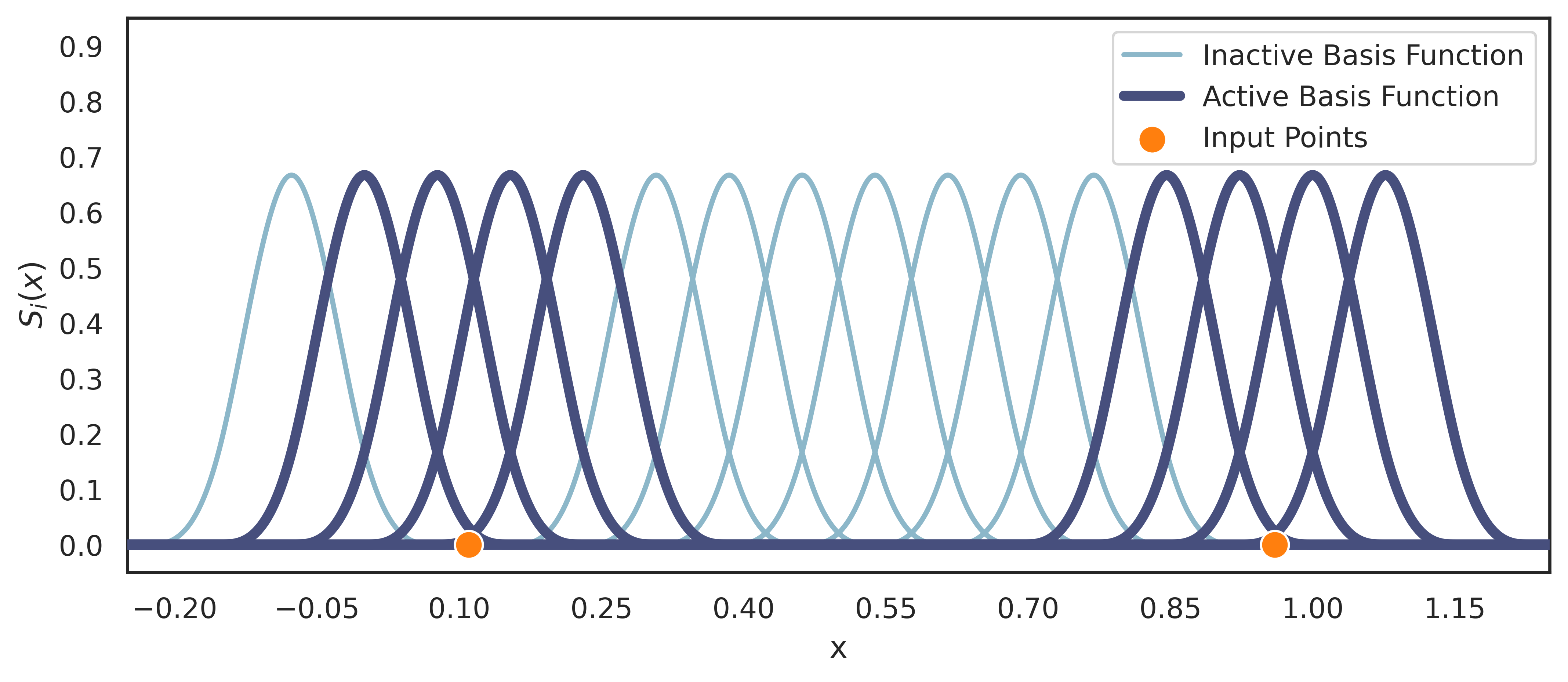} 
\caption{Visual proof of distal orthogonality for cardinal B-splines.}
\label{fig:fig_proof_properties3}
\end{figure}

\subsection{Spline ANN}

Incorporating splines into ANNs has been studied to some extent. \citet{lane1991multi} published a paper at the third NIPS (currently NeurIPS) conference that employed B-splines to model receptive fields in MLP models. Some investigations used uniformly partitioned splines to implement adaptive and trainable activation functions named spline activation functions (SAFs) in ANNs~\citep{scardapane2017learning_workshop}. Studies have also been conducted on trainable or uneven partition splines that allow the partitions (or knots) to be trained with gradient descent~\citep{douzette2017b}. This study uses single-variable splines to develop multi-variable spline architectures that are \textbf{min-distal orthogonal}. 

The $z$-Spline ANNs (or simply Spline ANNs) in this study are sums of single-variable $z$-density B-spline functions and can adequately approximate sum-decomposable target functions. General additive models approximate multi-variable functions as sums of single-variable functions and are often used in statistics~\citep{Wood_2004, hastie2017generalized, molnar2020interpretable}. Note that the same partition number is used for each $z$-density B-spline in a $z$-Spline ANN.

\begin{definition}[$z$-Spline ANN]
\label{def:lambda_sam}
A $z$-Spline ANN model $f(x)$, defined on $\left[0,1\right]^{n}$, with trainable parameters $\theta$ and partition number $z \in \mathbb{N}$, is a sum of $n \in \mathbb{N}$ single-variable $z$-density B-splines in each variable:
\begin{equation*} 
\begin{split}
f(x)  
= \sum^{n}_{j=1} f_{j}( x_{j} ) 
= \sum^{n}_{j=1} \sum_{i=1}^{4z+3}   \theta_{i,j} S_{i,j}(  x_{j} )
= \sum^{n}_{j=1} \sum_{i=1}^{4z+3}   \theta_{i,j} S((4z) x_{j} + (4 - i))
\end{split}
\end{equation*}
\end{definition}

\noindent
where $\theta_{i,j} $ corresponds to trainable parameters or coefficients, and $S_{i,j}(  x_{j} )$ denotes the fixed basis functions for cardinal cubic B-splines. $S(x)$ is the cubic spline activation function. The $z$-Spline ANNs are uniformly trainable and min-distal orthogonal ($z\geq\delta^{-1}$) models with sparse and bounded parameter gradients. See Section~\ref{sec:properties_of_cardinal_splines} for explanations and proofs of the properties of cardinal cubic B-splines, and Appendix~\ref{sec:spline_additive_models} for the corresponding proofs for $z$-Spline ANNs. 

\begin{remark}
    $z$-Spline ANNs have linear time and space complexity in the input dimension. Optimising $z$-Spline ANNs parameters with a convex loss function yields a convex optimisation problem with a global optimum that can be reached with gradient descent optimisation methods. 
\end{remark}

\clearpage
$z$-Spline ANNs can be helpful in specific applications. If the target function is a sum of single-variable functions, then $z$-Spline ANNs would be ideal, but many problems have more complicated target functions. Unfortunately, $z$-Spline ANNs are not universal function approximators, but $z$-Spline ANNs are extended to provable universal function approximators defined below in Section~\ref{sec:abel_spline_main_body}, with accompanying proofs in Appendix~\ref{sec:universal_function_approximation} and Appendix~\ref{sec:properties_of_abel_spline}.

\subsection{ABEL-Splines}
\label{sec:abel_spline_main_body}

Antisymmetric bounded exponential layer spline ANNs (ABEL-Splines) can approximate any continuous function.\footnote{ABEL-Splines are very \textit{able}.} ABEL-Splines are designed to inherit properties from $z$-Spline ANNs that are atypical of most universal function approximators:
\begin{enumerate}[itemsep=0.1mm,parsep=0.1cm]
    \item \textbf{Sparse activity}---most neural units are zero and inactive.
    \item \textbf{Bounded parameter gradients} regardless of model size.
    \item \textbf{Uniformly trainable} anywhere in the domain.
    \item \textbf{Min-distal orthogonality} with $z\geq\delta^{-1}$.
\end{enumerate}

\begin{definition}[ABEL-Spline]
\label{def:lambda_abel_spline}
Let $A(x)$ be an ABEL-Spline function, defined on the domain $x \in \left[ 0,1 \right]^{n}$, with trainable parameters $\theta \in \Theta$ and partition number $z \in \mathbb{N}$. Then, there exists $\mathcal{K} \in \mathbb{N}$, and multi-variable $z$-Spline ANN functions $F(x),G_{k}(x), H_{k}(x)$ such that:
\begin{equation*} 
\begin{split}
A(x) 
:= F(x) + 
\sum_{k = 1}^{\mathcal{K}} \frac{1}{k^{2}} \bigg( \exp(G_{k}(x)) 
-  \exp(H_{k}(x)) \bigg)
\end{split}
\end{equation*}
\end{definition}

\noindent
ABEL-Splines can equivalently be given in terms of single-variable $z$-density B-spline functions $f_{j}(x_{j}),g_{k,j}(x_{j})$, and $h_{k,j}(x_{j})$ such that:
\begin{equation*} 
\begin{split}
A(x) 
    = \sum_{j=1}^{n} f_{j}(x_{j}) + 
\sum_{k = 1}^{\mathcal{K}} \frac{1}{k^{2}} \bigg( \exp( \Sigma_{j=1}^{n} g_{k,j}(x_{j})) 
-  \exp( \Sigma_{j=1}^{n} h_{k,j}(x_{j})) \bigg)
\end{split}
\end{equation*}

\begin{remark}
    Multi-dimensional outputs are treated as separate scalar functions, approximated with the outlined schema, and are independent of each other. 
\end{remark}

The absolutely convergent series of scale factors $k^{-2}$ was chosen for numerical stability and to ensure the model is absolutely convergent as $\mathcal{K} \rightarrow \infty$. A series $\sum_{i}a_{i}$ is absolutely convergent if $\sum_{i}|a_{i}|$ converges to a well-defined limit~\citep{rudin1976principles}. Another feature is that the series of scale factors also breaks the symmetry that would otherwise exist between terms. The `residual' or direct $F(x)$ $z$-Spline ANN is included to ensure that ABEL-Splines can easily represent and learn any sum-decomposable function.

The relevant proofs of universal function approximation can be found in Appendix~\ref{sec:universal_function_approximation} and Appendix~\ref{sec:properties_of_abel_spline}, which gives the proofs for the other listed properties of ABEL-Splines.

\section{Experimentation}

Experimentation is limited to a tractable subset to demonstrate mathematically proven properties. Model perturbation and distal interference is estimated using randomly generated training data (Section~\ref{sec:model_perturbation_distal_interference}). In Section~\ref{sec:two_dimensional_demonstration}, a 2D regression problem and a synthetic sequential learning problem (without and with pseudo-rehearsal) demonstrate the limits of min-distal orthogonal models. The relevant code is available in a public \href{https://github.com/hpdeventer/catastrophic-interference-2023-2024}{GitHub repository}.\footnote{\url{https://github.com/hpdeventer/catastrophic-interference-2023-2024}}

\subsection{Considered Models}
\label{subsec_considered_models_2d_demo}

The five models considered for experimentation have two-dimensional input. Low-dimensional models are necessary since a direct comparison is made with a lookup table model.

The two ReLU ANN models are included for the sake of comparison. The deep- and wide ReLU ANNs have a shift and scaling input layer to map values from $[0,1] \mapsto [-1,+1]$, which is close to conventional approaches. Trainable parameters are initialised with `GlorotUniform.' The wide ReLU ANN model has $1000$ hidden ReLU units in one hidden layer. The deep ReLU ANN model has $8$ hidden layers with $16$ ReLU units in each hidden layer. Dense connections are used throughout.
 
The three partition-based models with a chosen partition number $z=20$ include a max-distal orthogonal lookup table model, a Spline ANN ($z$-Spline ANN), and an ABEL-Spline. The choice of partition number $z=20$ is for visual inspection of the lookup table model. Smaller partition numbers make it difficult to discern visually if the lookup table can learn the target function. The partition-based models have trainable parameters initialised with `RandomUniform.' The ABEL-Spline model has $\mathcal{K}=6$ positive and negative exponentials each. There are five models used throughout experimentation:
\begin{enumerate}
    \item Wide ReLU ANN
    \item Deep ReLU ANN
    \item ABEL-Spline ($z = 20$)
    \item $z$-Spline ANN ($z = 20$)
    \item Lookup Table ($z = 20$)
\end{enumerate}

\subsection{Model Perturbation and Distal Interference}
\label{sec:model_perturbation_distal_interference}

Mathematically proven properties are tested numerically using Monte Carlo integration. 100 independent trials are performed. In each trial, one training data point $v \in D$ is sampled uniformly from the domain $D=[ 0,1]^{n}$, and a target value is sampled from a normal distribution. Each randomly initialised model $f_{t}$ is trained using the Adam optimiser (default learning rate = 0.001) for one epoch, yielding an updated model $f_{t+1}$. 

The model perturbation $\lVert f_{t+1} - f_{t} \rVert_{1} = \int_{D} |f_{t+1}(x) - f_{t}(x)| \,dx$ is estimated numerically using Monte Carlo integration and 100,000 uniformly sampled points from the domain $D=[ 0,1]^{n}$. The mean model perturbation and standard deviation over the 100 trials are presented in Table~\ref{tab:model_perturbation}. The wide ReLU ANN exhibited the largest model perturbation. The ReLU models' perturbation is orders of magnitude larger than the partition-based models. The lookup table exhibits the smallest model perturbation, followed by the Spline ANN and ABEL-Spline models with increasing model perturbation. 
\begin{table}[!htb]
\centering
\small
\begin{tabular}{|l|c|}
\hline
\textbf{Models} & Model Perturbation \\
\hline
Wide ReLU ANN & 2.16e-2 ($\pm$ 7.78e-4) \\
Deep ReLU ANN & 3.34e-3 ($\pm$ 5.11e-4) \\
Spline ANN (z=20) & 9.78e-5 ($\pm$ 4.17e-6) \\
ABEL-Spline (z=20) & 3.89e-4 ($\pm$ 1.70e-5) \\
Lookup Table (z=20) & 2.50e-6 ($\pm$ 1.37e-7) \\
\hline
\end{tabular}
\caption{Mean and standard deviation of model perturbation.}
\label{tab:model_perturbation}
\end{table}

The distal interference $\int_{D_{v}} |f_{t+1}(x) - f_{t}(x)| \,dx$ is estimated numerically using Monte Carlo integration over the set of distal points $D_{v}=\{ x \mid d(x,v)>\delta, \; \forall x,v \in D \}$ with 100,000 uniformly sampled points from the domain $D=[ 0,1]^{n}$. The mean distal interference and standard deviation over the 100 trials are presented in Table~\ref{tab:max_distal_interference} and Table~\ref{tab:min_distal_interference}. The mean max-distal interference in Table~\ref{tab:max_distal_interference} shows that the lookup table has zero distal interference for $\max_{i} (|x_{i} - v_{i})| > z^{-1}=0.05$, but not for a smaller threshold of $0.01$.   
\begin{table}[!htb]
\centering
\small
\begin{tabular}{|l|c|c|c|}
\hline
\textbf{Models} & $\max_{i} (|x_{i} - v_{i})| > 0.1$ & $\max_{i} (|x_{i} - v_{i})| > 0.05$ & $\max_{i} (|x_{i} - v_{i})| > 0.01$\\
\hline
Wide ReLU ANN & 1.98e-2 ($\pm$ 7.00e-4) & 2.11e-2 ($\pm$ 7.29e-4) & 2.16e-2 ($\pm$ 7.76e-4) \\
Deep ReLU ANN & 3.16e-3 ($\pm$ 4.73e-4) & 3.29e-3 ($\pm$ 5.00e-4) & 3.33e-3 ($\pm$ 5.11e-4) \\
Spline ANN (z=20) & 7.91e-5 ($\pm$ 2.90e-6) & 8.83e-5 ($\pm$ 3.45e-6) & 9.70e-5 ($\pm$ 4.15e-6) \\
ABEL-Spline (z=20) & 3.14e-4 ($\pm$ 1.18e-5) & 3.51e-4 ($\pm$ 1.41e-5) & 3.85e-4 ($\pm$ 1.69e-5) \\
Lookup Table (z=20) & \textbf{0.0} ($\pm$ \textbf{0.0}) & \textbf{0.0} ($\pm$ \textbf{0.0}) & 2.16e-6 ($\pm$ 1.61e-7) \\
\hline
\end{tabular}
\caption{Mean and standard deviation of max-distal interference.}
\label{tab:max_distal_interference}
\end{table}

The mean min-distal interference in Table~\ref{tab:min_distal_interference} shows that the Spline ANN, ABEL-Spline, and lookup table models have zero min-distal interference when $\min_{i} (|x_{i} - v_{i})| > z^{-1}=0.05$. The numerical results support the mathematical analysis and proof of distal orthogonality.   
\begin{table}[!htb]
\centering
\small
\begin{tabular}{|l|c|c|c|}
\hline
\textbf{Models} & $\min_{i} (|x_{i} - v_{i})| > 0.1$ & $\min_{i} (|x_{i} - v_{i})| > 0.05$ & $\min_{i} (|x_{i} - v_{i})| > 0.01$\\
\hline
Wide ReLU ANN & 1.08e-2 ($\pm$ 1.37e-3) & 1.56e-2 ($\pm$ 8.82e-4) & 2.03e-2 ($\pm$ 6.84e-4) \\
Deep ReLU ANN & 2.01e-3 ($\pm$ 3.16e-4) & 2.61e-3 ($\pm$ 3.93e-4) & 3.18e-3 ($\pm$ 4.84e-4) \\
Spline ANN (z=20) & \textbf{0.0} ($\pm$ \textbf{0.0}) & \textbf{0.0} ($\pm$ \textbf{0.0}) & 5.70e-5 ($\pm$ 3.62e-6) \\
ABEL-Spline (z=20) & \textbf{0.0} ($\pm$ \textbf{0.0}) & \textbf{0.0} ($\pm$ \textbf{0.0}) & 2.26e-4 ($\pm$ 1.47e-5) \\
Lookup Table (z=20) & \textbf{0.0} ($\pm$ \textbf{0.0}) & \textbf{0.0} ($\pm$ \textbf{0.0}) & 1.00e-6 ($\pm$ 1.56e-7) \\
\hline
\end{tabular}
\caption{Mean and standard deviation of min-distal interference.}
\label{tab:min_distal_interference}
\end{table}

\subsection{Two-Dimensional Demonstration}
\label{sec:two_dimensional_demonstration}

This section discusses the experiments on a two-dimensional regression task designed to illustrate universal function approximation and continual learning capabilities of the ABEL-Spline architecture. Section~\ref{subsec_simple_2d_regression} outlines the 2D regression problem. A synthetic 2D continual learning problem is discussed in Section~\ref{subsec_sequential_learning_with_catastrophic_forgetting}. Section~\ref{subsec_sequential_learning_with_pseudo_rehearsal} demonstrates the effect of pseudo-rehearsal in a 2D setting.

\subsubsection{Regression Task}
\label{subsec_simple_2d_regression}
In this experiment, the target function is chosen to demonstrate and visualise the limited expressive power of $z$-Spline ANNs in comparison to provable universal function approximators. The target function is a product of two oscillating single-variable functions defined on $[0,1]^{2} \subset \mathbb{R}^{2}$:
\begin{equation}
    \label{eqn:2d_regression_experiment}
    y(x) = \sin(4 \pi x_{1}) \cdot \sin (4 \pi x_{2})
\end{equation}
The target function is shown in Figure~\ref{fig:results:2d_demo_target_function}, where $x$-axis and $y$-axis correspond to the two input dimensions, and the function value is depicted as a colour scale. The target function resembles a smoothed checkered pattern similar to the XOR problem. 
\begin{figure}[!ht]
    \centering
    \includegraphics[width=0.4\linewidth]{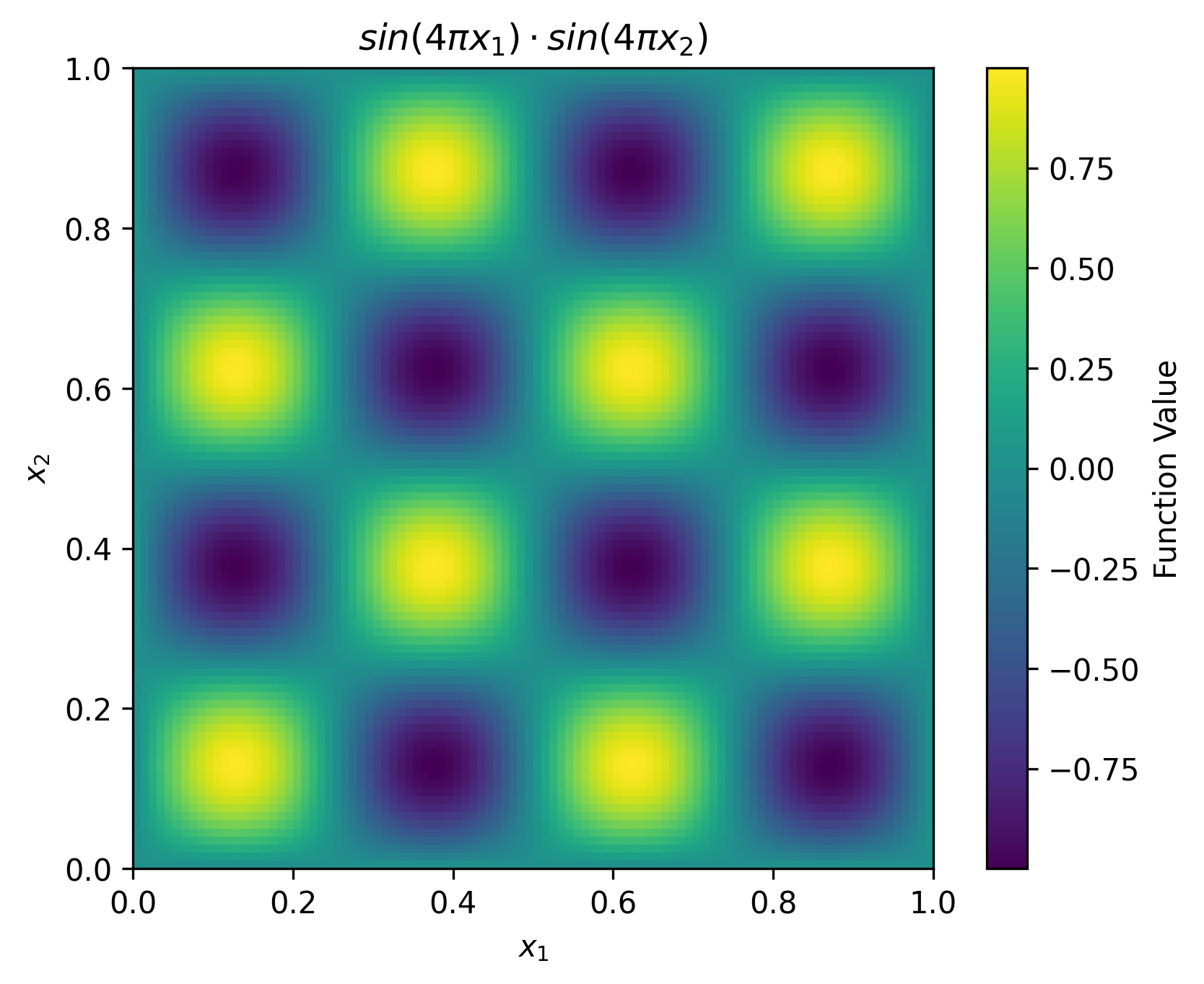}
    \caption[Target function for 2D regression experiment.]{The 2D target function.}
    \label{fig:results:2d_demo_target_function}
\end{figure}

$16000$ training points are sampled from a uniform distribution over $[0,1]^{2} \subset \mathbb{R}^{2}$, and the target values are calculated using the target function defined in Equation~(\ref{eqn:2d_regression_experiment}). Each model is trained with the Adam optimiser for $200$ epochs with a batch size of $100$ using MAE as a loss function. The model predictions after training are visualised in Figure~\ref{fig:results:2d_demo_regression_predictions}. All the models except the $z$-Spline ANN model could learn the target function. The piece-wise defined lookup table model has discontinuities between `flat' regions associated with their own parameters, leading to a pixel-like effect. The deep and wide ReLU ANNs and the ABEL-Spline models learn the target function without aberration. 
This result demonstrates the limited expressive power of $z$-Spline ANNs and the benefits of developing ABEL-Splines. ABEL-Splines have similar performance to ReLU ANNs in this specific regression problem.

\begin{figure}[!htb]
    \centering
    \includegraphics[width=0.99\linewidth]{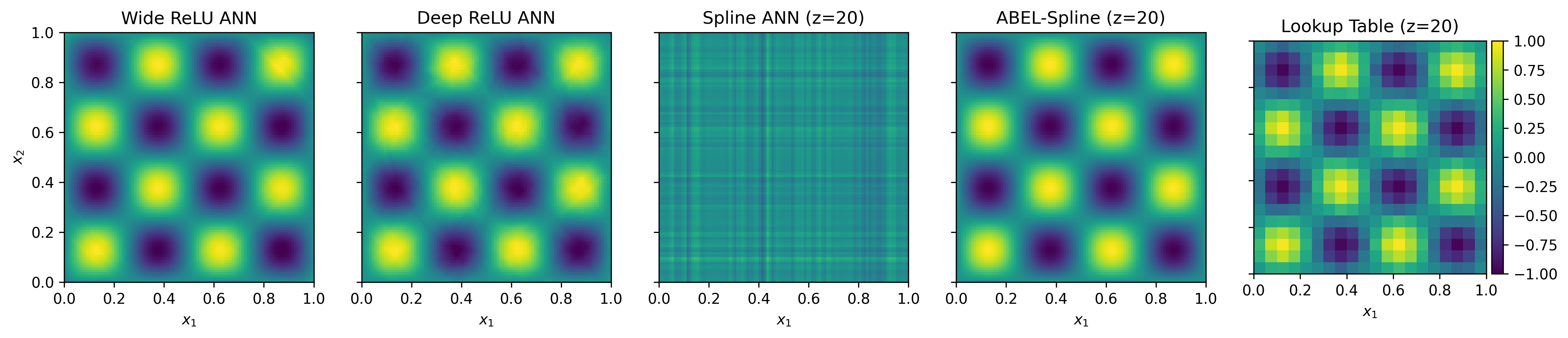}
    \caption[Model outputs for 2D regression.]{Model outputs after training on a 2D regression problem.}
    \label{fig:results:2d_demo_regression_predictions}
\end{figure}

\clearpage
\subsubsection{Sequential Learning and Catastrophic Interference}
\label{subsec_sequential_learning_with_catastrophic_forgetting}

This experiment simulates a sequential or continual learning problem that allows visualising the model outputs. The target function is the same as in Equation~(\ref{eqn:2d_regression_experiment}) and Figure~\ref{fig:results:2d_demo_target_function}. The difference is that the data are not sampled uniformly over the entire domain. Instead, the domain is partitioned into $16$ equal-sized regions, shown in Figure~\ref{fig:results:2d_demo_partitions_show}. 

The models are trained on $1000$ data points sampled inside each partition. After training for $200$ epochs on one partition using Adam and batch size of $100$, all the models are trained on the second partition and so forth. There are $16000$ data points sampled from the target function over all partitions combined. 

Figure~\ref{fig:results:2d_demo_partitions_show} shows the different partitions and the order in which the partitions are sequentially learned. Figure~\ref{fig:results:2d_demo_sequential_learning} shows the model outputs after training on all partitions in the shown (randomly sampled) order. The deleterious effect of distal interference and catastrophic interference is evident. 
\begin{figure}[!ht]
    \centering
    \includegraphics[width=0.6\linewidth]{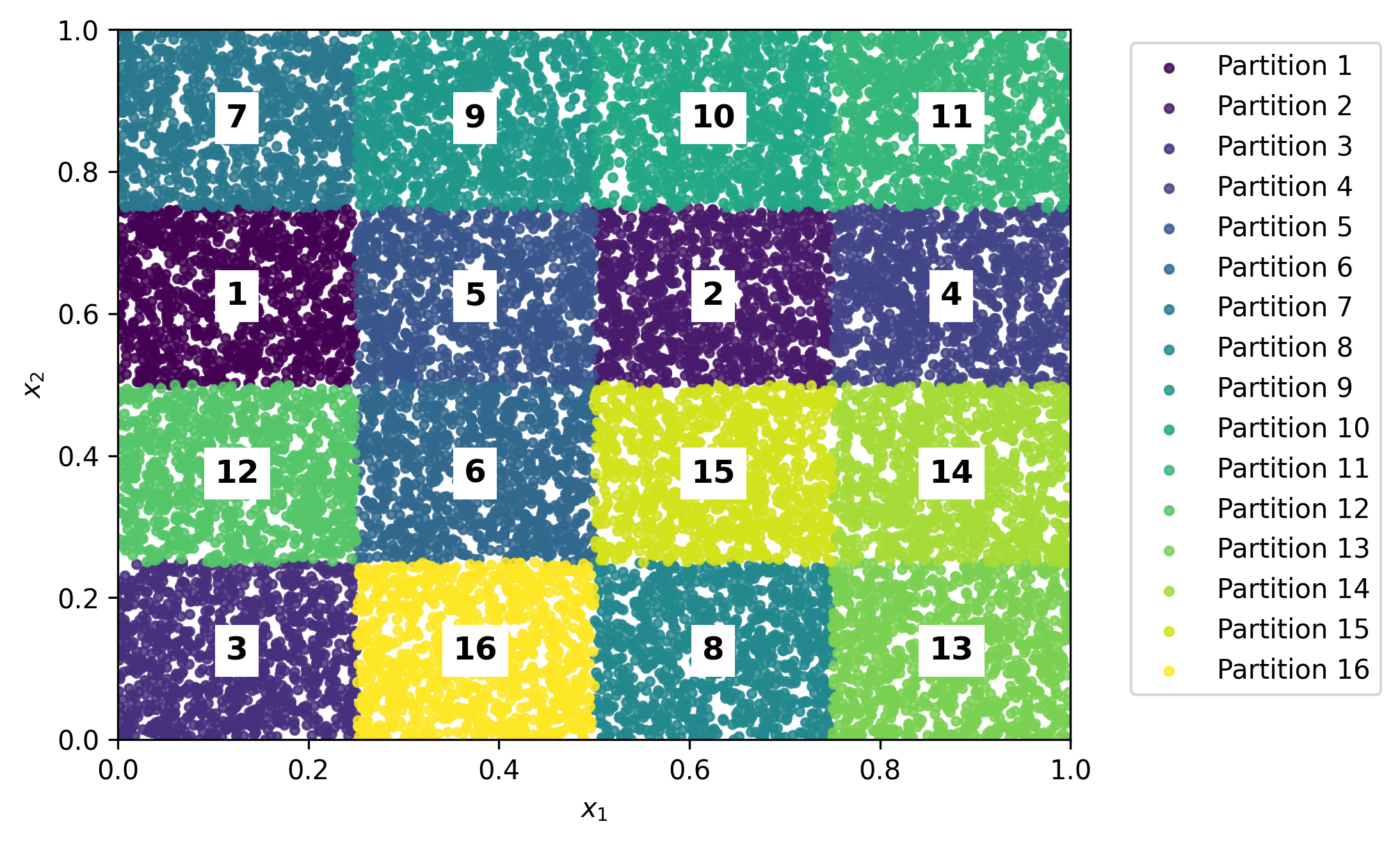}
    \caption[Partitions for 2D continual learning problem.]{Partitions and order for sequential continual learning problem.}
    \label{fig:results:2d_demo_partitions_show}
\end{figure}

Only the lookup table model has a nearly identical output, comparing Figure~\ref{fig:results:2d_demo_sequential_learning} and Figure~\ref{fig:results:2d_demo_regression_predictions} against one another, since the lookup table is a max-distal orthogonal model. All other models exhibit catastrophic interference. The ReLU and ABEL-Spline models captured the target values of the last two or so partitions. 

\begin{figure}[!ht]
    \centering
    \includegraphics[width=0.99\linewidth]{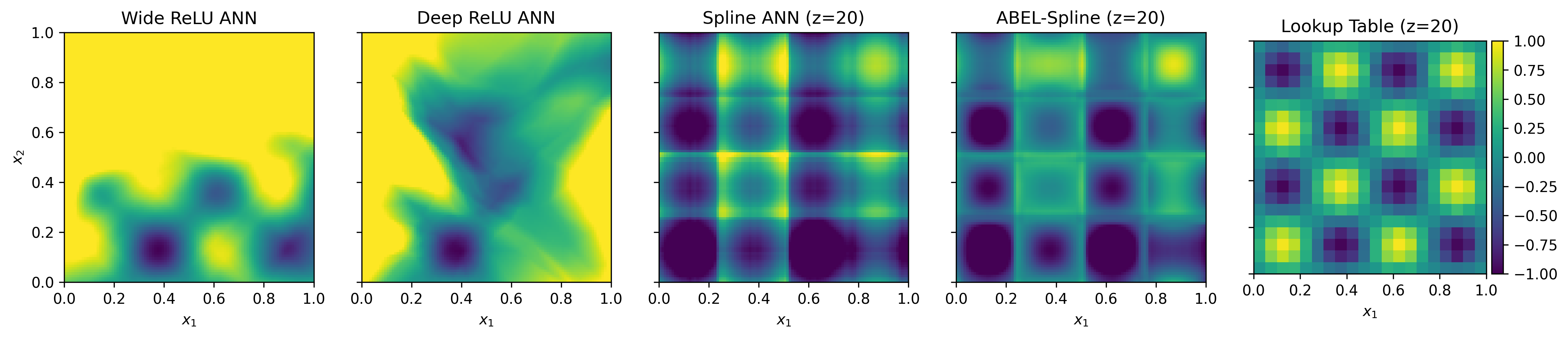}
    \caption[Model outputs for 2D continual learning problem.]{Model outputs after training on a 2D continual learning problem.}
    \label{fig:results:2d_demo_sequential_learning}
\end{figure}

\subsubsection{Sequential Learning and Pseudo-Rehearsal}
\label{subsec_sequential_learning_with_pseudo_rehearsal}

This experiment extends the experiment discussed in Section~\ref{subsec_sequential_learning_with_catastrophic_forgetting}. The critical difference is that the training data for each model is augmented with pseudo-rehearsal. Each model $f_{t}$ is randomly initialised before training on the first task. The 1000 training data points $(x,y(x))$ from each partition are combined with 1000 pseudo-rehearsal input points $u$ that are sampled from a uniform distribution over $[0,1]^{2} \subset \mathbb{R}^{2}$ and target values $f_{t}(u)$. In other words, data sampled from the target function $(x,y(x))$ for each partition is combined with data sampled from the model itself $(u,f_{t}(u))$ over the entire domain to train $f_{t}$ and obtain $f_{t+1}$ which is trained on the next partition and so forth. This training augmentation allows the capable models to learn the target function values inside each partition while retaining their values in other regions. 

The outputs of the models after training with pseudo-rehearsal are shown in Figure~\ref{fig:results:2d_demo_pseudo_rehearsal_continual_learning}. The ReLU ANNs and ABEL-Spline can learn the target function sequentially with pseudo-rehearsal. The $z$-Spline ANN is still incapable of learning the target function.

\begin{figure}[!ht]
    \centering
    \includegraphics[width=0.99\linewidth]{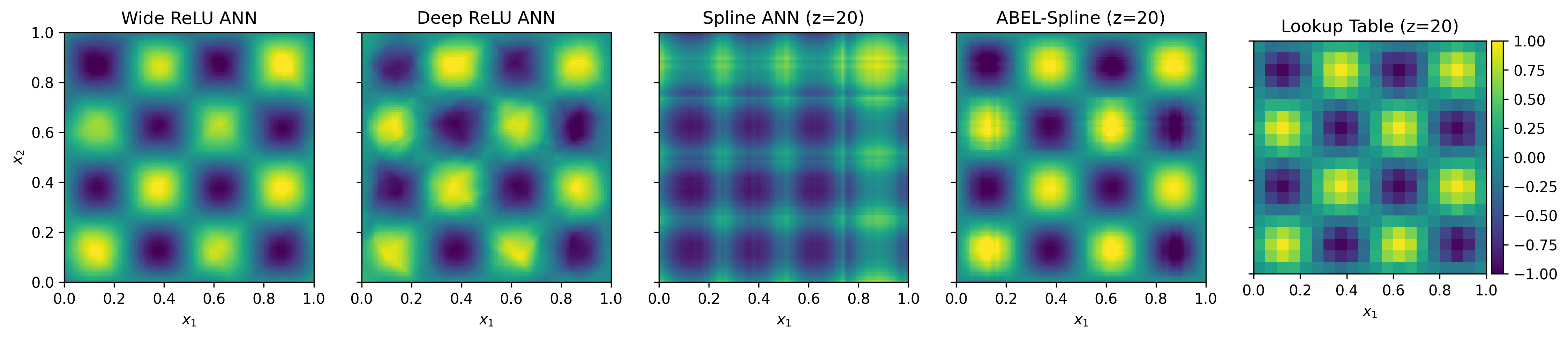}
    \caption[Model outputs for 2D continual learning problem with pseudo-rehearsal.]{Outputs after training on 2D continual learning problem with pseudo-rehearsal.}
    \label{fig:results:2d_demo_pseudo_rehearsal_continual_learning}
\end{figure}

Pseudo-rehearsal is adequate for continual learning in a low-dimensional setting. The $z$-Spline ANN did not benefit from pseudo-rehearsal because it lacks the expressive power to model the target function. Pseudo-rehearsal cannot increase the representation ability of a model. 

ABEL-Splines can learn sequentially with pseudo-rehearsal in a low-dimensional setting, but ABEL-Splines cannot learn sequentially in all circumstances. We conclude that in cases where min-distal orthogonality fails, one must use augmentation techniques.

\section{Conclusions}

It is known that overlapping representations in differentiable models such as ANNs cause catastrophic interference. This study contributed with an additional refinement: Distant (and unrelated) training inputs with overlapping representations lead to distal interference, which causes catastrophic interference over sequences of tasks. The importance of distance was not thoroughly considered in the existing literature on catastrophic interference. This study introduced the concepts of distal interference and distal orthogonality to analyse the geometry of differentiable models and their susceptibility to catastrophic interference.

\newpage
We have proven that if a model is uniformly trainable and max-distal orthogonal, then it must have an exponentially large parameter space. The contra-positive is that non-exponentially large models do not possess max-distal orthogonality or are not trainable over their entire domain. The logical implication is that polynomial complexity models are not uniformly trainable over their domain or are not max-distal orthogonal models. Training a polynomial complexity model on data in one part of the domain may perturb the model on distant points in the domain. This distal interference can cause catastrophic interference over sequentially learnt tasks. This finding undermines the potential for polynomial complexity models to learn sequentially without catastrophic interference.

The `stability-plasticity' spectrum that had been used to describe the memory retention of lookup tables as compared to the adaptability of ANNs is due to the different geometry and computational complexity of different kinds of models. Lookup tables are max-distal orthogonal and thus immune to distal and catastrophic interference. 

The weaker non-negative dissimilarity measure $ \min_{i} (|x_{i} - v_{i}|) > \delta$ was used to design a polynomial complexity model with min-distal orthogonality. The ABEL-Spline architecture was designed from first principles for desirable properties such as bounded gradients, uniform trainability, and min-distal orthogonality. Antisymmetric exponentials were used to prove the universal function approximation ability of the ABEL-Spline architecture. An implementation of the architecture was developed and evaluated experimentally. The results showed mixed performance when compared with ReLU models. The results suggest that min-distal orthogonality is too weak to substantially improve model-only continual learning, especially on uniformly sampled data that do not satisfy the conditions for min-distal orthogonality.

In summary, min-distal orthogonality is insufficient, and max-distal orthogonality is too computationally expensive for practical continual learning. Developing and investigating models with properties between the two extremes might be worthwhile. One could potentially use small or low-dimensional ($n>1$) max-distal orthogonal models as components in more elaborate multi-variable models over many variables. One need only replace $z$-Spline ANNs with sums of low-dimensional max-distal orthogonal models, for example:
$$
F(x) = g(x_{1},x_{2}) + h(x_{3},x_{4},x_{5})
$$
where $g$ and $h$ are two- or three-variable max-distal orthogonal models. One can easily extend such models to universal function approximators using antisymmetric bounded exponential layers similar to the ABEL-Spline architecture constructed for this study. Evaluating the effectiveness of such architectures is left for future work. 

It is unclear if polynomial complexity models can provide sufficient guarantees regarding their geometry to avoid distal and catastrophic interference to enable continual learning. Training augmentation or data augmentation seems necessary for practical continual learning. Based on our findings and prior work, this study conjectures that continual learning with a polynomial complexity model trained with gradient descent requires augmentation of data or training procedures.



\clearpage
\acks{Supported by the National Research Foundation of South Africa Thuthuka Grant Number 138194/0316590115. Computing resources provided by the South African Centre for High-Performance Computing (CHPC).}



\appendix

\section{Fundamental Definitions and Theorems}
\label{sec:fundamental_def_and_theorems}
This section lists standard definitions and theorems that are necessary to prove universal function approximation and can be found in analysis courses and textbooks~\citep{strichartz2000way}.  

\begin{definition}[metric space]
\label{def:matric_space}
    A metric space consists of a set $X$ and a real-valued function $d:X\times X\rightarrow\mathbb{R}$ called the \textit{metric} or \textit{distance function}, which satisfies the following:
\begin{enumerate}[itemsep=0.1mm,parsep=0.1cm]
    \item Non-negativity: $\forall x,y\in X,\ d(x,y)\geq 0$
    \item Identity of indiscernibles: $\forall x,y\in X,\ d(x,y)=0 \iff x=y$
    \item Symmetry: $\forall x,y\in X,\ d(x,y)=d(y,x)$
    \item Triangle inequality: $\forall x,y,z\in X,\ d(x,z)\leq d(x,y)+d(y,z)$
\end{enumerate}
\end{definition}

\begin{remark}
Assume the supremum norm $\lVert f \rVert_{\text{sup}} = \sup_{x \in X} |f(x)|$ for function spaces, and $d(f,g)=\sup_{x \in X} |f(x)-g(x)|$. 
\end{remark}

\begin{definition}
Let  $(X,d)$ be a metric space. Then $C_{b}(X)$ is the normed subspace of $B(X)$ comprising all continuous, bounded functions $f: X \rightarrow \mathbb{R}$.
\end{definition}

\begin{definition}
A subset $S \subseteq C_{b}(X) $ is called an \textbf{algebra} if
\begin{enumerate}
    \item it is a linear subspace of $S$, closed under addition and scalar multiplication (by arbitrary scalars $\alpha \in  \mathbb{R}$); and 
    \item it is closed under point-wise multiplication, for all $f,g \in S$, we have $fg \in S$.
\end{enumerate}
\end{definition}

\begin{remark}
A unital sub-algebra is an algebra of a subset $S \subseteq C_{b}(X) $ that contains the multiplicative identity or unity such as the constant function $x \mapsto 1$.
\end{remark} 

\begin{definition}
\label{def:separate_points}
We say that a set $S \subseteq C_{b}(X) $ \textbf{separates points in} $X$ if for all $x,y \in X$ with $x \neq y$, there exists $f \in S$ such that $f(x) \neq f(y)$.
\end{definition}

\begin{definition}[dense subset]
Let $(X,d)$ be a metric space. A set $Y \subseteq X$ is called \textbf{dense} in $X$ if for every $x \in X$ and every $\varepsilon>0$, there exists $y \in Y$ such that $d(x,y) < \varepsilon$.
\end{definition}

\begin{theorem}[Stone–Weierstrass]
\label{thm_stone_weierstrass}
Let $(X,d)$ be a compact metric space. Suppose that $S \subseteq C_{b}(X) $ is an algebra that separates points in $X$. Furthermore, suppose that the constant function $x \mapsto 1$ belongs to $S$. Then $S$ is dense in $C_{b}(X)$. 
\end{theorem}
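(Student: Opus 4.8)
The plan is to follow the classical lattice-based argument. First I would pass to the uniform closure $\overline{S}$ of $S$ inside $C_{b}(X)$ and note that $\overline{S}$ is again an algebra, still contains the constant function $\mathbf{1}$, and still separates points; since $\overline{S}$ is closed, it suffices to prove $\overline{S} = C_{b}(X)$. The crux is to show that $\overline{S}$ is closed under the lattice operations. Because $\max(f,g) = \tfrac12\!\left(f+g+|f-g|\right)$ and $\min(f,g) = \tfrac12\!\left(f+g-|f-g|\right)$, and $\overline{S}$ is a linear subspace, this reduces to the single implication $f \in \overline{S} \implies |f| \in \overline{S}$.

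To handle the absolute-value step I would establish the auxiliary analytic fact that $s \mapsto \sqrt{s}$ is a uniform limit on $[0,1]$ of polynomials with no constant term, via the recursion $p_{0}\equiv 0$, $p_{n+1}(s) = p_{n}(s) + \tfrac12\!\left(s - p_{n}(s)^{2}\right)$, proving by induction the monotonicity $0 \le p_{n}(s) \le p_{n+1}(s) \le \sqrt{s}$ and a bound on $\sqrt{s} - p_{n}(s)$ forcing uniform convergence. Rescaling gives polynomials $q_{n}$ with no constant term such that $q_{n}(t^{2}) \to |t|$ uniformly on $[-M,M]$. Then for $f \in \overline{S}$ with $\lVert f \rVert_{\sup} \le M$, each $q_{n}(f^{2}) \in \overline{S}$ because $\overline{S}$ is an algebra, and $q_{n}(f^{2}) \to |f|$ uniformly, so $|f| \in \overline{S}$. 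This analytic core — the explicit polynomial approximation and the inductive estimate — is the step I expect to be the main obstacle; everything after it is soft topology exploiting compactness of $X$.

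With the lattice property secured, I would run the standard two-stage compactness argument. Using that $\overline{S}$ contains constants and separates points, for any target $h \in C_{b}(X)$ and any pair $x,y \in X$ with $x \ne y$ I would pick $\phi \in S$ with $\phi(x) \ne \phi(y)$ and set $g_{x,y} = h(x)\mathbf{1} + (h(y)-h(x))\dfrac{\phi - \phi(x)\mathbf{1}}{\phi(y)-\phi(x)} \in S$, which agrees with $h$ at both $x$ and $y$ (and $g_{x,x} = h(x)\mathbf{1}$). Fixing $x$, the sets $U_{y} = \{z : g_{x,y}(z) < h(z)+\varepsilon\}$ are open and cover $X$, so by compactness finitely many $U_{y_{1}},\dots,U_{y_{m}}$ suffice; then $g_{x} := \min_{i} g_{x,y_{i}} \in \overline{S}$ satisfies $g_{x}(x) = h(x)$ and $g_{x} < h + \varepsilon$ pointwise.

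Finally, the sets $V_{x} = \{z : g_{x}(z) > h(z)-\varepsilon\}$ are open and cover $X$ (each contains $x$), so finitely many $V_{x_{1}},\dots,V_{x_{k}}$ cover $X$, and $g := \max_{j} g_{x_{j}} \in \overline{S}$ satisfies $h - \varepsilon < g < h + \varepsilon$ pointwise, hence $\lVert g - h \rVert_{\sup} \le \varepsilon$. Since $\varepsilon > 0$ was arbitrary and $\overline{S}$ is closed, $h \in \overline{S}$; as $h$ was an arbitrary element of $C_{b}(X)$, this gives $\overline{S} = C_{b}(X)$, i.e. $S$ is dense in $C_{b}(X)$.
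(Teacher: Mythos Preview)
Your argument is the classical lattice proof of Stone--Weierstrass and is correct as outlined: the passage to the closure, the polynomial approximation of $|t|$ via the Newton-type recursion for $\sqrt{s}$, and the two-stage compactness extraction are all standard and sound.

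However, there is nothing to compare against: the paper does not prove this theorem. It is listed in Appendix~\ref{sec:fundamental_def_and_theorems} among ``standard definitions and theorems \dots\ found in analysis courses and textbooks'' with a citation to \citet{strichartz2000way}, and is simply invoked as a black box to establish that antisymmetric exponentials are dense in $C_b(X)$. So your proposal goes well beyond what the paper does --- you supply a full proof where the paper only quotes the result.
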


\begin{remark}
    The set of ANNs is dense in the set of continuous functions. Any continuous function can be approximated to arbitrary precision using any universal function approximator. However, universal function approximation for most conventional ANNs is not proven using the Stone-Weierstrass theorem.
\end{remark}

\newpage

\section{Spline Artificial Neural Network Properties}
\label{sec:spline_additive_models}

\noindent
From Definition~\ref{def:lambda_sam}: A $z$-Spline ANN model $f(x)$, defined on $\left[0,1\right]^{n}$, with trainable parameters $\theta$ and partition number $z \in \mathbb{N}$, is a sum of $n \in \mathbb{N}$ single-variable $z$-density B-splines in each variable
\begin{equation*} 
\begin{split}
f(x)  
= \sum^{n}_{j=1} f_{j}( x_{j} ) 
= \sum^{n}_{j=1} \sum_{i=1}^{4z+3}   \theta_{i,j} S_{i,j}(  x_{j} )
= \sum^{n}_{j=1} \sum_{i=1}^{4z+3}   \theta_{i,j} S((4z) x_{j} + (4 - i))
\end{split}
\end{equation*}

\begin{remark}
    $z$-Spline ANNs also inherit properties from $z$-density B-splines, such as sparsity, bounded gradients, uniform trainability, and min-distal orthogonality. Most properties are bounds that scale with $n$ with no shared parameters. Each input dimension is considered separately.
\end{remark}

\begin{proposition}[Sparsity]
\label{prop:sams_sparsity}
Let $f(x )$ be a $z$-Spline ANN, defined on the domain $x \in \left[ 0,1 \right]^{n}$, with trainable parameters $\theta \in \Theta$. Let $\lVert \nabla_{\theta} f(x ) \rVert_{0}$ denote the number of non-zero components of the gradient vector w.r.t. trainable parameters is bounded:
\begin{equation*}
\lVert \nabla_{\theta} f(x ) \rVert_{0}  
\leq 4 n
\; \forall x \in D(f)    
\end{equation*}
\end{proposition}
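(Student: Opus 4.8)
The plan is to reduce the multi-variable sparsity bound to the single-variable case already established in Proposition~\ref{prop:z_spline_sparsity}. First I would compute the gradient of a $z$-Spline ANN with respect to its parameters. Since $f(x) = \sum_{j=1}^{n} f_{j}(x_{j})$ where each $f_{j}$ is a $z$-density B-spline in the single variable $x_{j}$ with its own disjoint set of trainable parameters $\theta_{i,j}$, the partial derivative $\frac{\partial f}{\partial \theta_{i,j}}(x) = S_{i,j}(x_{j})$ depends only on the $j$-th coordinate and on the basis functions of the $j$-th summand. Crucially, there are no shared parameters across the $n$ summands, so the gradient vector decomposes into $n$ blocks, one for each input dimension, and the $j$-th block is precisely the gradient of the single-variable spline $f_{j}$ evaluated at $x_{j}$.

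Next I would apply Proposition~\ref{prop:z_spline_sparsity} to each block: for any fixed value $x_{j} \in [0,1]$, at most four of the basis functions $S_{i,j}(x_{j})$ are non-zero, so each of the $n$ blocks contributes at most $4$ non-zero components to $\nabla_{\theta} f(x)$. Summing over the $n$ disjoint blocks gives
\begin{equation*}
\lVert \nabla_{\theta} f(x) \rVert_{0} = \sum_{j=1}^{n} \lVert \nabla_{\theta_{\cdot,j}} f_{j}(x_{j}) \rVert_{0} \leq \sum_{j=1}^{n} 4 = 4n,
\end{equation*}
which holds for every $x \in D(f) = [0,1]^{n}$. This completes the argument.

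There is essentially no hard obstacle here — the only point requiring care is making explicit that the parameter sets of the $n$ single-variable splines are disjoint, so that the count of non-zero components is genuinely additive rather than subject to overlaps or cancellations. I would state this block-decomposition of the parameter vector clearly at the outset (perhaps referencing the remark that each input dimension is treated separately with no shared parameters), after which the bound follows immediately from the single-variable result. The same template will evidently apply to the bounded-gradient, uniform-trainability, and min-distal-orthogonality propositions for $z$-Spline ANNs, with the $4n$ bound replaced by the appropriate $n$-scaled quantity in each case.
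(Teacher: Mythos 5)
Your proof is correct and follows essentially the same route as the paper: decompose the gradient into $n$ disjoint parameter blocks, apply Proposition~\ref{prop:z_spline_sparsity} to each block, and sum to get $4n$. Your explicit remark that the additivity of the zero-``norm'' relies on the $n$ single-variable splines having disjoint parameter sets is a point the paper's proof uses implicitly but does not state.
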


\begin{proof}
Let $f(x )$ be a $z$-Spline ANN, defined on the domain $x \in \left[ 0,1 \right]^{n}$, with trainable parameters $\theta \in \Theta$. Let $\lVert \nabla_{\theta} f(x ) \rVert_{0}$ denote the number of non-zero components of the gradient vector w.r.t. trainable parameters. From definition~\ref{def:lambda_sam}:

$$ \lVert \nabla_{\theta} f(x ) \rVert_{0}  
= \lVert \nabla_{\theta} \sum^{n}_{j=1} f_{j}( x_{j} ) \rVert_{0}
= \sum^{n}_{j=1} \lVert \nabla_{\theta}  f_{j}( x_{j} ) \rVert_{0} $$

From proposition~\ref{prop:z_spline_sparsity} it follows that: $ \lVert \nabla_{\theta} f(x ) \rVert_{0}  
\leq \sum^{n}_{j=1} 4 
= 4n $
\end{proof}

\begin{proposition}[Bounded Parameter Gradient]
\label{prop:sams_bounded_gradient}
Let $f(x )$ be a $z$-Spline ANN, defined on the domain $x \in \left[ 0,1 \right]^{n}$, with trainable parameters $\theta \in \Theta$. For any $x  \in D(f)$, the gradient w.r.t. trainable parameters, $\nabla_{\theta} f(x )$, is bounded:
\begin{equation*} 
\begin{split}
\lVert \nabla_{\theta} f(x )\rVert_{1}  
= \sum^{n}_{j=1} \sum_{i=1}^{4z+3}  \left|  S_{i,j}(  x_{j} ) \right|
< 4 n
\end{split}
\end{equation*}
\end{proposition}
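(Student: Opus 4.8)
The plan is to reduce the multivariate bound to the single-variable bound already established in Proposition~\ref{prop:z_spline_bounded_gradient}, exploiting the fact that a $z$-Spline ANN is a plain sum of $n$ single-variable $z$-density B-splines whose parameter sets are disjoint. I will mirror, essentially verbatim, the structure of the sparsity argument in Proposition~\ref{prop:sams_sparsity}.

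First I would compute the gradient components. Since $f(x) = \sum_{j=1}^{n} f_j(x_j) = \sum_{j=1}^{n}\sum_{i=1}^{4z+3}\theta_{i,j}S_{i,j}(x_j)$ by Definition~\ref{def:lambda_sam}, and each coefficient $\theta_{i,j}$ occurs in exactly one term of the outer sum, differentiation gives $\partial f/\partial\theta_{i,j} = S_{i,j}(x_j)$, exactly as in the single-variable case. Therefore $\lVert\nabla_\theta f(x)\rVert_1 = \sum_{j=1}^{n}\sum_{i=1}^{4z+3}\lvert S_{i,j}(x_j)\rvert$, which is the double sum displayed in the statement.

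Next I would regroup the double sum by the outer index $j$. The inner sum $\sum_{i=1}^{4z+3}\lvert S_{i,j}(x_j)\rvert$ is exactly $\lVert\nabla_\theta f_j(x_j)\rVert_1$, the $\ell_1$-norm of the parameter gradient of the $j$-th single-variable $z$-density B-spline evaluated at $x_j\in[0,1]$. Proposition~\ref{prop:z_spline_bounded_gradient} bounds each such quantity strictly by $4$ (at most four basis functions are non-zero at any point and the activation $S$ is everywhere strictly below $1$). Summing these $n$ bounds over $j=1,\dots,n$ yields $\lVert\nabla_\theta f(x)\rVert_1 < 4n$, as claimed.

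The one point that warrants attention—and the reason the estimate is this clean—is that the coefficient families $\{\theta_{i,j}\}_i$ for distinct $j$ do not overlap, so no single parameter receives gradient contributions from more than one of the single-variable splines; this is precisely what makes the gradient $\ell_1$-norm additive across input dimensions and rules out any cross term that would break the linear-in-$n$ bound. Beyond this bookkeeping there is no genuine obstacle: the proof is an application of Proposition~\ref{prop:z_spline_bounded_gradient} coordinate by coordinate.
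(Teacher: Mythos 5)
Your proof is correct and follows essentially the same route as the paper's: decompose the gradient $\ell_1$-norm additively over the $n$ disjoint parameter blocks, bound each block by $4$ via Proposition~\ref{prop:z_spline_bounded_gradient}, and sum. Your explicit remark that the coefficient families for distinct $j$ are disjoint (which is what justifies the additivity of the norm) is a point the paper leaves implicit, and you also correctly preserve the strict inequality $<4n$ where the paper's final line slightly loosens it to $\leq 4n$.
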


\begin{proof}
    Let $f(x )$ be a $z$-Spline ANN, defined on the domain $x \in \left[ 0,1 \right]^{n}$, with trainable parameters $\theta \in \Theta$. From definition~\ref{def:lambda_sam}:

    \begin{equation*}
        \lVert \nabla_{\theta} f(x ) \rVert_{1}  
    = \lVert \nabla_{\theta} \sum^{n}_{j=1} f_{j}( x_{j} ) \rVert_{1}
    = \sum^{n}_{j=1} \lVert \nabla_{\theta}  f_{j}( x_{j} ) \rVert_{1}
    \end{equation*}

    From proposition~\ref{prop:z_spline_bounded_gradient} it follows that: $ \lVert \nabla_{\theta} f(x ) \rVert_{1} \leq \sum^{n}_{j=1} 4 = 4n $
\end{proof}


\begin{proposition}[Trainability of $z$-Spline ANNs]
\label{prop:sams_trainability}
Let $f(x )$ be a $z$-Spline ANN, defined on the domain $x \in \left[ 0,1 \right]^{n}$, with trainable parameters $\theta \in \Theta$:

\begin{equation*}
    \nabla_{\theta} f( x) \neq \Vec{0}, \; \forall x \in D(f), \; 
    \forall \theta \in \Theta
\end{equation*} 
\end{proposition}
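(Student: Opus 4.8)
The plan is to reduce the multivariable claim to the single-variable trainability result already established in Proposition~\ref{prop:z_spline_uniform_trainability}. First I would compute the components of $\nabla_{\theta} f(x)$ directly from Definition~\ref{def:lambda_sam}: since $f(x) = \sum_{j=1}^{n} f_{j}(x_{j})$ and each coefficient $\theta_{i,j}$ occurs in exactly one summand $f_{j}$, we get $\frac{\partial f}{\partial \theta_{i,j}}(x) = S_{i,j}(x_{j})$. The key structural observation is that the parameters partition into $n$ disjoint blocks indexed by $j$, with no parameter shared across input dimensions, so the gradient vector $\nabla_{\theta} f(x)$ decomposes into $n$ blocks, the $j$-th of which is precisely $\nabla_{\theta_{\cdot,j}} f_{j}(x_{j}) = (S_{i,j}(x_{j}))_{i=1}^{4z+3}$, the parameter gradient of a single-variable $z$-density B-spline.

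Next I would invoke Proposition~\ref{prop:z_spline_uniform_trainability}: for every $x_{j} \in [0,1]$, including the endpoints, at least three of the basis functions $S_{i,j}(x_{j})$ are strictly positive, hence each block $\nabla_{\theta_{\cdot,j}} f_{j}(x_{j})$ is a nonzero vector. Since a concatenation of vectors vanishes only when every block vanishes, it follows that $\nabla_{\theta} f(x) \neq \vec{0}$ for all $x \in [0,1]^{n}$. The independence from $\theta$ is immediate: the components $S_{i,j}(x_{j})$ do not involve the trainable coefficients at all, because $z$-Spline ANNs are linear in $\theta$, so the conclusion holds for every $\theta \in \Theta$.

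I do not expect a genuine obstacle; the only point requiring mild care is behaviour at the domain boundary, where fewer basis functions overlap, and this is exactly what the single-variable Proposition~\ref{prop:z_spline_uniform_trainability} already absorbs by guaranteeing at least three nonzero basis functions even at $x_{j} \in \{0,1\}$. In fact the argument yields the slightly sharper quantitative statement $\lVert \nabla_{\theta} f(x) \rVert_{0} \geq 3n$ by summing the per-block lower bounds, which could be recorded as a remark alongside the sparsity bound of Proposition~\ref{prop:sams_sparsity}.
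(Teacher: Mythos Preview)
Your proposal is correct and follows essentially the same approach as the paper: decompose $\nabla_{\theta} f(x)$ via Definition~\ref{def:lambda_sam} into disjoint blocks corresponding to the independent $f_{j}$, invoke Proposition~\ref{prop:z_spline_uniform_trainability} to see each block is nonzero, and conclude. Your write-up is somewhat more explicit about the block/concatenation structure and the $\theta$-independence, and the $\lVert \nabla_{\theta} f(x)\rVert_{0}\geq 3n$ observation is a pleasant extra, but the core argument matches the paper's.
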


\begin{proof}
    Let $f(x )$ be a $z$-Spline ANN, defined on the domain $x \in \left[ 0,1 \right]^{n}$, with trainable parameters $\theta \in \Theta$. From definition~\ref{def:lambda_sam}:

    \begin{equation*}
         \nabla_{\theta} f(x )   
    =   \sum^{n}_{j=1} \nabla_{\theta} f_{j}( x_{j} ) 
    \end{equation*}
\noindent
From proposition~\ref{prop:z_spline_uniform_trainability}, it follows that $\nabla_{\theta} f_{j}( x_{j} )$ are non-zero. Since $f_{j}$ are independent it follows that $ \nabla_{\theta} f( x) \neq \Vec{0}$.
\end{proof}



\begin{proposition}[min-distal orthogonality]
\label{prop:sams_lesser_distal_orthogonality}
Let $f(x)$ be a $z$-Spline ANN defined on the domain $x \in \left[ 0,1 \right]^{n}$, with partition number $z$ and trainable parameters $\theta \in \Theta$, then for any $x,y \in D(f)$:
\begin{equation*}
    \min_{i} (|x_{i} - y_{i}|) > z^{-1} \implies  \nabla_{\theta} f(x) \cdot \nabla_{\theta} f(y) = 0 
\end{equation*}
\end{proposition}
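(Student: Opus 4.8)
The plan is to exploit the additive, parameter-disjoint structure of a $z$-Spline ANN from Definition~\ref{def:lambda_sam} and reduce the multivariate claim to the single-variable result already established in Proposition~\ref{prop:z_spline_distal_orthogonality}. First I would write the parameter gradient component-wise: since $f(x) = \sum_{j=1}^{n} \sum_{i=1}^{4z+3} \theta_{i,j} S_{i,j}(x_j)$ and the coefficient $\theta_{i,j}$ occurs only in the $j$-th inner sum, $\partial f / \partial \theta_{i,j}(x) = S_{i,j}(x_j)$. Hence $\nabla_{\theta} f(x)$ decomposes into $n$ blocks, the $j$-th block being $\nabla_{\theta} f_j(x_j)$, and distinct blocks involve disjoint sets of trainable parameters.

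Next I would compute the inner product block by block, so that the cross-terms vanish identically by parameter-disjointness:
\begin{equation*}
\nabla_{\theta} f(x) \cdot \nabla_{\theta} f(y) = \sum_{j=1}^{n} \sum_{i=1}^{4z+3} S_{i,j}(x_j)\, S_{i,j}(y_j) = \sum_{j=1}^{n} \nabla_{\theta} f_j(x_j) \cdot \nabla_{\theta} f_j(y_j).
\end{equation*}
From the hypothesis $\min_{i} (|x_i - y_i|) > z^{-1}$ it follows that $|x_j - y_j| > z^{-1}$ for every $j \in \{1,\dots,n\}$; this is exactly where the $\min$ dissimilarity measure is needed, since it forces the single-variable separation condition to hold simultaneously in all coordinates. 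Applying Proposition~\ref{prop:z_spline_distal_orthogonality} to each single-variable $z$-density B-spline $f_j$ gives $\nabla_{\theta} f_j(x_j) \cdot \nabla_{\theta} f_j(y_j) = 0$ for each $j$, and summing over $j$ yields $\nabla_{\theta} f(x) \cdot \nabla_{\theta} f(y) = 0$.

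There is essentially no hard step: the only thing to be careful about is confirming that the parameter sets of the $f_j$ are genuinely disjoint (no weight sharing across input dimensions), so that the gradient is truly block-structured and the cross-terms in the inner product disappear; this is immediate from the form in Definition~\ref{def:lambda_sam}. I would also add a brief remark explaining why the weaker $\min$-measure appears rather than the $\max$-measure: a single coordinate $x_j$ close to $y_j$ already forces the $j$-th gradient blocks to overlap, so one needs \emph{every} coordinate separated, which is precisely what $\min_{i}(|x_i - y_i|) > \delta$ encodes — and why no exponential blow-up (Theorem in Section~\ref{sec:theoretical_limits_model_continual}) is triggered here.
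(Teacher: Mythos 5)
Your proposal is correct and follows essentially the same route as the paper's own proof: decompose the gradient into per-coordinate blocks with disjoint parameters so the cross-terms in the inner product vanish, observe that $\min_{i}(|x_{i}-y_{i}|)>z^{-1}$ forces $|x_{j}-y_{j}|>z^{-1}$ in every coordinate, and apply Proposition~\ref{prop:z_spline_distal_orthogonality} blockwise. Your explicit justification of the block-disjointness (no weight sharing across dimensions) is a slightly more careful rendering of the paper's appeal to the independence of the $f_{j}$, but the argument is the same.
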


\begin{proof}
Let $f(x)$ be a $z$-Spline ANN defined on $x \in \left[ 0,1 \right]^{n}$, with partition number $z$ and trainable parameters $\theta \in \Theta$. Let $x,y \in D(f)$. From definition~\ref{def:lambda_sam}, it follows that:
\begin{equation*}
        \nabla_{\theta} f(x) \cdot \nabla_{\theta} f(y)   
   =   \bigg(\sum^{n}_{j=1} \nabla_{\theta} f_{j}( x_{j} )\bigg) 
   \cdot \bigg(\sum^{n}_{j=1} \nabla_{\theta} f_{j}( y_{j} )\bigg) 
   \end{equation*}

   Since each function $f_{j}$ is independent for each input dimension,

   \begin{equation*}
    \bigg(\sum^{n}_{j=1} \nabla_{\theta} f_{j}( x_{j} )\bigg) 
   \cdot \bigg(\sum^{n}_{j=1} \nabla_{\theta} f_{j}( y_{j} )\bigg)    
    =   \sum^{n}_{j=1} \nabla_{\theta} f_{j}( x_{j} ) 
    \cdot \nabla_{\theta} f_{j}( y_{j} )
    \end{equation*}

\noindent
Assume that $\min_{i} (|x_{j} - y_{j}|) > z^{-1}$, then $|x_{j} - y_{j}| > z^{-1}, \; \forall \, j \in \mathbb{N}$. By proposition~\ref{prop:z_spline_distal_orthogonality}, it follows that $\nabla_{\theta} f_{j}( x_{j} ) \cdot \nabla_{\theta} f_{j}( y_{j} ) = 0$, for all $j \in \mathbb{N}$. Finally, $\nabla_{\theta} f(x) \cdot \nabla_{\theta} f(y) = 0$
\end{proof}


\section[Antisymmetric Exponentials are Universal Function Approximators]{Antisymmetric Exponentials are Function Approximators}
\label{sec:universal_function_approximation}
This section defines and proves that antisymmetric exponentials are universal function approximators with the Stone-Weierstrass theorem (Appendix~\ref{sec:fundamental_def_and_theorems}, Theorem~\ref{thm_stone_weierstrass}).

\begin{definition}
\label{def:antisymmetric_exponentials}
Let $(X,d) \subseteq \mathbb{R}^{n}$ be a compact metric space. For all $\psi$ in the set of \textbf{antisymmetric exponentials} $ \Psi \subseteq C_{b}(X)$, there exists $M \in \mathbb{N} $, and $g_{k,j}(x_{j}), \; h_{k,j}(x_{j}) \in C(\mathbb{R})$ such that:
\begin{equation*}
\psi (x) = \sum_{k = 1}^{M} 
\Bigg( 
\exp \bigg( \Sigma_{j=1}^{n} g_{k,j}(x_{j})\bigg) 
- 
\exp\bigg( \Sigma_{j=1}^{n} h_{k,j}(x_{j})\bigg) 
\Bigg)    
\end{equation*}
\end{definition}

\begin{lemma}
\label{lemma:antisymmetric_exponentials_are_closed_under_scalar_mult}
Suppose that $\Psi \subseteq C_{b}(X) $ is the set of antisymmetric exponentials, then $\Psi$ is closed under scalar multiplication for any $\rho \in \mathbb{R}$.
\end{lemma}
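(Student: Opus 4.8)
The plan is to show that scalar multiplication of an antisymmetric exponential by any $\rho \in \mathbb{R}$ produces another antisymmetric exponential by absorbing the scalar into the inner single-variable functions $g_{k,j}$ and $h_{k,j}$. The key observation is that $\rho \exp(u) = \exp(u + \ln|\rho|)$ whenever $\rho > 0$, and the constant $\ln|\rho|$ can be folded into one of the single-variable functions, say $g_{k,1}(x_1) \mapsto g_{k,1}(x_1) + \ln\rho$, which remains in $C(\mathbb{R})$. The case distinctions on the sign of $\rho$ are the only real bookkeeping.

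First I would handle $\rho = 0$ separately: the zero function is trivially an antisymmetric exponential, e.g. by taking $M=1$ and $g_{1,j} = h_{1,j}$ for all $j$, so that $\exp(\Sigma g) - \exp(\Sigma h) = 0$. Second, for $\rho > 0$, given $\psi(x) = \sum_{k=1}^{M}\big(\exp(\Sigma_j g_{k,j}(x_j)) - \exp(\Sigma_j h_{k,j}(x_j))\big)$, define $\tilde g_{k,1}(x_1) := g_{k,1}(x_1) + \ln\rho$ and $\tilde h_{k,1}(x_1) := h_{k,1}(x_1) + \ln\rho$, leaving the remaining coordinate functions unchanged; then $\exp(\Sigma_j \tilde g_{k,j}(x_j)) = \rho\exp(\Sigma_j g_{k,j}(x_j))$ and similarly for $\tilde h$, so the new antisymmetric exponential equals $\rho\psi(x)$. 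Third, for $\rho < 0$, I would exploit the antisymmetry of the definition: negating $\psi$ amounts to swapping the roles of $g_{k,j}$ and $h_{k,j}$ in each term, which is still a valid antisymmetric exponential, and then applying the $\rho>0$ argument with $|\rho|$ in place of $\rho$ yields $\rho\psi = -|\rho|\psi = |\rho|(-\psi)$, an antisymmetric exponential.

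Throughout I would note that the number of terms $M$ is preserved (no new terms are needed), and that all modified functions remain continuous on $\mathbb{R}$ since adding a constant preserves continuity, so the membership conditions of Definition~\ref{def:antisymmetric_exponentials} are met. The main obstacle, such as it is, is purely organizational: making sure the sign cases are exhaustive and that the antisymmetric swap $g_{k,j} \leftrightarrow h_{k,j}$ is correctly identified as the mechanism for negation, rather than attempting to introduce a negative scalar inside an exponential, which is impossible. Once the negation-by-swap trick is in hand, the argument is a short direct verification.
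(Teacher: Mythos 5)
Your proposal is correct and follows essentially the same route as the paper's proof: absorbing $\log\rho$ into one coordinate function for $\rho>0$, swapping the roles of $g_{k,j}$ and $h_{k,j}$ to handle negation, and treating $\rho=0$ as a trivial separate case. The case analysis is exhaustive and the continuity observation is the right closing remark.
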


\begin{proof}
Let $(X,d) \subseteq \mathbb{R}^{n}$ be a compact metric space, with $x \in (X,d)$, and components $x=(x_{1},..,x_{n})$. Let $\psi,\psi' \in \Psi$ be antisymmetric exponentials. From Definition~\ref{def:antisymmetric_exponentials}: 
\begin{equation*} 
\begin{split}
\psi 
& = \sum_{k = 1}^{M} \Bigg( \exp \bigg( \Sigma_{j=1}^{n} g_{k,j}(x_{j})\bigg) - \exp\bigg( \Sigma_{j=1}^{n} h_{k,j}(x_{j})\bigg) \Bigg)    \\
\psi'
& = \sum_{k = 1}^{M'} \Bigg( \exp\bigg( \Sigma_{j=1}^{n} g'_{k,j}(x_{j})\bigg) - \exp\bigg( \Sigma_{j=1}^{n} h'_{k,j}(x_{j})\bigg) \Bigg)   \\
\end{split}
\end{equation*}

\noindent
where $g_{k,j}(x_{j}), \; h_{k,j}(x_{j}), \; g'_{k,j}(x_{j}), \; h'_{k,j}(x_{j}) \in C(\mathbb{R})$ are continuous single-variable functions.

\noindent
Let $0<\alpha \in \mathbb{R}^{+}$. Three cases are considered $\forall \, \rho \in \mathbb{R}$ which is equal to $\alpha, -\alpha$, or zero:

\begin{enumerate}
    \item For $\psi'$ choose $g'_{k,1}(x_{1}) = g_{k,1}(x_{1}) + \log(\alpha)$, and $h'_{k,1}(x_{1}) = h_{k,1}(x_{1}) + \log(\alpha)$. Choose $g'_{k,j}(x_{j}) = g_{k,j}(x_{j})$, and $h'_{k,j}(x_{j}) = h_{k,j}(x_{j})$ for all indices $j>1 \in \mathbb{N}$ :
    $$\alpha \psi 
    = \sum_{k = 1}^{M} 
    \bigg( 
    \exp\bigg( \log(\alpha)+ \Sigma_{j=1}^{n} g_{k,j}(x_{j})\bigg) 
    - 
    \exp\bigg(  \log(\alpha)+\Sigma_{j=1}^{n} h_{k,j}(x_{j})\bigg) 
    \bigg) = \psi'$$
    \item For $\psi'$ choose $g'_{k,1}(x_{1}) = h_{k,1}(x_{1}) + \log(\alpha)$, and $h'_{k,1}(x_{1}) = g_{k,1}(x_{1}) + \log(\alpha)$. Choose $g'_{k,j}(x_{j}) = h_{k,j}(x_{j})$, and $h'_{k,j}(x_{j}) = g_{k,j}(x_{j})$ for all indices $j>1 \in \mathbb{N}$ :
    $$-\alpha \psi = 
    \sum_{k = 1}^{M} 
    \bigg( 
    \exp\bigg( \log(\alpha)+ \Sigma_{j=1}^{n} h_{k,j}(x_{j})\bigg) 
    - 
    \exp\bigg(  \log(\alpha)+\Sigma_{j=1}^{n} g_{k,j}(x_{j})\bigg) 
    \bigg) 
    = \psi'$$
    \item For $\psi'$ choose $g'_{k,j}(x_{j}) = 0$, and $h'_{k,j}(x_{j}) = 0$ for all indices $j,k \in \mathbb{N}$ :
    $$0 \psi = 0 = \sum_{k = 1}^{M'} \bigg(\exp(0) - \exp(0)
    \bigg) = \psi'$$
\end{enumerate}

\noindent
One must define single-variable interior functions as zero or absorb a constant. Since this is true for any $\rho \in \mathbb{R}$, it follows that $\Psi$ is closed under scalar multiplication.
\end{proof}

\begin{lemma}
\label{lemma:antisymmetric_exponentials_closed_under_addition}
Suppose that $\Psi \subseteq C_{b}(X) $ is the set of antisymmetric exponentials, then $\Psi$ is closed under addition.
\end{lemma}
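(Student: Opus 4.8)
The plan is to show directly that the sum of two antisymmetric exponentials is again an antisymmetric exponential by concatenating their defining sums. Let $\psi, \psi' \in \Psi$ with respective bounds $M, M'$ and interior functions $g_{k,j}, h_{k,j}$ (for $k \leq M$) and $g'_{k,j}, h'_{k,j}$ (for $k \leq M'$), exactly as in Definition~\ref{def:antisymmetric_exponentials}. I would form a new index set of size $M + M'$ and define a combined family of interior functions: for $k = 1, \dots, M$ use the primed-free functions $g_{k,j}, h_{k,j}$, and for $k = M+1, \dots, M+M'$ use $g'_{k-M, j}, h'_{k-M, j}$. Then the antisymmetric exponential built from this combined family of size $M+M'$ equals $\psi + \psi'$ term by term, since the outer sum just splits into the first $M$ terms (giving $\psi$) and the last $M'$ terms (giving $\psi'$). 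Hence $\psi + \psi' \in \Psi$.

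The one point that needs a sentence of care is that the required exponents $M, M' \in \mathbb{N}$ in Definition~\ref{def:antisymmetric_exponentials} may differ between $\psi$ and $\psi'$, but this is harmless: we simply take $M + M'$ as the exponent of the sum, which is again a natural number. I would also note that each summand of the combined expression is still of the required form $\exp(\Sigma_{j=1}^{n} (\cdot)_j) - \exp(\Sigma_{j=1}^{n} (\cdot)_j)$ with continuous single-variable interiors, since we are merely relabelling functions that were already continuous, and that the resulting function is bounded and continuous (an element of $C_b(X)$) because finite sums and compositions of continuous functions on the compact set $X$ are continuous and bounded. Combined with Lemma~\ref{lemma:antisymmetric_exponentials_are_closed_under_scalar_mult}, this establishes that $\Psi$ is a linear subspace of $C_b(X)$.

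I do not anticipate a genuine obstacle here — this is a bookkeeping argument. The only mild subtlety, if one wanted to be pedantic, is making sure the index shift on the second block of terms is written cleanly; I would handle that by an explicit reindexing $k \mapsto k - M$ rather than trying to be clever. The real substance of the universal-approximation argument lives in the later lemmas (closure under pointwise multiplication, separation of points, and containing the constants), so for this lemma I would keep the write-up to a short paragraph: state the combined family, display the split sum showing it equals $\psi + \psi'$, and conclude.
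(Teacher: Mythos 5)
Your proposal is correct and follows essentially the same route as the paper's proof: concatenate the two families of interior functions, reindex the second block by $k \mapsto k - M$, and take $M'' = M + M'$ so the combined sum splits into $\psi$ plus $\psi'$. No gaps.
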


\newpage
\begin{proof}
Let $(X,d) \subseteq \mathbb{R}^{n}$ be a compact metric space, with $x \in (X,d)$, and components $x=(x_{1},..,x_{n})$. Let $\psi,\psi',\psi'' \in \Psi$ be antisymmetric exponentials. From Definition~\ref{def:antisymmetric_exponentials}: 

\begin{equation*} 
\begin{split}
\psi 
& = \sum_{k = 1}^{M} \Bigg( \exp \bigg( \Sigma_{j=1}^{n} g_{k,j}(x_{j})\bigg) - \exp\bigg( \Sigma_{j=1}^{n} h_{k,j}(x_{j})\bigg) \Bigg)    \\
\psi'
& = \sum_{k = 1}^{M'} \Bigg( \exp\bigg( \Sigma_{j=1}^{n} g'_{k,j}(x_{j})\bigg) - \exp\bigg( \Sigma_{j=1}^{n} h'_{k,j}(x_{j})\bigg) \Bigg)   \\
\psi''
& = \sum_{k = 1}^{M''} \Bigg( \exp\bigg( \Sigma_{j=1}^{n} g''_{k,j}(x_{j})\bigg) - \exp\bigg( \Sigma_{j=1}^{n} h''_{k,j}(x_{j})\bigg) \Bigg)   \\
\end{split}
\end{equation*}

\noindent
Let $G'_{k} := \Sigma_{j=1}^{n} g'_{k,j}(x_{j}), \; H'_{k} := \Sigma_{j=1}^{n} h'_{k,j}(x_{j})$, similarly for $G_{k}, H_{k}$ and $G''_{k}, H''_{k}$. Choose the interior functions for $\psi''$ such that $G''_{k} = G_{k}, \; H''_{k} = H_{k}$ for all indices $k\leq M$. Choose $G''_{k} = G'_{k-M}, \; H''_{k} = H'_{k-M}, \; \forall k > M$ such that the indices are in range. With $M''=M+M'$, it follows that: 
\begin{equation*} 
\begin{split}
\psi +\psi'
& = \bigg( \sum_{k = 1}^{M} \bigg( \exp G_{k} - \exp H_{k} \bigg) \bigg) + \bigg( \sum_{k = 1}^{M'} \bigg( \exp G'_{k} - \exp H'_{k} \bigg) \bigg)   \\
\psi +\psi'
& = \sum_{k = 1}^{M+M'} \bigg( \exp G''_{k} - \exp H''_{k} \bigg) 
= \sum_{k = 1}^{M''} \bigg( \exp G''_{k} - \exp H''_{k} \bigg) = \psi''\\
\end{split}
\end{equation*}
\noindent
\noindent
$ \forall\, \psi,\psi' \in \Psi$, $\exists \,  \psi'' \in \Psi$ s.t. $\psi+\psi'=\psi''$, so it follows that $\Psi$ is closed under addition. 
\end{proof}

\begin{lemma}
\label{lemma:antisymmetric_exponentials_closed_under_pointwise_mult}
Suppose that $\Psi \subseteq C_{b}(X) $ is the set of antisymmetric exponentials, then $\Psi$ is closed under point-wise multiplication.
\end{lemma}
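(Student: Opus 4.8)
The plan is to expand the product $\psi\psi'$ directly with the distributive law, use the identity $\exp(a)\exp(b)=\exp(a+b)$, and then reorganise the resulting terms back into the canonical antisymmetric-exponential shape of Definition~\ref{def:antisymmetric_exponentials}. First I would fix $\psi,\psi' \in \Psi$ with representations as in that definition, abbreviating $G_{k} := \sum_{j=1}^{n} g_{k,j}(x_{j})$ and $H_{k} := \sum_{j=1}^{n} h_{k,j}(x_{j})$ for $\psi$ (indices $k \le M$), and $G'_{l}, H'_{l}$ for $\psi'$ (indices $l \le M'$), so that $\psi = \sum_{k=1}^{M}(\exp G_{k} - \exp H_{k})$ and likewise for $\psi'$.

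Next I would multiply out
$$\psi\psi' = \sum_{k=1}^{M}\sum_{l=1}^{M'} (\exp G_{k} - \exp H_{k})(\exp G'_{l} - \exp H'_{l}),$$
and observe that each summand equals $\exp(G_{k}+G'_{l}) - \exp(G_{k}+H'_{l}) - \exp(H_{k}+G'_{l}) + \exp(H_{k}+H'_{l})$, which I would regroup as the pair
$$\big(\exp(G_{k}+G'_{l}) - \exp(G_{k}+H'_{l})\big) + \big(\exp(H_{k}+H'_{l}) - \exp(H_{k}+G'_{l})\big).$$
The key point is that every exponent is again a sum of single-variable continuous functions: for instance $G_{k}+G'_{l} = \sum_{j=1}^{n}\big(g_{k,j}(x_{j}) + g'_{l,j}(x_{j})\big)$, and $g_{k,j}+g'_{l,j} \in C(\mathbb{R})$ since $C(\mathbb{R})$ is closed under addition. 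Hence each bracketed pair is a legitimate term of the form $\exp(\sum_{j} a_{m,j}(x_{j})) - \exp(\sum_{j} b_{m,j}(x_{j}))$. Collecting all $2MM'$ such pairs over $(k,l)$ and relabelling them by a single index $m = 1,\dots,M''$ with $M'' = 2MM'$ exhibits $\psi\psi'$ in exactly the form demanded by Definition~\ref{def:antisymmetric_exponentials}, so $\psi\psi' \in \Psi$.

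I do not expect a genuine obstacle: the mathematical content is just $\exp(a+b)=\exp(a)\exp(b)$ plus closure of $C(\mathbb{R})$ under addition. The only care needed is the index bookkeeping — making the relabelling of the $2MM'$ pairs into one index set explicit — and noting that the construction stays inside $C_{b}(X)$, which holds because finite sums and products of continuous functions and their compositions with $\exp$ are continuous, and a continuous function on the compact set $X$ is automatically bounded. Together with Lemma~\ref{lemma:antisymmetric_exponentials_are_closed_under_scalar_mult} and Lemma~\ref{lemma:antisymmetric_exponentials_closed_under_addition}, this will establish that $\Psi$ is an algebra.
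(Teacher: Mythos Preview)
Your proposal is correct and follows essentially the same route as the paper: expand the double sum distributively, apply $\exp(a)\exp(b)=\exp(a+b)$, observe that the resulting exponents are again sums of single-variable continuous functions, and reindex the $2MM'$ positive/negative pairs as a single sum with $M''=2MM'$. The paper pairs the terms slightly differently (taking $G''$ from $\{G_{p}+G'_{q},\,H_{p}+H'_{q}\}$ and $H''$ from $\{H_{p}+G'_{q},\,G_{p}+H'_{q}\}$), but this is just a cosmetic choice of indexing, which the paper itself notes is arbitrary.
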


\begin{proof}
Let $(X,d) \subseteq \mathbb{R}^{n}$ be a compact metric space, with $x \in (X,d)$, and components $x=(x_{1},..,x_{n})$. Let $\psi,\psi',\psi'' \in \Psi$ be antisymmetric exponentials. From Definition~\ref{def:antisymmetric_exponentials}: 

\begin{equation*} 
\begin{split}
\psi 
& = \sum_{p = 1}^{M} \Bigg( \exp \bigg( \Sigma_{j=1}^{n} g_{p,j}(x_{j})\bigg) - \exp\bigg( \Sigma_{j=1}^{n} h_{p,j}(x_{j})\bigg) \Bigg)    \\
\psi'
& = \sum_{q = 1}^{M'} \Bigg( \exp\bigg( \Sigma_{j=1}^{n} g'_{q,j}(x_{j})\bigg) - \exp\bigg( \Sigma_{j=1}^{n} h'_{q,j}(x_{j})\bigg) \Bigg)   \\
\psi''
& = \sum_{k = 1}^{M''} \Bigg( \exp\bigg( \Sigma_{j=1}^{n} g''_{k,j}(x_{j})\bigg) - \exp\bigg( \Sigma_{j=1}^{n} h''_{k,j}(x_{j})\bigg) \Bigg)   \\
\end{split}
\end{equation*}

\noindent
Let $G'_{k} := \Sigma_{j=1}^{n} g'_{k,j}(x_{j}), \; H'_{k} := \Sigma_{j=1}^{n} h'_{k,j}(x_{j})$, similarly for $G_{k}, H_{k}$ and $G''_{k}, H''_{k}$. Multiplying $\psi$ and $\psi'$ yields $4MM'$ terms in total, with $2MM'$ positive and $2MM'$ negative exponential functions, such that $M'' = 2 M M'$. The choice of indexing is arbitrary, and $k,p,q$ are used to distinguish different functions, with $k$ being dependent on $(p,q)$. The interior functions are closed under addition and 
$G''_{k} = G_{p} +G'_{q}$ or $G''_{k} = H_{p} +H'_{q}$ for positive terms. For negative terms one has $ H''_{k} = H_{p} + G'_{q} $ or $ H''_{k} = G_{p} + H'_{q}$ such that:
\begin{small}
\begin{equation*} 
\begin{split}
\psi \psi'
& = \bigg(\sum_{p = 1}^{M} \bigg( \exp G_{p} - \exp H_{p} \bigg) \bigg) \bigg( \sum_{q = 1}^{M'} \bigg( \exp G'_{q} - \exp H'_{q} \bigg) \bigg)    \\
& = \sum_{p = 1}^{M} \sum_{q = 1}^{M'}  \bigg( \exp G_{p} - \exp H_{p} \bigg) \bigg( \exp G'_{q} - \exp H'_{q} \bigg)   \\
& = \sum_{p = 1}^{M} \sum_{q = 1}^{M'}  \bigg( \exp\bigg( G_{p} +G'_{q}\bigg) - \exp\bigg( H_{p} + G'_{q}\bigg) - \exp\bigg( G_{p} + H'_{q}\bigg) + \exp\bigg( H_{p} + H'_{q}\bigg)\bigg)    \\
& = \sum_{k = 1}^{M''}  \bigg( \exp G''_{k} - \exp H''_{k}  \bigg) = \psi''   \\
\end{split}
\end{equation*}
\end{small}

\noindent
$ \forall\, \psi,\psi' \in \Psi$, $\exists \,  \psi'' \in \Psi$ s.t. $\psi\psi'=\psi''$, so it follows that $\Psi$ is closed under multiplication. 
\end{proof}

\begin{lemma}
\label{lemma:antisymmetric_exponentials_are_algebra}
Suppose that $\Psi \subseteq C_{b}(X) $ is the set of antisymmetric exponentials, then $\Psi$ is an algebra.
\end{lemma}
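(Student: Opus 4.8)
The plan is to assemble the three preceding lemmas against the definition of an algebra; the statement is essentially a bookkeeping corollary. Recall that a subset $S \subseteq C_{b}(X)$ is an algebra precisely when (i) it is a linear subspace of $C_{b}(X)$, i.e.\ closed under addition and under scalar multiplication by arbitrary real scalars, and (ii) it is closed under point-wise multiplication. Since $\Psi \subseteq C_{b}(X)$ is given by hypothesis, nothing needs to be checked about membership in $C_{b}(X)$ itself (each element of $\Psi$ is a finite sum of exponentials of finite sums of continuous single-variable functions, hence continuous, and bounded because $X$ is compact).

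First I would invoke Lemma~\ref{lemma:antisymmetric_exponentials_closed_under_addition} to conclude that $\Psi$ is closed under addition. Next I would invoke Lemma~\ref{lemma:antisymmetric_exponentials_are_closed_under_scalar_mult} to conclude that $\Psi$ is closed under scalar multiplication by any $\rho \in \mathbb{R}$; the $\rho = 0$ case of that lemma simultaneously shows that the zero function lies in $\Psi$, so $\Psi$ is nonempty and therefore a genuine linear subspace of $C_{b}(X)$. Finally I would invoke Lemma~\ref{lemma:antisymmetric_exponentials_closed_under_pointwise_mult} for closure under point-wise multiplication. Together these facts are exactly clauses (i) and (ii), so $\Psi$ is an algebra.

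There is no real obstacle here: all the substantive work — the bookkeeping of interior functions needed to realise sums, scalar multiples (via additive $\log\alpha$ shifts together with the positive/negative swap for negative scalars), and products (via the $4MM'$-term expansion) as elements of $\Psi$ — has already been carried out in Lemmas~\ref{lemma:antisymmetric_exponentials_are_closed_under_scalar_mult}--\ref{lemma:antisymmetric_exponentials_closed_under_pointwise_mult}. The only point requiring care is completeness of the checklist: one must confirm that \emph{every} clause of the algebra definition is covered, including the implicit requirement that the set be nonempty, which is why the degenerate $\rho = 0$ case of the scalar-multiplication lemma is worth citing explicitly. This lemma is also the natural hinge into the next step, where one verifies that $\Psi$ together with the constant $x \mapsto 1$ separates points, so that the Stone--Weierstrass theorem (Theorem~\ref{thm_stone_weierstrass}) yields density in $C_{b}(X)$.
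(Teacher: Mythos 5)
Your proposal is correct and follows exactly the paper's own argument: cite the three closure lemmas (scalar multiplication, addition, point-wise multiplication) and conclude that $\Psi$ is an algebra by definition. The extra remarks about nonemptiness via the $\rho=0$ case and membership in $C_{b}(X)$ are harmless refinements of the same route.
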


\begin{proof}
Antisymmetric exponentials $\Psi \subseteq C_{b}(X) $ are closed under scalar multiplication (by lemma~\ref{lemma:antisymmetric_exponentials_are_closed_under_scalar_mult}), addition (by lemma~\ref{lemma:antisymmetric_exponentials_closed_under_addition}), and point-wise multiplication (by lemma~\ref{lemma:antisymmetric_exponentials_closed_under_pointwise_mult}). The set of antisymmetric exponentials $\Psi \subseteq C_{b}(X) $ is an algebra by definition.
\end{proof}

\begin{lemma}
\label{lemma:antisymmetric_exponentials_contains_constant}
Let $(X,d)$ be a compact metric space with $x \in X$. Suppose that $\Psi \subseteq C_{b}(X) $ is the set of antisymmetric exponentials, then the constant function $x \mapsto 1$ is an element of $\Psi$
\end{lemma}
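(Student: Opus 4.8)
The plan is to produce an explicit witness: I will write the constant function $x \mapsto 1$ directly in the antisymmetric-exponential form of Definition~\ref{def:antisymmetric_exponentials} using a single term, i.e.\ with $M=1$. The only structural constraint that must be respected is that each exponent be a sum of continuous single-variable functions $\Sigma_{j=1}^{n} g_{k,j}(x_{j})$ and $\Sigma_{j=1}^{n} h_{k,j}(x_{j})$; since constant functions are continuous, any real number $c$ can be realized as such a sum by assigning the value $c$ to the first coordinate function and $0$ to all the others.

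Concretely, first I would set $M=1$, choose $g_{1,1}(x_{1}) := \log 2$, $g_{1,j}(x_{j}) := 0$ for $2 \le j \le n$, and $h_{1,j}(x_{j}) := 0$ for all $1 \le j \le n$. These are all elements of $C(\mathbb{R})$. Then $\Sigma_{j=1}^{n} g_{1,j}(x_{j}) = \log 2$ and $\Sigma_{j=1}^{n} h_{1,j}(x_{j}) = 0$ identically on $X$, so the resulting $\psi \in \Psi$ satisfies
\[
\psi(x) = \exp(\log 2) - \exp(0) = 2 - 1 = 1
\]
for every $x \in X$. Hence $\psi$ is exactly the constant function $x \mapsto 1$, which is therefore an element of $\Psi$; it is of course continuous and bounded on the compact set $X$, consistent with $\Psi \subseteq C_{b}(X)$.

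There is essentially no hard part here: the argument is a one-line construction. The only point needing a moment's care is the bookkeeping that forces the exponents to be genuine sums of single-variable functions rather than arbitrary multivariable constants, which is handled by placing the entire additive constant $\log 2$ into the $j=1$ slot and zero elsewhere. As an alternative route one could first exhibit any single nonzero constant in $\Psi$ and then generate all constants via Lemma~\ref{lemma:antisymmetric_exponentials_are_closed_under_scalar_mult}, but since the direct construction already yields $1$ itself, invoking closure under scalar multiplication is unnecessary.
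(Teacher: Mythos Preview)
Your proposal is correct and takes essentially the same approach as the paper: both set $g_{1,1}(x_1)=\log 2$ and all remaining $g_{k,j}$ and $h_{k,j}$ equal to zero, then compute $\exp(\log 2)-\exp(0)=1$. The only cosmetic difference is that the paper keeps a generic $M$ and notes that all terms with $k\ge 2$ vanish, whereas you simply take $M=1$ from the outset.
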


\begin{proof}
Let $(X,d) \subseteq \mathbb{R}^{n}$ be a compact metric space, with $x \in (X,d)$, and components $x=(x_{1},..,x_{n})$. Let $\psi,\psi' \in \Psi$ be antisymmetric exponentials. From Definition~\ref{def:antisymmetric_exponentials}: 
\begin{equation*} 
\begin{split}
\psi 
& = \sum_{k = 1}^{M} \Bigg( \exp \bigg( \Sigma_{j=1}^{n} g_{k,j}(x_{j})\bigg) - \exp\bigg( \Sigma_{j=1}^{n} h_{k,j}(x_{j})\bigg) \Bigg)    \\
\end{split}
\end{equation*}

\noindent
Choose $g_{1,1}(x_{1}) = \log(2)$, and all other functions $g_{k,j}(x_{j}), \; h_{k,j}(x_{j})=0$, by substitution: 
\begin{equation*} 
\begin{split}
\psi 
& =  \exp( \log 2) - \exp(0) + \sum_{k = 2}^{M} \bigg( \exp(0) - \exp( 0) \bigg)    \\
\psi
& =  2 - 1 + \sum_{k = 2}^{M} \bigg( 1 - 1 \bigg) = 1 \\
\end{split}
\end{equation*}

The constant function $x \mapsto 1$ is in the set of antisymmetric exponentials $\Psi$.
\end{proof}

\begin{lemma}
\label{lemma:antisymmetric_exponentials_separates}
Let $(X,d)$ be a compact metric space. Suppose that $\Psi \subseteq C_{b}(X) $ is the set of antisymmetric exponentials. $\Psi$ separates points in $X$ such that or all $x,y \in X$ with $x \neq y$, there exists a $f \in \Psi$ such that $f(x) \neq f(y)$.
\end{lemma}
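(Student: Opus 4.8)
The plan is to exhibit, for each pair of distinct points, a single explicit member of $\Psi$ that tells them apart, using the fact that the exponential map on $\mathbb{R}$ is injective. First I would fix arbitrary $x,y \in X$ with $x \neq y$. Since $X \subseteq \mathbb{R}^{n}$ and $x = (x_{1},\dots,x_{n}) \neq (y_{1},\dots,y_{n}) = y$, there is at least one coordinate index $m \in \{1,\dots,n\}$ with $x_{m} \neq y_{m}$.

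Next I would construct a candidate $\psi \in \Psi$ directly from Definition~\ref{def:antisymmetric_exponentials} by taking $M = 1$ and choosing the interior single-variable functions to be $g_{1,m}(t) = t$ (the identity on $\mathbb{R}$, which lies in $C(\mathbb{R})$), with all other interior functions set to zero: $g_{1,j} \equiv 0$ for $j \neq m$ and $h_{1,j} \equiv 0$ for every $j$. This is a legitimate element of $\Psi$, and it evaluates to
\begin{equation*}
\psi(z) = \exp\!\bigl( \Sigma_{j=1}^{n} g_{1,j}(z_{j}) \bigr) - \exp\!\bigl( \Sigma_{j=1}^{n} h_{1,j}(z_{j}) \bigr) = \exp(z_{m}) - 1 .
\end{equation*}
Since $X$ is compact and $z \mapsto \exp(z_{m}) - 1$ is continuous, $\psi$ is bounded on $X$, so indeed $\psi \in C_{b}(X) \cap \Psi$.

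Finally I would invoke injectivity of the real exponential function: because $x_{m} \neq y_{m}$, we get $\exp(x_{m}) \neq \exp(y_{m})$, hence $\psi(x) = \exp(x_{m}) - 1 \neq \exp(y_{m}) - 1 = \psi(y)$. As $x,y$ were an arbitrary pair of distinct points, $\Psi$ separates points in $X$, which is what was to be shown.

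I do not anticipate a serious obstacle here; the only point requiring care is checking that the constructed $\psi$ genuinely matches the syntactic form of Definition~\ref{def:antisymmetric_exponentials} (all interior functions continuous, at least one summand, antisymmetric structure preserved) and that it lands in $C_{b}(X)$ — the latter being immediate from compactness of $X$ and continuity of $\psi$. Everything else is a one-line consequence of the injectivity of $\exp$.
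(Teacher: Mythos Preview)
Your proposal is correct and follows essentially the same approach as the paper: pick a coordinate where $x$ and $y$ differ, set $g_{1,m}$ to the identity and all other interior functions to zero so that $\psi(z)=\exp(z_{m})-1$, and conclude by injectivity (the paper says strict monotonicity) of $\exp$. You are slightly more explicit than the paper in fixing $M=1$ and in noting that compactness of $X$ ensures $\psi \in C_{b}(X)$, but the argument is the same.
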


\begin{proof}
$\Psi$ separates points $x,y$ in $X$. Suppose $x \neq y$, without loss of generality, that the $q$th components differ: $x_{q} \neq y_{q}$. Let $\psi \in \Psi$ such that:
$$
\psi (x) = \sum_{k = 1}^{M} 
\Bigg( 
\exp \bigg( \Sigma_{j=1}^{n} g_{k,j}(x_{j})\bigg) 
- 
\exp\bigg( \Sigma_{j=1}^{n} h_{k,j}(x_{j})\bigg) 
\Bigg)       
$$

\noindent
Choose $g_{1,q}(x_{q}) = x_{q}$, and all other single-variable functions $g_{k,j}(x_{j}), \; h_{k,j}(x_{j})=0$. It follows that: $\psi(x)= \exp(x_{q}) - 1 $. Similarly $\psi (y)=\exp(y_{q}) - 1$. The exponential function is strictly monotone, so it follows:  $\psi (x) \neq \psi(y)$.
\end{proof}

\begin{theorem}
Let $(X,d)$ be a compact metric space. Suppose that $\Psi \subseteq C_{b}(X) $ is the set of antisymmetric exponentials, then $\Psi$ is dense in $C_{b}(X)$
\end{theorem}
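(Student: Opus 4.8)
The plan is to conclude by a direct appeal to the Stone–Weierstrass theorem (Theorem~\ref{thm_stone_weierstrass}), since essentially all of the work has already been carried out in the preceding lemmas. First I would note that $(X,d)$ is compact by hypothesis, which is exactly the setting required by Theorem~\ref{thm_stone_weierstrass}. Next I would collect the three structural facts that Stone–Weierstrass demands of a subset $S \subseteq C_b(X)$: that $S$ is an algebra, that the constant function $x \mapsto 1$ belongs to $S$, and that $S$ separates points of $X$. For $S = \Psi$ these are supplied respectively by Lemma~\ref{lemma:antisymmetric_exponentials_are_algebra} (which rests in turn on closure under scalar multiplication, addition, and pointwise multiplication, Lemmas~\ref{lemma:antisymmetric_exponentials_are_closed_under_scalar_mult} through~\ref{lemma:antisymmetric_exponentials_closed_under_pointwise_mult}), by Lemma~\ref{lemma:antisymmetric_exponentials_contains_constant}, and by Lemma~\ref{lemma:antisymmetric_exponentials_separates}.

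Having verified every hypothesis, I would then invoke Theorem~\ref{thm_stone_weierstrass} to conclude that $\Psi$ is dense in $C_b(X)$. Unpacked, this means that for every $f \in C_b(X)$ and every $\varepsilon > 0$ there exists an antisymmetric exponential $\psi \in \Psi$ with $\sup_{x \in X} |f(x) - \psi(x)| < \varepsilon$, which is precisely the claim to be proved.

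The main obstacle does not lie in this final assembly, which is a one-line citation, but in the earlier structural lemmas, and above all in closure under pointwise multiplication. There the product of two differences of exponentials expands into four exponential terms, and one must check that the two ``diagonal'' products can be grouped as the positive part and the two ``cross'' products as the negative part of a single antisymmetric exponential, using that the interior single-variable functions are themselves closed under addition, so the new interior functions can be taken as sums such as $g_{p,j} + g'_{q,j}$. Because that verification is precisely Lemma~\ref{lemma:antisymmetric_exponentials_closed_under_pointwise_mult}, the present theorem inherits it with no extra effort; the only residual care needed is to keep the index bookkeeping consistent when quoting the earlier lemmas, and to recall that the standing convention (the supremum norm on $C_b(X)$) is the one under which density is asserted.
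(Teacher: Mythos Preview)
Your proposal is correct and follows essentially the same approach as the paper: verify the Stone--Weierstrass hypotheses by citing Lemma~\ref{lemma:antisymmetric_exponentials_are_algebra} (algebra), Lemma~\ref{lemma:antisymmetric_exponentials_contains_constant} (contains the constant function), and Lemma~\ref{lemma:antisymmetric_exponentials_separates} (separates points), then invoke Theorem~\ref{thm_stone_weierstrass}. The paper's proof is exactly this one-line assembly, so your added commentary on the pointwise-multiplication lemma and the supremum norm is simply extra exposition around the same argument.
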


\begin{proof}
Let $(X,d) \subset \mathbb{R}^{n}$ be a compact metric space. Suppose that $\Psi \subseteq C_{b}(X) $ is the set of antisymmetric exponentials. By lemma~\ref{lemma:antisymmetric_exponentials_are_algebra}, $\Psi$ is an algebra. By lemma~\ref{lemma:antisymmetric_exponentials_contains_constant}$, \Psi$ contains the constant function $x \mapsto 1$, and by lemma~\ref{lemma:antisymmetric_exponentials_separates}, $\Psi$ separates points in $X$. By the Stone-Weierstrass theorem~\ref{thm_stone_weierstrass}, $\Psi$ is dense in $C_{b}(X)$.
\end{proof}

\begin{remark}
    $\Psi$ is dense in the metric space of continuous bounded functions $(C_{b}(X),d)$. It follows from the definition that for any function $f \in C_{b}(X)$ and every $\varepsilon > 0$, there exists an antisymmetric exponential function $\psi \in \Psi$ such that $d(f,\psi)<\varepsilon$. 
\end{remark}

\section{ABEL-Spline Properties}
\label{sec:properties_of_abel_spline}

From Definition~\ref{def:lambda_abel_spline}:
Let $A(x)$ be an ABEL-Spline function, defined on the domain $x \in \left[ 0,1 \right]^{n}$, with trainable parameters $\theta$ and partition number $z \in \mathbb{N}$. Then, there exists $\mathcal{K} \in \mathbb{N}$, and multi-variable $z$-Spline ANN functions $F(x),G_{k}(x), H_{k}(x)$ such that:
\begin{equation*} 
\begin{split}
A(x) 
:= F(x) + 
\sum_{k = 1}^{\mathcal{K}} \frac{1}{k^{2}} \bigg( \exp(G_{k}(x)) 
-  \exp(H_{k}(x)) \bigg)
\end{split}
\end{equation*}

\begin{proposition}[Sparsity]
Let $A(x )$ be a $z$-ABEL-Spline function, defined on the domain $x \in \left[ 0,1 \right]^{n}$, with trainable parameters $\theta$ and $z \geq 1$. Let $\lVert \nabla_{\theta} A(x) \rVert_{0}$ denote the number of non-zero components of the gradient vector w.r.t. trainable parameters, then for any $x  \in D(f)$, the number of non-zero components is bounded:
\begin{equation*} 
\begin{split}
\lVert \nabla_{\theta} A(x ) \rVert_{0}  
\leq 4 n (2\mathcal{K}+1)
\end{split}
\end{equation*}
\end{proposition}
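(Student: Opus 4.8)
The plan is to split $\nabla_{\theta} A(x)$ according to the disjoint parameter blocks of the $2\mathcal{K}+1$ constituent $z$-Spline ANNs and to invoke Proposition~\ref{prop:sams_sparsity} on each block separately. Write $\theta = (\theta^{F}, \theta^{G_{1}}, \dots, \theta^{G_{\mathcal{K}}}, \theta^{H_{1}}, \dots, \theta^{H_{\mathcal{K}}})$; since the sub-networks $F$, $G_{k}$ and $H_{k}$ share no trainable parameters, the count $\lVert \nabla_{\theta} A(x) \rVert_{0}$ is exactly the sum over these blocks of the number of non-zero components of the gradient restricted to that block.

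First I would treat the residual term. The restriction of $\nabla_{\theta} A(x)$ to $\theta^{F}$ is just $\nabla_{\theta^{F}} F(x)$, and by Proposition~\ref{prop:sams_sparsity} this has at most $4n$ non-zero components.

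Next I would treat the exponential terms via the chain rule. Because $\theta^{G_{k}}$ enters $A$ only through $\exp(G_{k}(x))$, the restriction of $\nabla_{\theta} A(x)$ to $\theta^{G_{k}}$ equals $\frac{1}{k^{2}} \exp(G_{k}(x))\, \nabla_{\theta^{G_{k}}} G_{k}(x)$. The scalar prefactor $\frac{1}{k^{2}} \exp(G_{k}(x))$ is strictly positive, so it never zeros out a component; hence this block contributes at most $\lVert \nabla_{\theta^{G_{k}}} G_{k}(x) \rVert_{0} \leq 4n$ non-zero entries, again by Proposition~\ref{prop:sams_sparsity}. The identical argument applies to each $\theta^{H_{k}}$ with prefactor $-\frac{1}{k^{2}} \exp(H_{k}(x)) \neq 0$. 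Summing the $2\mathcal{K}+1$ bounds yields $\lVert \nabla_{\theta} A(x) \rVert_{0} \leq (2\mathcal{K}+1)\cdot 4n = 4n(2\mathcal{K}+1)$.

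The only subtlety, and it is a mild one, is to record that the chain-rule prefactors $\pm k^{-2}\exp(\cdot)$ are nowhere zero, so that non-zero counts are preserved within each block, together with the disjointness of the parameter blocks, so that the per-block bounds may be added without double counting. I do not anticipate any deeper obstacle.
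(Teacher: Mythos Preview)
Your proposal is correct and follows essentially the same approach as the paper: decompose $\nabla_{\theta}A(x)$ across the disjoint parameter blocks of $F,G_{k},H_{k}$, observe that the chain-rule prefactors $\pm k^{-2}\exp(\cdot)$ are nonzero and hence leave the $\lVert\cdot\rVert_{0}$ count unchanged, and then sum the per-block bound $4n$ from Proposition~\ref{prop:sams_sparsity} over the $2\mathcal{K}+1$ blocks. Your write-up is in fact slightly more explicit than the paper's about the disjointness of the parameter blocks and the non-vanishing of the scalar prefactors.
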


\begin{proof}
    Let $A(x )$ be a $z$-ABEL-Spline from Definition~\ref{def:lambda_abel_spline}. From the triangle inequality and the pseudo-norm property: $\lVert \alpha x \rVert_{0} = \lVert x \rVert_{0}, \; \forall \; x \in \mathbb{R}^{n}, \; \; \forall \; \alpha \neq 0 \in \mathbb{R}$, it follows that:
\begin{equation*} 
\begin{split}
\left\| \nabla_{\theta} A(x ) \right\|_{0}  
\leq &  
\left\| \nabla_{\theta} F(x) \right\|_{0}
+ 
\sum_{k = 1}^{\mathcal{K}} 
\left\| 
\frac{1}{k^{2}} \bigg( 
\exp(G_{k}(x)) \nabla_{\theta} G_{k}(x)
 -  \exp(H_{k}(x)) \nabla_{\theta} H_{k}(x) \bigg) 
\right \|_{0}  \\
\left\| \nabla_{\theta} A(x ) \right\|_{0}  
\leq &  
\left\| \nabla_{\theta} F(x) \right\|_{0}
+ 
\sum_{k = 1}^{\mathcal{K}}  
\bigg( 
\left\| \nabla_{\theta} G_{k}(x) \right \|_{0}
 -  
\left\|  \nabla_{\theta} H_{k}(x) \right \|_{0}  \bigg) 
   \\
\end{split}
\end{equation*}

From Proposition~\ref{prop:sams_sparsity} it follows: $\left\| \nabla_{\theta} F(x) \right\|_{0} = \left\| \nabla_{\theta} G_{k}(x) \right \|_{0} = \left\|  \nabla_{\theta} H_{k}(x) \right \|_{0} = 4n$, so
\begin{equation*} 
\begin{split}
\lVert \nabla_{\theta} A(x ) \rVert_{0} 
\leq &  4n  +\sum_{k = 1}^{\mathcal{K}} \big( 4n + 4n  \big) = 4 n (2\mathcal{K}+1) \\
\end{split}
\end{equation*}
The model has a total of $n(4z+3)(2\mathcal{K}+1)$ trainable parameters. The gradient vector has a maximum of $4 n (2\mathcal{K}+1)$ non-zero entries, independent of $z$. At most, the fraction of active basis functions is $\frac{4}{4z+3}$. 
\end{proof}

\begin{proposition}[Bounded gradient]
Let $A(x )$ be a $z$-ABEL-Spline function, defined on the domain $x \in \left[ 0,1 \right]^{n}$, with \textbf{bounded} trainable parameters $|\theta_{i}| < \mu, \; \forall \theta \in \Theta$. Then the gradient w.r.t. trainable parameters, $\nabla_{\theta} A(x )$, is bounded $ \forall x \in D(f)$:
\begin{equation*} 
\begin{split}
\lVert \nabla_{\theta} A(x ) \rVert_{1} 
< 4 n  + \big(8 n \exp( 4 \mu n ) \big) \frac{\pi^{2}}{6}
\end{split}
\end{equation*}
\end{proposition}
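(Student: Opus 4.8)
The plan is to differentiate $A(x)$ term by term and then bound each piece separately. First I would apply the gradient operator to the definition of $A$, using linearity and the chain rule on the exponential terms, to obtain
\begin{equation*}
\nabla_{\theta} A(x) = \nabla_{\theta} F(x) + \sum_{k=1}^{\mathcal{K}} \frac{1}{k^{2}}\Big( \exp(G_{k}(x))\,\nabla_{\theta} G_{k}(x) - \exp(H_{k}(x))\,\nabla_{\theta} H_{k}(x) \Big).
\end{equation*}
Applying the triangle inequality for $\lVert \cdot \rVert_{1}$, together with $\lVert \alpha v \rVert_{1} = |\alpha|\,\lVert v \rVert_{1}$ and the positivity of $\exp$, gives
\begin{equation*}
\lVert \nabla_{\theta} A(x) \rVert_{1} \leq \lVert \nabla_{\theta} F(x) \rVert_{1} + \sum_{k=1}^{\mathcal{K}} \frac{1}{k^{2}}\Big( \exp(G_{k}(x))\,\lVert \nabla_{\theta} G_{k}(x) \rVert_{1} + \exp(H_{k}(x))\,\lVert \nabla_{\theta} H_{k}(x) \rVert_{1} \Big).
\end{equation*}

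Second, I would invoke Proposition~\ref{prop:sams_bounded_gradient} to bound each of $\lVert \nabla_{\theta} F(x) \rVert_{1}$, $\lVert \nabla_{\theta} G_{k}(x) \rVert_{1}$, and $\lVert \nabla_{\theta} H_{k}(x) \rVert_{1}$ by $4n$, since $F, G_{k}, H_{k}$ are all $z$-Spline ANNs. The remaining work is to control the scalar factors $\exp(G_{k}(x))$ and $\exp(H_{k}(x))$, and this is where the new hypothesis $|\theta_{i}| < \mu$ becomes essential (without it these factors are unbounded and no such estimate can hold). Since each $G_{k}$ is a sum over the $n$ coordinates of a $z$-density B-spline, and at any $x_{j}$ at most four basis functions $S_{i,j}$ are non-zero with each bounded above by $1$ (the local-support and sub-unit-height facts used in the proofs of Proposition~\ref{prop:z_spline_sparsity} and Proposition~\ref{prop:z_spline_bounded_gradient}), I would estimate $|G_{k}(x)| \leq \sum_{j=1}^{n} \sum_{i=1}^{4z+3} |\theta_{i,j}|\,|S_{i,j}(x_{j})| < \mu \cdot 4n$, hence $\exp(G_{k}(x)) \leq \exp(|G_{k}(x)|) < \exp(4\mu n)$, and likewise for $H_{k}$.

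Finally, substituting these bounds yields
\begin{equation*}
\lVert \nabla_{\theta} A(x) \rVert_{1} < 4n + \sum_{k=1}^{\mathcal{K}} \frac{1}{k^{2}}\big(4n\exp(4\mu n) + 4n\exp(4\mu n)\big) = 4n + 8n\exp(4\mu n)\sum_{k=1}^{\mathcal{K}} \frac{1}{k^{2}},
\end{equation*}
and bounding the partial sum by the Basel series $\sum_{k=1}^{\infty} k^{-2} = \pi^{2}/6$ gives the claim. The only genuine obstacle is the middle step: one must take care that the factor produced by the chain rule is exactly $\exp(G_{k}(x))$ (not its absolute value) and derive the uniform bound $|G_{k}(x)| < 4\mu n$ cleanly from the cardinal cubic B-spline basis properties; everything else is a routine combination of the triangle inequality and the previously established spline gradient bounds.
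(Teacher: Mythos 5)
Your proposal is correct and follows essentially the same route as the paper's proof: chain rule plus triangle inequality, the $4n$ bound from Proposition~\ref{prop:sams_bounded_gradient}, the estimate $|G_{k}(x)|, |H_{k}(x)| < 4\mu n$ from the bounded parameters, and the Basel series. In fact you spell out the derivation of $|G_{k}(x)| < 4\mu n$ from the B-spline basis bounds, a step the paper merely asserts.
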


\begin{proof}
    Let $A(x )$ be a $z$-ABEL-Spline. From Definition~\ref{def:lambda_abel_spline} and the triangle inequality:
\begin{equation*} 
\begin{split}
\left\| \nabla_{\theta} A(x ) \right\|_{1}  
\leq &  
\lVert \nabla_{\theta}F(x) \rVert_{1}  +\sum_{k = 1}^{\mathcal{K}}  \frac{1}{k^{2}} \bigg( 
\exp(G_{k}(x)) \lVert  \nabla_{\theta} G_{k}(x)\rVert_{1} 
+
\exp(H_{k}(x)) \lVert \nabla_{\theta} H_{k}(x) \rVert_{1} 
\bigg) \\
\end{split}
\end{equation*}

From Proposition~\ref{prop:sams_bounded_gradient} it follows: $\left\| \nabla_{\theta} F(x) \right\|_{1}, \left\| \nabla_{\theta} G_{k}(x) \right \|_{1}, \left\|  \nabla_{\theta} H_{k}(x) \right \|_{1} < 4n$, so
\begin{equation*} 
\begin{split}
 \lVert \nabla_{\theta} A(x ) \rVert_{1}  
< &  4 n  +\sum_{k = 1}^{\mathcal{K}}  \frac{1}{k^{2}} \bigg( 
\exp(G_{k}(x)) 4 n 
+
\exp(H_{k}(x)) 4 n 
\bigg) \\
\end{split}
\end{equation*}

Since $|\theta_{i}| < \mu, \; \forall \theta \in \Theta$, it follows that $|G_{k}|, |H_{k}| < 4 \mu n $. Consequently:
\begin{equation*} 
\begin{split}
 \lVert \nabla_{\theta} A(x ) \rVert_{1}  
< &   
4 n  +\sum_{k = 1}^{\mathcal{K}}  \frac{1}{k^{2}} 4 n \bigg( 
\exp( 4 \mu n ) 
+
\exp( 4 \mu n ) 
\bigg) = 4 n  + \big(8 n \exp( 4 \mu n ) \big) \sum_{k = 1}^{\mathcal{K}}  \frac{1}{k^{2}} \\
\end{split}
\end{equation*}

By substitution, the absolutely convergent series $\sum_{k} k^{-2}$ gives:
\begin{equation*} 
\begin{split}
\lVert \nabla_{\theta} A(x ) \rVert_{1}  
< &  4 n  + \big(8 n \exp( 4 \mu n ) \big) \sum_{k = 1}^{\infty}  \frac{1}{k^{2}} = 4 n  + \big(8 n \exp( 4 \mu n ) \big) \frac{\pi^{2}}{6}\\
\end{split}
\end{equation*}
Thus,$\lVert \nabla_{\theta} A(x ) \rVert_{1}$ is bounded $\forall x \in [0,1]^{n}$ and bounded parameters $|\theta_{i}| < \mu, \; \forall \theta \in \Theta$
\end{proof}

\begin{remark}
    The factor of $k^{-2}$ inside the expression for ABEL-Spline is necessary to ensure the sum converges in the limit of infinitely many exponential terms $\mathcal{K} \to \infty$. This technique could be used on other ANN models to stabilise training.
\end{remark}

\begin{proposition}[Trainability of ABEL-Spline]
Let $A(x )$ be a $z$-ABEL-Spline function, defined on the domain $x \in \left[ 0,1 \right]^{n}$, with trainable parameters $\theta \in \Theta$. For any $x  \in D(A)$, the gradient w.r.t. trainable parameters, $\nabla_{\theta} A(x )$, is non-zero:
\begin{equation*}
    \nabla_{\theta} A(\theta, x) \neq \Vec{0}
\end{equation*}
\end{proposition}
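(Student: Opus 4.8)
The plan is to exploit the ``residual'' term $F(x)$ in Definition~\ref{def:lambda_abel_spline}, whose trainable parameters form a distinguished block of the full parameter vector $\theta$ that does not occur in any of the exponential terms. First I would partition $\theta$ into the disjoint groups of coefficients belonging to $F$, to each $G_{k}$, and to each $H_{k}$ (each constituent is an independent $z$-Spline ANN with its own parameters). Then, for any parameter $\theta_{i}$ belonging to $F$, the chain rule gives $\partial_{\theta_{i}} A(x) = \partial_{\theta_{i}} F(x)$, since $\exp(G_{k}(x))$ and $\exp(H_{k}(x))$ are constant in $\theta_{i}$.

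Next I would invoke Proposition~\ref{prop:sams_trainability} (trainability of $z$-Spline ANNs): $\nabla_{\theta} F(x) \neq \vec{0}$ for every $x \in [0,1]^{n}$ and every parameter setting. Hence at least one component of $\nabla_{\theta} F(x)$ is non-zero, and by the identity above this same component appears, unchanged, among the entries of $\nabla_{\theta} A(x)$. A vector with a non-zero component is non-zero, so $\nabla_{\theta} A(x) \neq \vec{0}$, which is the claim.

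There is essentially no analytic obstacle here; the only point requiring care is the bookkeeping that $F$'s parameters are genuinely disjoint from those of the $G_{k}$ and $H_{k}$, so that differentiating $A$ with respect to an $F$-parameter leaves the exponential layers untouched. In particular the argument deliberately sidesteps any reasoning about possible cancellation between $\exp(G_{k})$ and $\exp(H_{k})$ (or about the positivity of the exponential prefactors): the residual block alone already certifies non-vanishing, so uniform trainability of ABEL-Splines reduces immediately to that of $z$-Spline ANNs established earlier.
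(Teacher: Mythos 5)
Your proposal is correct and follows essentially the same route as the paper: both arguments rest on the disjoint parameterisation of $F$, $G_{k}$, $H_{k}$ and on Proposition~\ref{prop:sams_trainability} applied to the constituent $z$-Spline ANNs. Your version is in fact a slight streamlining, since by projecting onto the $F$-block you avoid the paper's appeal to linear independence and to the non-vanishing of the exponential prefactors altogether.
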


\begin{proof}
    Let $A(x )$ be a $z$-ABEL-Spline function. From the Definition~\ref{def:lambda_abel_spline}, it follows that:
\begin{equation*} 
\begin{split}
\nabla_{\theta} A(x) 
&= \nabla_{\theta} 
\bigg( F(x) + 
\sum_{k = 1}^{\mathcal{K}} 
\frac{1}{k^{2}} \bigg( 
\exp(G_{k}(x)) 
 -  \exp(H_{k}(x)) \bigg) 
 \bigg) \\
\nabla_{\theta} A(x) 
&= \nabla_{\theta} 
 F(x) + 
\sum_{k = 1}^{\mathcal{K}} 
\frac{1}{k^{2}} \bigg( 
\exp(G_{k}(x)) \nabla_{\theta} G_{k}(x)
 -  \exp(H_{k}(x)) \nabla_{\theta} H_{k}(x) \bigg)  \\
\end{split}
\end{equation*}

From Proposition~\ref{prop:sams_trainability} it follows that:
\begin{equation*} 
\begin{split}
\nabla_{\theta} 
 F(x) \neq \Vec{0}\; , \;
 \nabla_{\theta} 
 G_{k}(x) \neq \Vec{0} \; , \;
 \nabla_{\theta} 
 H_{k}(x) \neq \Vec{0}
\end{split}
\end{equation*}

The functions $F, G_{k}, H_{k}$ are independently parameterised, so $\nabla_{\theta} F, \nabla_{\theta}G_{k}, \nabla_{\theta} H_{k}$ must be linearly independent. Any linear combination of linearly independent vectors with non-zero coefficients ($\exp(z)\neq 0, \; \forall z \in \mathbb{R}$) must be non-zero.
\end{proof}


\begin{proposition}[min-distal orthogonal ABEL-Spline]
Let $A(x)$ be an ABEL-Spline function $A(x)$, defined on the domain $x \in \left[ 0,1 \right]^{n}$, with partition number $z$ and trainable parameters $\theta \in \Theta$, then for any $x,y \in D(f)$:
\begin{equation*}
    \min_{i} (|x_{i} - y_{i}|) > z^{-1} \implies  \nabla_{\theta} A(x) \cdot \nabla_{\theta} A(y) = 0 
\end{equation*}
\end{proposition}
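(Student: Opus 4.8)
The plan is to reduce the claim to the already-established min-distal orthogonality of $z$-Spline ANNs (Proposition~\ref{prop:sams_lesser_distal_orthogonality}) by exploiting the fact that the constituent functions $F$, $G_{k}$, $H_{k}$ of an ABEL-Spline are \emph{independently parameterised} $z$-Spline ANNs. First I would fix $x,y \in [0,1]^{n}$ with $\min_{i}(|x_{i}-y_{i}|) > z^{-1}$ and write the parameter vector $\theta$ as the concatenation of pairwise disjoint blocks $\theta_{F}, \theta_{G_{1}}, \dots, \theta_{G_{\mathcal{K}}}, \theta_{H_{1}}, \dots, \theta_{H_{\mathcal{K}}}$. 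Then $\nabla_{\theta} A(x)$ is block-structured: the $\theta_{F}$-block is $\nabla_{\theta} F(x)$, and by the chain rule ($\nabla_{\theta}\exp(G_{k}(x)) = \exp(G_{k}(x))\nabla_{\theta}G_{k}(x)$) the $\theta_{G_{k}}$-block is $\tfrac{1}{k^{2}}\exp(G_{k}(x))\nabla_{\theta}G_{k}(x)$ and the $\theta_{H_{k}}$-block is $-\tfrac{1}{k^{2}}\exp(H_{k}(x))\nabla_{\theta}H_{k}(x)$; likewise for $y$.

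Next I would form the dot product $\nabla_{\theta}A(x)\cdot\nabla_{\theta}A(y)$. Because the blocks are supported on disjoint coordinates, every cross-block contribution vanishes identically (for all $x,y$): the $F$-with-$G_{k}$, $F$-with-$H_{k}$, $G_{k}$-with-$H_{\ell}$, $G_{k}$-with-$G_{\ell}$ ($k\neq\ell$), and $H_{k}$-with-$H_{\ell}$ ($k\neq\ell$) terms are all zero simply by disjointness of parameters. What survives is the block-diagonal sum
\begin{equation*}
\begin{split}
\nabla_{\theta}A(x)\cdot\nabla_{\theta}A(y) = {}& \nabla_{\theta}F(x)\cdot\nabla_{\theta}F(y) \\
& + \sum_{k=1}^{\mathcal{K}} \frac{1}{k^{4}}\Big( \exp(G_{k}(x))\exp(G_{k}(y))\, \nabla_{\theta}G_{k}(x)\cdot\nabla_{\theta}G_{k}(y) \\
& \qquad + \exp(H_{k}(x))\exp(H_{k}(y))\, \nabla_{\theta}H_{k}(x)\cdot\nabla_{\theta}H_{k}(y) \Big).
\end{split}
\end{equation*}

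Finally, since $F$, $G_{k}$, and $H_{k}$ are each $z$-Spline ANNs on $[0,1]^{n}$ with partition number $z$, Proposition~\ref{prop:sams_lesser_distal_orthogonality} applies to each of them separately: the hypothesis $\min_{i}(|x_{i}-y_{i}|) > z^{-1}$ yields $\nabla_{\theta}F(x)\cdot\nabla_{\theta}F(y) = 0$ and $\nabla_{\theta}G_{k}(x)\cdot\nabla_{\theta}G_{k}(y) = \nabla_{\theta}H_{k}(x)\cdot\nabla_{\theta}H_{k}(y) = 0$ for every $k$. Each exponential prefactor $\exp(G_{k}(\cdot)), \exp(H_{k}(\cdot))$ is a finite real scalar, so every summand above is zero, giving $\nabla_{\theta}A(x)\cdot\nabla_{\theta}A(y) = 0$ as required.

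The only step requiring genuine care — and the place most likely to hide a subtlety — is the bookkeeping of the gradient's block structure and the verification that the cross terms vanish; this rests entirely on the ABEL-Spline definition assigning $F$, $G_{k}$, $H_{k}$ pairwise disjoint parameter sets, together with the chain-rule factorisation of the exponential layers. Once that is in place, the conclusion is an immediate consequence of the $z$-Spline ANN result applied termwise.
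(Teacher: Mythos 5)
Your proof is correct and follows essentially the same route as the paper: both arguments use the chain-rule factorisation of the exponential layers, kill the cross terms by the disjoint parameterisation of $F$, $G_{k}$, $H_{k}$, and then apply the $z$-Spline ANN min-distal orthogonality result (Proposition~\ref{prop:sams_lesser_distal_orthogonality}) to each surviving block-diagonal term. Your explicit block-structure bookkeeping is a slightly cleaner presentation of the same argument, but there is no substantive difference.
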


\begin{proof}
    Let $A(x )$ be a $z$-ABEL-Spline function. From the Definition~\ref{def:lambda_abel_spline}, it follows that:
\begin{equation*} 
\begin{split}
\nabla_{\theta} A(x) 
&= \nabla_{\theta} 
 F(x) + 
\sum_{k = 1}^{\mathcal{K}} 
\frac{1}{k^{2}} \bigg( 
\exp(G_{k}(x)) \nabla_{\theta} G_{k}(x)
 -  \exp(H_{k}(x)) \nabla_{\theta} H_{k}(x) \bigg)  \\
\end{split}
\end{equation*}

Assume that $ \min_{i} (|x_{i} - y_{i}|) > z^{-1}$. It follows from proposition~\ref{prop:sams_lesser_distal_orthogonality}:
\begin{equation*}
\nabla_{\theta} \Phi(x) 
\cdot
 \nabla_{\theta} \Phi(y) = 0,  
 \: \forall \; \Phi \in  \{ F,G_{1},.., G_{k}, H_{1},.., H_{k}  \} \\
\end{equation*}

The functions are independently parameterised, so $ \forall x, y \in [0,1]^{n}$ the cross terms are:

\begin{equation*}
\nabla_{\theta} \Psi(x) 
\cdot
 \nabla_{\theta} \Phi(y) = 0,  
 \: \forall \; \Psi \neq \Phi \in  \{ F,G_{1},.., G_{k}, H_{1},.., H_{k}  \} \\
\end{equation*}

Substituting $\nabla_{\theta} \Phi(x) \cdot \nabla_{\theta} \Phi(y) = 0 $, and not counting zeroed cross-terms it follows that:  
\begin{equation*} 
\begin{split}
\nabla_{\theta} A(x) 
        \cdot 
        \nabla_{\theta} A(y) 
&= 0 + 
\sum_{k = 1}^{\mathcal{K}} 
\frac{1}{k^{4}} \bigg( 
\exp(G_{k}(x)+G_{k}(y)) \, 0
 -  \exp(H_{k}(x)+H_{k}(y)) \, 0 \bigg)  \\
\end{split}
\end{equation*}

Finally, $ \nabla_{\theta} A(x) \cdot \nabla_{\theta} A(y) = 0$, so ABEL-Splines are min-distal orthogonal models.
\end{proof}

\vskip 0.2in
\bibliography{main}

\begin{thebibliography}{39}
\providecommand{\natexlab}[1]{#1}
\providecommand{\url}[1]{\texttt{#1}}
\expandafter\ifx\csname urlstyle\endcsname\relax
  \providecommand{\doi}[1]{doi: #1}\else
  \providecommand{\doi}{doi: \begingroup \urlstyle{rm}\Url}\fi

\bibitem[Bliss and Collingridge(1993)]{Bliss1993}
T.~V.~P. Bliss and G.~L. Collingridge.
\newblock A synaptic model of memory: long-term potentiation in the hippocampus.
\newblock \emph{Nature}, 361\penalty0 (6407):\penalty0 31--39, Jan 1993.
\newblock ISSN 1476-4687.
\newblock \doi{10.1038/361031a0}.
\newblock URL \url{https://doi.org/10.1038/361031a0}.

\bibitem[Chen et~al.(2018)Chen, Liu, Brachman, Stone, and Rossi]{chen2018lifelong}
Z.~Chen, B.~Liu, R.~Brachman, P.~Stone, and F.~Rossi.
\newblock \emph{Lifelong Machine Learning: Second Edition}.
\newblock Synthesis Lectures on Artificial Intelligence and Machine Learning. Morgan \& Claypool Publishers, 2018.
\newblock ISBN 9781681733036.
\newblock URL \url{https://books.google.ca/books?id=JQ5pDwAAQBAJ}.

\bibitem[Cui et~al.(2016)Cui, Ahmad, and Hawkins]{Cui_2016}
Y.~Cui, S.~Ahmad, and J.~Hawkins.
\newblock Continuous online sequence learning with an unsupervised neural network model.
\newblock \emph{Neural Computation}, 28\penalty0 (11):\penalty0 2474--2504, nov 2016.
\newblock \doi{10.1162/neco_a_00893}.
\newblock URL \url{https://doi.org/10.1162%2Fneco_a_00893}.

\bibitem[De~Boor(1978)]{de1978practical}
C.~De~Boor.
\newblock \emph{A Practical Guide to Splines}.
\newblock Applied Mathematical Sciences. Springer New York, 1978.
\newblock ISBN 9783540903567.
\newblock URL \url{https://books.google.co.za/books?id=mZMQAQAAIAAJ}.

\bibitem[Delange et~al.(2021)Delange, Aljundi, Masana, Parisot, Jia, Leonardis, Slabaugh, and Tuytelaars]{Delange_2021}
M.~Delange, R.~Aljundi, M.~Masana, S.~Parisot, X.~Jia, A.~Leonardis, G.~Slabaugh, and T.~Tuytelaars.
\newblock A continual learning survey: Defying forgetting in classification tasks.
\newblock \emph{{IEEE} Transactions on Pattern Analysis and Machine Intelligence}, pages 1--1, 2021.
\newblock \doi{10.1109/tpami.2021.3057446}.

\bibitem[Douzette(2017)]{douzette2017b}
A.~S. Douzette.
\newblock B-splines in machine learning.
\newblock Master's thesis, Department of Mathematics, University of Oslo, 2017.

\bibitem[Farajtabar et~al.(2020)Farajtabar, Azizan, Mott, and Li]{ogd_paper}
M.~Farajtabar, N.~Azizan, A.~Mott, and A.~Li.
\newblock Orthogonal gradient descent for continual learning.
\newblock In S.~Chiappa and R.~Calandra, editors, \emph{Proceedings of the Twenty Third International Conference on Artificial Intelligence and Statistics}, volume 108 of \emph{Proceedings of Machine Learning Research}, pages 3762--3773. PMLR, 26--28 Aug 2020.
\newblock URL \url{https://proceedings.mlr.press/v108/farajtabar20a.html}.

\bibitem[French(1997)]{look_up_tables_are_robust}
R.~M. French.
\newblock Pseudo-recurrent connectionist networks: An approach to the `sensitivity-stability' dilemma.
\newblock \emph{Connection Science}, 9\penalty0 (4):\penalty0 353--379, 12 1997.
\newblock URL \url{https://www-proquest-com.uplib.idm.oclc.org/scholarly-journals/pseudo-recurrent-connectionist-networks-approach/docview/206816072/se-2?accountid=14717}.

\bibitem[French(1999)]{french1999catastrophic}
R.~M. French.
\newblock Catastrophic forgetting in connectionist networks.
\newblock \emph{Trends in cognitive sciences}, 3\penalty0 (4):\penalty0 128--135, 1999.

\bibitem[Gao et~al.(2020)Gao, Cai, and Ji]{Gao_Cai_Ji_2020}
H.~Gao, L.~Cai, and S.~Ji.
\newblock Adaptive convolutional relus.
\newblock \emph{Proceedings of the AAAI Conference on Artificial Intelligence}, 34\penalty0 (04):\penalty0 3914--3921, Apr. 2020.
\newblock \doi{10.1609/aaai.v34i04.5805}.
\newblock URL \url{https://ojs.aaai.org/index.php/AAAI/article/view/5805}.

\bibitem[George and Hawkins(2009)]{George_2009}
D.~George and J.~Hawkins.
\newblock Towards a mathematical theory of cortical micro-circuits.
\newblock \emph{{PLoS} Computational Biology}, 5\penalty0 (10):\penalty0 e1000532, oct 2009.
\newblock \doi{10.1371/journal.pcbi.1000532}.
\newblock URL \url{https://doi.org/10.1371%2Fjournal.pcbi.1000532}.

\bibitem[Goldman and Lyche(1993)]{goldman1993knot}
R.~Goldman and T.~Lyche.
\newblock \emph{Knot Insertion and Deletion Algorithms for B-Spline Curves and Surfaces}.
\newblock Other Titles in Applied Mathematics. Society for Industrial and Applied Mathematics, 1993.
\newblock ISBN 9780898713060.
\newblock URL \url{https://books.google.co.za/books?id=eiLmAgAAQBAJ}.

\bibitem[Hadsell et~al.(2020)Hadsell, Rao, Rusu, and Pascanu]{hadsell2020embracing}
R.~Hadsell, D.~Rao, A.~A. Rusu, and R.~Pascanu.
\newblock Embracing change: Continual learning in deep neural networks.
\newblock \emph{Trends in cognitive sciences}, 24\penalty0 (12):\penalty0 1028--1040, 2020.

\bibitem[Hanin(2019)]{hanin2019universal}
B.~Hanin.
\newblock Universal function approximation by deep neural nets with bounded width and relu activations.
\newblock \emph{Mathematics}, 7\penalty0 (10):\penalty0 992, 2019.

\bibitem[Hastie(2017)]{hastie2017generalized}
T.~Hastie.
\newblock \emph{Generalized Additive Models}.
\newblock CRC Press, 2017.
\newblock ISBN 9781351445962.
\newblock URL \url{https://books.google.co.za/books?id=ZLs6DwAAQBAJ}.

\bibitem[Hawkins and Dawkins(2021)]{hawkins2021thousand}
J.~Hawkins and R.~Dawkins.
\newblock \emph{A Thousand Brains: A New Theory of Intelligence}.
\newblock Basic Books, 2021.
\newblock ISBN 9781541675803.
\newblock URL \url{https://books.google.co.za/books?id=hYrvDwAAQBAJ}.

\bibitem[Hinton et~al.(1986)Hinton, McClelland, and Rumelhart]{HintonDistributedRepresentations}
G.~E. Hinton, J.~L. McClelland, and D.~E. Rumelhart.
\newblock \emph{Distributed Representations}, page 77–109.
\newblock MIT Press, Cambridge, MA, USA, 1986.
\newblock ISBN 026268053X.

\bibitem[Hollig(2012)]{hollig2012finite}
K.~Hollig.
\newblock \emph{Finite Element Methods with B-Splines}.
\newblock Frontiers in Applied Mathematics. Society for Industrial and Applied Mathematics, 2012.
\newblock ISBN 9780898716993.
\newblock URL \url{https://books.google.co.za/books?id=V9foSFYQnNgC}.

\bibitem[Hollig and Horner(2015)]{hollig2015approximation}
K.~Hollig and J.~Horner.
\newblock \emph{Approximation and Modeling with B-Splines}.
\newblock Other Titles in Applied Mathematics. Society for Industrial and Applied Mathematics, 2015.
\newblock ISBN 9781611972948.
\newblock URL \url{https://books.google.co.za/books?id=ynYXAgAAQBAJ}.

\bibitem[Jacot et~al.(2018)Jacot, Gabriel, and Hongler]{NEURIPS2018_tangent_kernel}
A.~Jacot, F.~Gabriel, and C.~Hongler.
\newblock Neural tangent kernel: Convergence and generalization in neural networks.
\newblock In S.~Bengio, H.~Wallach, H.~Larochelle, K.~Grauman, N.~Cesa-Bianchi, and R.~Garnett, editors, \emph{Advances in Neural Information Processing Systems}, volume~31. Curran Associates, Inc., 2018.
\newblock URL \url{https://proceedings.neurips.cc/paper_files/paper/2018/file/5a4be1fa34e62bb8a6ec6b91d2462f5a-Paper.pdf}.

\bibitem[Ji and Wilson(2007)]{Ji2007}
D.~Ji and M.~A. Wilson.
\newblock Coordinated memory replay in the visual cortex and hippocampus during sleep.
\newblock \emph{Nature Neuroscience}, 10\penalty0 (1):\penalty0 100--107, Jan 2007.
\newblock ISSN 1546-1726.
\newblock \doi{10.1038/nn1825}.
\newblock URL \url{https://doi.org/10.1038/nn1825}.

\bibitem[Kaushik et~al.(2021)Kaushik, Kortylewski, Gain, and Yuille]{kaushik2021understanding}
P.~Kaushik, A.~Kortylewski, A.~Gain, and A.~Yuille.
\newblock Understanding catastrophic forgetting and remembering in continual learning with optimal relevance mapping.
\newblock In \emph{Fifth Workshop on Meta-Learning at the Conference on Neural Information Processing Systems}, 2021.
\newblock URL \url{https://openreview.net/forum?id=Pvqe_hQEXTJ}.

\bibitem[Kemker et~al.(2018)Kemker, McClure, Abitino, Hayes, and Kanan]{kemker2018measuring}
R.~Kemker, M.~McClure, A.~Abitino, T.~Hayes, and C.~Kanan.
\newblock Measuring catastrophic forgetting in neural networks.
\newblock In \emph{Proceedings of the AAAI Conference on Artificial Intelligence}, volume~32, 2018.

\bibitem[Lane et~al.(1990)Lane, Flax, Handelman, and Gelfand]{lane1991multi}
S.~Lane, M.~Flax, D.~Handelman, and J.~Gelfand.
\newblock Multi-layer perceptrons with b-spline receptive field functions.
\newblock In R.~Lippmann, J.~Moody, and D.~Touretzky, editors, \emph{Advances in Neural Information Processing Systems}, volume~3. Morgan-Kaufmann, 1990.
\newblock URL \url{https://proceedings.neurips.cc/paper_files/paper/1990/file/94f6d7e04a4d452035300f18b984988c-Paper.pdf}.

\bibitem[McCloskey and Cohen(1989)]{McCloskey1989}
M.~McCloskey and N.~J. Cohen.
\newblock \emph{Catastrophic Interference in Connectionist Networks: The Sequential Learning Problem}, pages 109--165.
\newblock Psychology of Learning and Motivation. Elsevier Science {\&} Technology, 1989.
\newblock ISBN 978-0-12-543324-2.
\newblock \doi{10.1016/S0079-7421(08)60536-8}.
\newblock URL \url{https://doi.org/10.1016/S0079-7421(08)60536-8}.

\bibitem[Molnar(2020)]{molnar2020interpretable}
C.~Molnar.
\newblock \emph{Interpretable Machine Learning}.
\newblock Leanpub, 2020.
\newblock ISBN 9780244768522.
\newblock URL \url{https://books.google.co.za/books?id=jBm3DwAAQBAJ}.

\bibitem[Nair and Hinton(2010)]{HintonRelu2010}
V.~Nair and G.~E. Hinton.
\newblock Rectified linear units improve restricted boltzmann machines.
\newblock In \emph{Proceedings of the 27th International Conference on International Conference on Machine Learning}, ICML'10, page 807–814, Madison, WI, USA, 2010. Omnipress.
\newblock ISBN 9781605589077.

\bibitem[Parisi et~al.(2019)Parisi, Kemker, Part, Kanan, and Wermter]{PARISI201954}
G.~I. Parisi, R.~Kemker, J.~L. Part, C.~Kanan, and S.~Wermter.
\newblock Continual lifelong learning with neural networks: A review.
\newblock \emph{Neural Networks}, 113:\penalty0 54--71, 2019.
\newblock ISSN 0893-6080.
\newblock \doi{https://doi.org/10.1016/j.neunet.2019.01.012}.
\newblock URL \url{https://www.sciencedirect.com/science/article/pii/S0893608019300231}.

\bibitem[Prautzsch et~al.(2013)Prautzsch, Boehm, and Paluszny]{prautzsch2013bezier}
H.~Prautzsch, W.~Boehm, and M.~Paluszny.
\newblock \emph{B{\'e}zier and B-Spline Techniques}.
\newblock Mathematics and Visualization. Springer Berlin Heidelberg, 2013.
\newblock ISBN 9783662049198.
\newblock URL \url{https://books.google.co.za/books?id=fEiqCAAAQBAJ}.

\bibitem[Robins(1995)]{robins1995catastrophic}
A.~Robins.
\newblock Catastrophic forgetting, rehearsal and pseudorehearsal.
\newblock \emph{Connection Science}, 7\penalty0 (2):\penalty0 123--146, 1995.

\bibitem[Rudin(1976)]{rudin1976principles}
W.~Rudin.
\newblock \emph{Principles of Mathematical Analysis}.
\newblock International series in pure and applied mathematics. McGraw-Hill, 1976.
\newblock ISBN 9780070856134.
\newblock URL \url{https://books.google.co.za/books?id=kwqzPAAACAAJ}.

\bibitem[Salomon(2007)]{salomon2007curves}
D.~Salomon.
\newblock \emph{Curves and Surfaces for Computer Graphics}.
\newblock Springer New York, 2007.
\newblock ISBN 9780387284521.
\newblock URL \url{https://books.google.co.za/books?id=r5o5biZPDKEC}.

\bibitem[Scardapane et~al.(2017)Scardapane, Scarpiniti, Comminiello, and Uncini]{scardapane2017learning_workshop}
S.~Scardapane, M.~Scarpiniti, D.~Comminiello, and A.~Uncini.
\newblock Learning activation functions from data using cubic spline interpolation.
\newblock In \emph{Italian Workshop on Neural Nets}, pages 73--83. Springer, 2017.

\bibitem[Scardapane et~al.(2018)Scardapane, Scarpiniti, Comminiello, and Uncini]{Scardapane_2018}
S.~Scardapane, M.~Scarpiniti, D.~Comminiello, and A.~Uncini.
\newblock Learning activation functions from data using cubic spline interpolation.
\newblock In \emph{Neural Advances in Processing Nonlinear Dynamic Signals}, pages 73--83. Springer International Publishing, jul 2018.
\newblock \doi{10.1007/978-3-319-95098-3_7}.
\newblock URL \url{https://doi.org/10.1007%2F978-3-319-95098-3_7}.

\bibitem[Schoenberg(1973)]{schoenberg1973cardinal}
I.~Schoenberg.
\newblock \emph{Cardinal Spline Interpolation}.
\newblock CBMS-NSF Regional Conference Series in Applied Mathematics. Society for Industrial and Applied Mathematics, 1973.
\newblock ISBN 9780898710090.
\newblock URL \url{https://books.google.co.za/books?id=O3ipetjS_64C}.

\bibitem[Squire(1991)]{Squire1991}
L.~R. Squire.
\newblock Memory and the hippocampus: A synthesis from findings with rats, monkeys, and humans.
\newblock \emph{Psychological Review}, 99\penalty0 (2):\penalty0 195--231, 1991.
\newblock ISSN 0033-295X.
\newblock 195.

\bibitem[Strichartz(2000)]{strichartz2000way}
R.~S. Strichartz.
\newblock \emph{The way of analysis}.
\newblock Jones \& Bartlett Learning, 2000.

\bibitem[van~de Ven et~al.(2022)van~de Ven, Tuytelaars, and Tolias]{vandeVen2022}
G.~M. van~de Ven, T.~Tuytelaars, and A.~S. Tolias.
\newblock Three types of incremental learning.
\newblock \emph{Nature Machine Intelligence}, 4\penalty0 (12):\penalty0 1185--1197, Dec 2022.
\newblock ISSN 2522-5839.
\newblock \doi{10.1038/s42256-022-00568-3}.
\newblock URL \url{https://doi.org/10.1038/s42256-022-00568-3}.

\bibitem[Wood(2004)]{Wood_2004}
S.~N. Wood.
\newblock Stable and efficient multiple smoothing parameter estimation for generalized additive models.
\newblock \emph{Journal of the American Statistical Association}, 99\penalty0 (467):\penalty0 673--686, sep 2004.
\newblock \doi{10.1198/016214504000000980}.
\newblock URL \url{https://doi.org/10.1198%2F016214504000000980}.

\end{thebibliography}

\end{document}